\theoremstyle{definition}
\newtheorem{theorem}{Theorem}
\newtheorem{definition}{Definition}
\newtheorem{lemma}{lemma}
\begin{document}

\begin{frontmatter}

\title{$L_{2,1}$-Norm Regularized Quaternion Matrix Completion Using Sparse Representation and Quaternion QR Decomposition}

\author[a]{Juan Han}
\author[a]{Kit Ian Kou\corref{Cor1}}
\ead{kikou@umac.mo}
\author[b]{Jifei Miao}
\author[c]{Lizhi Liu}
\author[c]{Haojiang Li}
\cortext[Cor1]{Corresponding author.}

\address[a]{{Department of Mathematics, Faculty of
		Science and Technology, University of Macau, Macau, 999078, China}}
\address[b]{School of Mathematics and Statistics, Yunnan University, Kunming, Yunnan, 650091, China}	
\address[c]{State Key Laboratory of Oncology in South China, Sun Yat-sen University Cancer Center, Guangzhou, Guangdong, 510060, China.}

\begin{abstract}
	Color image completion is a challenging problem in computer vision, but recent research has shown that quaternion representations of color images perform well in many areas. These representations consider the entire color image and effectively utilize coupling information between the three color channels. Consequently, low-rank quaternion matrix completion (LRQMC) algorithms have gained significant attention.  We propose a method based on quaternion Qatar Riyal decomposition (QQR) and quaternion $L_{2,1}$-norm called QLNM-QQR. This new approach reduces computational complexity by avoiding the need to calculate the QSVD of large quaternion matrices. We also present two improvements to the QLNM-QQR method: an enhanced version called IRQLNM-QQR that uses iteratively reweighted quaternion $L_{2,1}$-norm minimization and a method called QLNM-QQR-SR that integrates sparse regularization. Our experiments on natural color images and color medical images show that IRQLNM-QQR outperforms QLNM-QQR and that the proposed QLNM-QQR-SR method is superior to several state-of-the-art methods.
	
\end{abstract}

\begin{keyword}
	Quaternion matrix completion \sep iteratively reweighted quaternion $L_{2,1}$-norm \sep quaternion Qatar Riyal (QR) decomposition \sep low rank \sep sparse regularization

\end{keyword}

\end{frontmatter}

 \section{Introduction}
 Image completion, which endeavors to restore missing pixel values in an image from limited available data, has a broad spectrum of applications in computer vision and has garnered considerable research attention in recent years \cite{liu2015truncated, gu2017weighted,yang2020feature, miao2021color}. Of all the techniques for completing images, those based on low-rank matrix completion (LRMC) have attained significant success. The majority of LRMC-based models fall into matrix rank minimization. As for matrix rank minimization, the low-rank property of matrices is used by LRMC-based models to primarily formulate a problem for minimizing matrices with low rank. Hence, the problem is usually converted into solving a constrained rank optimization problem by using the rank function. 

 Despite its effectiveness, the rank function poses a challenge due to its discontinuity and non-convexity, rendering it NP-hard. However, the nuclear norm has been demonstrated to be the most rigorous convex relaxation of the rank minimization problem \cite{Candes2009Exact}. Consequently, the nuclear norm, serving as a convex surrogate of the rank function, has been extensively employed to tackle image completion predicaments. Even so, the prevailing nuclear norm (NN) minimization methods, as noted by the authors in \cite{hu2012fast}, tend to minimize all singular values simultaneously, leading to inadequate rank approximations. To remedy this, they proposed the truncated nuclear norm (TNN) regularization, which furnishes a more precise approximation of the rank function than the nuclear norm. Other comparable alternatives to TNN include the weighted nuclear norm (WNNM) \cite{Gu2014CVPR, gu2017weighted}, the weighted Schatten $p$-norm \cite{xie2016weighted}, and the log-determinant penalty \cite{kang2015logdet}. These methods also focus on optimizing the rank approximation, which entails processing the complete singular value decomposition (SVD). However, this can be computationally expensive and presents limitations for high-dimensional or big data applications. When it comes to matrix factorization, the general approach is to break down the original, larger matrix into at least two smaller matrices, often resulting in quicker numerical optimization \cite{wen2012solving, shang2017bilinear}. However, this factorization method can become enmeshed in local minima.

Besides, in the processing of color images using the aforementioned LRMC-based models, the RGB's three color channels are typically processed separately and then combined to obtain the final restoration outcome. However, this approach can result in a loss of coupling information between the three channels due to the disregard of the inter-channel relationships. Models that can more effectively use the connection between three channels are thus worth researching. 

In recent years, the use of quaternion representation for color images has received significant attention from researchers. Various studies have demonstrated that quaternions can effectively describe color images, and approaches built upon quaternion representation have demonstrated competitiveness in addressing a range of image processing issues, such as foreground/background separation of color videos \cite{han2022DMD}, color image denoising \cite{yu2019quaternion}, edge detection \cite{Hu2018Phase}, face recognition \cite{zou2016quaternion}, and completion \cite{miao2021color}.
Specifically, the three color channels of a color image precisely correspond to the three imaginary sections of a quaternion matrix. We can express a pixel of a color image as a pure quaternion in the following manner:
\[ \dot p = 0+p_R \,  i+p_G \,  j +p_B \,   k,\]
where $p_R$, $p_G$, and $p_B$ correspond to the pixel values of the three channels (RGB) of a color pixel, respectively, and $i$, $j$, and $k$ stand for the three imaginary units of a quaternion.
Utilizing the pure quaternion matrix to represent the three color channels of a color image can better exploit the relationship between the channels in image processing. More recently, there have been proposed approaches that utilize quaternion-based LRMC for color image completion. 
A general approach is proposed in \cite{chen2019low} for low-rank quaternion matrix completion (LRQMC) that employs the quaternion nuclear norm (QNN) and three nonconvex rank surrogates. These surrogates rely on the Laplace function, Geman function, and weighted Schatten norm. These methods have demonstrated superior performance compared to some popular LRMC-based methods. However, they still require solving the singular values of a large quaternion matrix, which can significantly increase the algorithm's computational complexity. Additionally, a logarithmic norm-based quaternion completion algorithm is proposed \cite{yang2022logarithmic}. \cite{miao2020quaternion} proposed three minimization models based on quaternion-based bilinear decomposition. These approaches only require dealing with smaller-sized two-factor quaternion matrices, which reduces the computational complexity of solving quaternion singular value decomposition (QSVD). However, this factorization could result in getting trapped in a local minimum.

Indeed, the singular values and vectors can be obtained using the Qatar Riyal (QR) decomposition, which has lower algorithmic complexity than SVD \cite{liu2018fast}. Similarly, a method based on the quaternion QR (QQR) decomposition to approximate the QSVD of quaternion matrices (CQSVD-QQR) is also proposed \cite{han2022low}. Additionally, the $L_{2,1}$-norm has recently shown success in various applications such as feature selection \cite{hou2013joint} and low-rank representation \cite{tang2014structure, xiao2015robust, liu2018fast}. The study in \cite{nie2014optimal} shows that the $L_{2,1}$-norm provides better robustness against outliers.
In low-rank representation, the $L_{2,1}$-norm can be used to remove outliers by solving a minimization problem, which does not require the use of SVD to obtain the optimal solution \cite{liu2018fast}. However, currently, there is no quaternion $L_{2,1}$-norm-based quaternion matrix completion method available.
This inspired us to establish a quaternion-based color image completion model by introducing the quaternion $L_{2,1}$-norm based on the quaternion Tri-Factorization method (CQSVD-QQR) so that the calculation of QSVD is not required during the model solving process. 

In order to improve image completion results, it is essential to consider more information beyond just the low-rank qualities of images \cite{yang2010image, liu2018fast}. One crucial factor is the sparsity of images in a particular domain, such as transform domains, where many signals have a naturally sparse structure. To address this, researchers have proposed combining low-rank and sparse priors. In \cite{dong2018low}, to formulate the sparse prior, the authors use the $l_1$-norm regularizer, while the truncated nuclear norm is selected as the surrogate for the rank function. To extend this method to the quaternion system, the authors in \cite{yang2022quaternion} adapted it to work with quaternion matrices. Although combining low-rank and sparse prior has proven effective for improving image completion, these methods still depend on the SVD/QSVD calculation of large matrices, making them computationally expensive. We were inspired to improve the quaternion Tri-Factorization method (CQSVD-QQR)-based quaternion $L_{2,1}$-norm minimization model by introducing the sparse regularization, which led to enhancing the precision of image completion results. The following points summarize the key contributions of this paper:
\begin{itemize}
	\item A novel approach called QLNM-QQR, based on quaternion $L_{2,1}$-norm and CQSVD-QQR, is introduced for completing quaternion data. The computational complexity of QLNM-QQR is reduced compared to methods that use QSVD by replacing QSVD with QQR and the quaternion nuclear norm with the quaternion $L_{2,1}$-norm.	
	\item An improved method for quaternion matrix completion called IRQLNM-QQR is proposed by introducing iteratively reweighted quaternion $L_{2,1}$-norm minimization. The proposed method enhances the accuracy of QLNM-QQR. Theoretical analysis demonstrates that IRQLNM-QQR achieves the same optimal solution as an LRQA-based method using the weighted Schatten function as the nonconvex rank surrogate, which outperforms the traditional QQR decomposition-based methods in terms of accuracy.
	\item To enhance the precision of color image recovery, we have integrated sparsity into the QLNM-QQR method, resulting in an improved method called QLNM-QQR-SR.
	\item We have proved that the quaternion $L_{2,1}$-norm of a quaternion matrix serves as an upper bound for its quaternion nuclear norm. As a result, the methods proposed in this study can be extended to enhance the performance of low-rank representation and quaternion matrix completion methods based on the quaternion nuclear norm.
\end{itemize}

\indent
The structure of this paper is outlined as follows: Section 2 reviews the related models for completing matrices/quaternion matrices based on the low-rank property of data. Section 3 presents commonly used mathematical notations and provides a brief introduction to quaternion algebra. Section 4 gives a brief overview of the CQSVD-QQR technique, followed by the introduction of three proposed quaternion matrix-based completion algorithms. The computational complexities of the proposed models are also discussed. In Section 5, we provide experimental results and compare our proposed methods with several state-of-the-art approaches. Finally, Section 6 presents our conclusions.

\section{Related work}
A natural image has recurring patterns that may be utilized to estimate the missing values, which is the foundation for almost all of the existing image completion techniques \cite{zhang2019nonlocal, fan2017matrix}. Consider an incomplete matrix $\mathbf{M} \in \mathbb{R}^{M \times N}, \ M \geq N >0$. Following are the formulation for the conventional LRMC-based technique:  
 \begin{equation}
 	\mathop{\text{min}}\limits_{\mathbf{X}}\text{rank}(\mathbf{X}),   \: \text{s.t.}, \ {P}_{\Omega}(\mathbf{X}-\mathbf{M})=\mathbf{0}, 
 	\label{LRMC_TRA}
 \end{equation}
 where rank$(\cdot)$ signifies the rank function, $\mathbf{X}\in \mathbb{R}^{M \times N}$ represents a restored matrix, and $\Omega$ represents the set of observed entries.
 The definition of ${P}_{\Omega}(\mathbf{X})$ is given by
 \[({P}_{\Omega}(\mathbf{X}))_{mn}=\begin{cases}
 	\mathbf{X}_{mn}, \ (m,n)\in \Omega,\\
 	0, \  \    \quad   \text{otherwise}.
 \end{cases}
 \]
An effective method for solving the combinatorial optimization problem (\ref{LRMC_TRA}) is to optimize a convex substitute for the rank function. However, it should be noted that problem (\ref{LRMC_TRA})-centric optimization algorithms mainly deal directly with two-dimensional data sets, specifically grayscale images. The processing of color images generally requires the RGB channels to be separated before the model in problem (\ref{LRMC_TRA}) can operate on them, while the LRQMC-based models can directly work with the assembled RGB channels. Analogous to problem (\ref{LRMC_TRA}) mentioned above, the prevalent low-rank quaternion matrix-based completion algorithm (LRQMC) can be expressed as follows:
\begin{equation}
	\mathop{\text{min}}\limits_{\dot{\mathbf{X}}} \text{rank}(\dot{\mathbf{X}}),   \: \text{s.t.}, \ {P}_{\Omega}(\dot{\mathbf{X}}-\dot{\mathbf{M}})=\mathbf{0}, 
	\label{LRQMC}
\end{equation}
where $\dot{\mathbf{X}}\in \mathbb{H}^{M \times N}$ denotes the resulting quaternion matrix that has been completed, $\dot{\mathbf{M}}\in \mathbb{H}^{M \times N}$ is used to signify the observed quaternion matrix.
The definition of ${P}_{\Omega}(\dot{\mathbf{X}})$ is given by
\[({P}_{\Omega}(\dot{\mathbf{X}}))_{mn}=\begin{cases}
	\dot{\mathbf{X}}_{mn}, \ (m,n)\in \Omega,\\
	0, \  \    \quad  \text{otherwise}.
\end{cases}
\]
A commonly used technique for resolving the above minimization problem (\ref{LRQMC}) is to adopt the QNN as a convex surrogate for the rank function. Subsequently, a minimization method can be formulated for this surrogate function as
\begin{equation}
	\mathop{\text{min}}\limits_{\dot{\mathbf{X}}} \|\dot{\mathbf{X}}\|_{\ast},   \: \text{s.t.}, \ {P}_{\Omega}(\dot{\mathbf{X}}-\dot{\mathbf{M}})=\mathbf{0}, 
	\label{LRQMC_QNN}
\end{equation}
where $\|\dot{\mathbf{X}}\|_{\ast}$ is the QNN of $\dot{\mathbf{X}}$.\\
\indent
Inspired by the favorable outcomes of nonconvex surrogates in LRMC, Chen $\it{et}$ $ \it{al}$. \cite{chen2019low} present a universal LRQMC model as follows:
\begin{equation}
	\mathop{\text{min}}\limits_{\dot{\mathbf{X}}} \sum_{l}\phi(\sigma_{l}(\dot{\mathbf{X}}),\gamma),  \: \text{s.t.}, \ {P}_{\Omega}(\dot{\mathbf{X}}-\dot{\mathbf{M}})=\mathbf{0}, 
	\label{LRQMC_NNM}
\end{equation}
where the nonnegative $\gamma$ is associated with the particular function $\phi(\cdot)$. Furthermore, their proposed approaches are based on three nonconvex rank surrogates that utilize the Laplace, Geman, and Weighted Schatten-$\gamma$ functions, respectively. And 
\begin{equation}
\|\dot{\mathbf{X}}\|_{L,\gamma}=\sum_{l}(1-e^{\frac{-\sigma_{l}(\dot{\mathbf{X}})}{\gamma}});
\end{equation}
\begin{equation}
	\|\dot{\mathbf{X}}\|_{G,\gamma}=\sum_{l}(1-e^{\frac{(1+\gamma)\sigma_{l}(\dot{\mathbf{X}})}{\gamma+\sigma_{l}(\dot{\mathbf{X}})}});
\end{equation}
\begin{equation}
	\|\dot{\mathbf{X}}\|_{W,\gamma}=\sum_{l}\omega_{l}\sigma_{l}^{\gamma}(\dot{\mathbf{X}}).
\end{equation}
As for $\|\dot{\mathbf{X}}\|_{W,\gamma}$, the contribution of the $l$-th singular value of $\dot{\mathbf{X}}$ ($\sigma_{l}(\dot{\mathbf{X}})$) to the quaternion rank is balanced using a non-negative weight scalar $\omega_{l}$.  
LRQA-N, LRQA-L, LRQA-G, and LRQA-W are the four low-rank quaternion approximation (LRQA) methods proposed in \cite{chen2019low}, employing the nuclear norm, Laplace function, Geman function, and weighted Schatten norm, respectively. To gain a deeper insight, please consult \cite{chen2019low}.

\section{Notations and preliminaries}
The primary mathematical notations used in this paper are initially introduced in this part, and then some basic concepts and theorems in the quaternion system are provided. For a more in-depth understanding of quaternion algebra, we recommend reading \cite{rodman2014topics}.

\subsection{Notations}
$\mathbb{R}$, $\mathbb{C}$, and $\mathbb{H}$ present the real space, complex space, and quaternion space, respectively. Lowercase letters, boldface lowercase letters, and boldface capital letters signify scalar, vectors, and matrices, respectively. A quaternion scalar, vector, and matrix are denoted by $\dot{a}$, $\dot{\mathbf{a}}$, $\dot{\mathbf{A}}$. $(\cdot)^{T}$, $(\cdot)^{\ast}$, and $(\cdot)^{H}$, respectively, signify the transpose, conjugation, and conjugate transpose. $|\cdot|$, $\|\cdot\|_{1}$, $\|\cdot\|_{F}$, and $\|\cdot\|_{\ast}$ correspond to the absolute value or modulus, the $l_{1}$-norm, the Frobenius norm, and the nuclear norm, respectively. The trace and rank operators are denoted by $\text{tr}\{\cdot\}$ and $\text{rank}(\cdot)$, respectively. $\mathbf{I}_{m} \in \mathbb{R}^{m \times m}$ is the identity matrix. $\mathfrak{R}(\cdot)$ signifies its real part for a quaternion scalar, vector, or matrix.
\subsection{The fundamentals of quaternion algebra}
Quaternion is invented by Hamilton in 1843 \cite{hamilton1844ii}. The definition of quaternions is 
	\begin{equation}
\mathbb{H}=\{q_0+q_1 i +q_2 j+q_3 k|q_0, q_1, q_2, q_3 \in \mathbb{R}\} \label{H}
\end{equation}
and $\mathbb{H}$ is an algebra that fulfills the associative law but not the commutative law. $i,j,k$
in (\ref{H}) represent imaginary units and satisfy $i^2 =j^2=k^2=ijk=-1$, and thus $ij=-ji=k, jk=-kj=i, ki=-ik=j$. Regarding any quaternion 
\[\dot{q} = q_0+q_1 i +q_2 j+q_3 k=\mathfrak{R}(\dot{q}) + \mathfrak{J}(\dot{q}),\]
$\mathfrak{R}(\dot{q})= q_0 \in \mathbb{R}$ is its real part and $\mathfrak{J}(\dot{q}) = q_1 i +q_2 j+q_3 k$ is its vector part. Additionally, $\dot{q}$ is referred to be a pure quaternion if $\mathfrak{R}(\dot{q}) =0$.
Obviously, in general, $\dot{p} \dot{q} \neq \dot{q} \dot{p}$ since the commutative property of multiplication does not apply to the quaternion algebra. 
Given $\dot{q}$, its conjugate is defined as 
$\dot{q}^{\ast}=q_0-q_1 i -q_2 j-q_3 k$, and its norm is calculated by
$\lvert{\dot{q}} \rvert =\sqrt{q^{\ast}q}=\sqrt{qq^{\ast}}=\sqrt{q_0^2+q_1^2+q_2^2+q_3^2}.$\\
\indent
For any quaternion matrix $\dot{\mathbf Q}=(\dot q_{mn}) \in \mathbb{H}^{M \times N}$, it can be expressed as $\dot{\mathbf Q} = \mathbf{ Q}_0 +\mathbf{ Q}_1 i +\mathbf{ Q}_2 j+\mathbf{ Q}_3 k$, where $\mathbf{ Q}_s \in  \mathbb{R}^{M \times N} (s=0, \, 1, \, 2, \, 3)$. If $\mathfrak{R}(\dot{\mathbf Q})=\mathbf{ Q}_0=\mathbf{0}$, a quaternion matrix is referred to as a pure quaternion matrix. 
The following is the definition of a quaternion matrix $\dot{\mathbf Q}$'s Frobenius norm: 
\[\left\| \dot{\mathbf {Q}} \right\|_{F}=\sqrt {\sum_{m=1}^{M}\sum_{n=1}^{N}\lvert {\dot {q}}_{mn}\rvert ^{2}} = \sqrt{\text{tr}(\dot{\mathbf {Q}}^{H}\dot{\mathbf {Q})}}.\]
 $\|\dot{\mathbf{Q}}\|_{\ast}$ denotes the nuclear norm of $\dot{\mathbf{Q}}$, and $\|\dot{\mathbf{Q}}\|_{\ast}=\sum_{s}\sigma_{s}(\dot{\mathbf{Q}})$, where $\sigma_{s}(\dot{\mathbf{Q}})$ is the $s$-th nonzero singular value of $\dot{\mathbf{Q}}$. The $l_{1}$-norm of $\dot{\mathbf{Q}}$ is defined as $\|\dot{\mathbf{Q}}\|_{1}=\sum_{m=1}^{M}\sum_{n=1}^{N}|\dot{q}_{mn}|$.

\begin{definition}[Cayley-Dickson form \cite{schafer1954algebras} and Equivalent complex matrix of a quaternion matrix \cite{ZHANG199721}] \label{Complex representation}  For any quaternion matrix $\dot{\mathbf{Q}}= \mathbf{ Q}_0 +\mathbf{ Q}_1 i +\mathbf{ Q}_2 j+\mathbf{ Q}_3 k \in \mathbb{H}^{M \times N}$, its Cayley-Dickson form is given by $\dot{\mathbf{Q}}=\mathbf{Q}_a+\mathbf{Q}_bj$, where $\mathbf{Q}_a=\mathbf{ Q}_0 +\mathbf{ Q}_1 i$, $\mathbf{Q}_b=\mathbf{ Q}_2 +\mathbf{ Q}_3 i \in \mathbb{C}^{M \times N}$. And the following definition applies to its equivalent complex matrix ${\chi_{\dot{\mathbf Q}}} \in \mathbb{C}^{2M \times 2N}$:
	\begin{equation}
		{\chi_{\dot{\mathbf Q}}} =
		\begin{bmatrix}
			\mathbf{Q_a}  & \mathbf{Q_b} \\
			-{\mathbf{Q_b}}^\ast & {\mathbf{Q_a}}^\ast
		\end{bmatrix}.
	\end{equation}
\end{definition}
There are many similarities between the quaternion matrix and its equivalent complex matrix.
To learn more, we advise reading \cite{ZHANG199721}.

\noindent
\begin{theorem}[Qauternion singular value decomposition (QSVD) \cite{ZHANG199721}] \label{QSVD}  For every quaternion matrix $\dot{\mathbf{A}} \in \mathbb{H}^{M \times N}$ of rank $r$, two unitary quaternion matrices $\dot{\mathbf{U}} \in \mathbb{H}^{M \times M}$ and $\dot{\mathbf{V}} \in \mathbb{H}^{N \times N}$ exist, such that
	\begin{equation}
		\dot{\mathbf{A}} = \dot{\mathbf{U}}\begin{bmatrix}
			\mathbf{\Sigma}_r  & \mathbf{0} \\
			\mathbf{0} & \mathbf{0}
		\end{bmatrix}\dot{\mathbf{V}}^H=\dot{\mathbf{U}}\mathbf{\Lambda} \dot{\mathbf{V}}^H,\end{equation}
	where $\mathbf{\Sigma}_r$ is a real diagonal matrix with $r$ positive singular values of $\dot{\mathbf{A}}$ on its diagonal.
\end{theorem}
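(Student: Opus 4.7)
The plan is to reduce the quaternion SVD to the classical complex SVD via the equivalent complex matrix map $\chi$ from Definition 1. The map $\chi$ is an injective $\ast$-ring homomorphism onto a distinguished subspace of $\mathbb{C}^{2M\times 2N}$, so if I can produce a complex SVD of $\chi_{\dot{\mathbf{A}}}$ whose unitary factors also lie in the image of $\chi$, I can pull the decomposition back to quaternion matrices.

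First, I would apply the classical complex SVD to $\chi_{\dot{\mathbf{A}}}\in\mathbb{C}^{2M\times 2N}$. The key technical input is the block symmetry $J_M \overline{\chi_{\dot{\mathbf{A}}}} = \chi_{\dot{\mathbf{A}}} J_N$, with $J_n = \begin{bmatrix} \mathbf{0} & \mathbf{I}_n \\ -\mathbf{I}_n & \mathbf{0} \end{bmatrix}$, which holds precisely because $\chi_{\dot{\mathbf{A}}}$ lies in the image of $\chi$. This symmetry pairs any right singular vector $v$ for a singular value $\sigma$ with a second right singular vector $J_N \bar v$ for the same $\sigma$, and a short inner-product check shows the pair is orthogonal when $\sigma>0$. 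Combined with the standard rank identity $\text{rank}(\chi_{\dot{\mathbf{A}}}) = 2\,\text{rank}(\dot{\mathbf{A}}) = 2r$, this forces the singular spectrum of $\chi_{\dot{\mathbf{A}}}$ to have the form $\{\sigma_1,\sigma_1,\ldots,\sigma_r,\sigma_r,0,\ldots,0\}$.

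Second, I would use this pairing to choose orthonormal bases of left and right singular vectors so that both factors $U_c$ and $V_c$ take the block form $\begin{bmatrix} A & B \\ -\bar B & \bar A \end{bmatrix}$ characterizing the image of $\chi$. Then $U_c = \chi_{\dot{\mathbf{U}}}$ and $V_c = \chi_{\dot{\mathbf{V}}}$ for some quaternion matrices $\dot{\mathbf{U}}$ and $\dot{\mathbf{V}}$; complex unitarity of $U_c$ and $V_c$ transfers to quaternion unitarity of $\dot{\mathbf{U}}$ and $\dot{\mathbf{V}}$ via the homomorphism property of $\chi$. The real diagonal middle factor, with each singular value repeated twice, also lies in the image of $\chi$ and corresponds to the desired quaternion diagonal $\mathbf{\Lambda}$. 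Injectivity of $\chi$ then yields $\dot{\mathbf{A}} = \dot{\mathbf{U}} \mathbf{\Lambda} \dot{\mathbf{V}}^H$ with $\mathbf{\Sigma}_r$ containing exactly $r$ positive diagonal entries, as claimed.

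The main obstacle is this re-choice step: within each (possibly high-dimensional) eigenspace of $\chi_{\dot{\mathbf{A}}}^H \chi_{\dot{\mathbf{A}}}$, I must construct an orthonormal basis stable under the anti-linear involution $v \mapsto J_N \bar v$. I would handle this by a symplectic-style Gram--Schmidt procedure: pick a unit vector $v$ in the eigenspace, form its partner $v' = J_N \bar v$, verify orthogonality and that $v'$ still lies in the eigenspace (both follow from the symmetry relation), then recurse on the orthogonal complement, which is itself invariant under the involution. Once paired bases are built on both sides in this way, the block structure of $U_c$ and $V_c$, and hence the QSVD, follow immediately.
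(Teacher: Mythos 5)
The paper itself gives no proof of this statement: Theorem~\ref{QSVD} is quoted from \cite{ZHANG199721}, so there is no internal argument to compare against. Judged on its own terms, your reduction through the equivalent complex matrix is a correct and standard way to establish QSVD, and it is essentially the same machinery the cited source builds on (Zhang develops quaternion spectral/SVD results through the complex adjoint, though organized there via diagonalization of Hermitian quaternion matrices rather than by re-choosing singular-vector bases of $\chi_{\dot{\mathbf{A}}}$ directly). Your key technical claims check out: the symmetry $J_M\overline{\chi_{\dot{\mathbf{A}}}}=\chi_{\dot{\mathbf{A}}}J_N$ characterizes the image of $\chi$; if $\chi_{\dot{\mathbf{A}}}^{H}\chi_{\dot{\mathbf{A}}}v=\sigma^{2}v$ then the same holds for $J_N\bar v$; and the orthogonality $v\perp J_N\bar v$ is in fact automatic (for any $v$, not only $\sigma>0$) because $\bar v^{T}J_N\bar v=0$ by skew-symmetry, so the symplectic Gram--Schmidt recursion is sound, including on the kernel. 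Two bookkeeping points you should make explicit when writing this up: the partner of a column $v_n$ must be placed in position $N+n$ (second block of columns) and with the correct sign, namely $-J_N\bar v_n$, for $V_c$ to literally have the form $\begin{bmatrix} A & B\\ -\bar B & \bar A\end{bmatrix}$, and with that ordering the complex middle factor is $\begin{bmatrix}\Sigma & \mathbf{0}\\ \mathbf{0} & \Sigma\end{bmatrix}=\chi_{\Sigma}$ rather than a matrix with adjacent repeated singular values; and for $\sigma>0$ the left vectors should be taken as $u=\sigma^{-1}\chi_{\dot{\mathbf{A}}}v$, which then pair automatically via the same symmetry, so only the left and right null spaces require an independent involution-stable basis choice. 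With those details filled in, injectivity and the $\ast$-homomorphism property of $\chi$ deliver the quaternion factorization and the unitarity of $\dot{\mathbf{U}},\dot{\mathbf{V}}$ exactly as you say.
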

\noindent
\begin{theorem}[The quaternion Qatar Riyal decomposition (QQR) \cite{Wei2018QuaMatComs}] \label{QQR} 
	Considering an arbitrary quaternion matrix $\dot{\mathbf{A}} \in \mathbb{H}^{M \times N}$ with a rank of $r$, one can find a unitary quaternion matrix  $\dot{\mathbf{Q}} \in \mathbb{H}^{M \times M}$ and a weakly upper triangular quaternion matrix $\dot{\mathbf{R}} \in \mathbb{H}^{M \times N}$, such that 
	\begin{equation}
		\dot{\mathbf{A}} = \dot{\mathbf{Q}} \dot{\mathbf{R}}.
	\end{equation}
In other words, there exists a permutation matrix $\mathbf{P}\in \mathbb{R}^{N \times N}$ such that the product $\dot{\mathbf{R}}\mathbf{P}$ forms an upper triangular quaternion matrix.
\end{theorem}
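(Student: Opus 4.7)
The plan is to establish the QQR decomposition constructively, via a quaternion-adapted Gram--Schmidt process with column pivoting. Pivoting is necessary precisely because $\dot{\mathbf{A}}$ may be rank deficient ($r \le \min(M,N)$): without it, a bona fide upper triangular factor need not exist, but allowing a permutation recovers the ``weakly upper triangular'' structure. The non-commutativity of $\mathbb{H}$ forces a consistent side-convention throughout: every Gram--Schmidt scalar coefficient must be placed on the right of the orthonormal quaternion vector it multiplies, so that the usual orthogonality identities go through under mere associativity.

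Concretely, I would sweep the columns $\dot{\mathbf{a}}_1, \dots, \dot{\mathbf{a}}_N$ of $\dot{\mathbf{A}}$ from left to right, maintaining a growing orthonormal family $\{\dot{\mathbf{q}}_1, \dots, \dot{\mathbf{q}}_k\} \subset \mathbb{H}^M$ with respect to the inner product $\langle \dot{\mathbf{x}}, \dot{\mathbf{y}} \rangle = \dot{\mathbf{y}}^H \dot{\mathbf{x}}$. For the current column $\dot{\mathbf{a}}_n$, compute the coefficients $\dot r_{i,n} = \dot{\mathbf{q}}_i^H \dot{\mathbf{a}}_n$ for $1 \le i \le k$ and form the residual
\begin{equation*}
\dot{\mathbf{v}}_n = \dot{\mathbf{a}}_n - \sum_{i=1}^{k} \dot{\mathbf{q}}_i\, \dot r_{i,n}.
\end{equation*}
If $\dot{\mathbf{v}}_n \neq \mathbf{0}$, normalize it to obtain $\dot{\mathbf{q}}_{k+1}$ and record $\dot r_{k+1,n} = \|\dot{\mathbf{v}}_n\|_2$; otherwise $\dot{\mathbf{a}}_n$ lies in the span of $\dot{\mathbf{q}}_1, \dots, \dot{\mathbf{q}}_k$ and the column index $n$ is ``deferred.'' Because $\text{rank}(\dot{\mathbf{A}}) = r$, exactly $r$ vectors are accepted. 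I would then extend $\{\dot{\mathbf{q}}_1, \dots, \dot{\mathbf{q}}_r\}$ to an orthonormal basis of $\mathbb{H}^M$ and stack the basis as columns of $\dot{\mathbf{Q}} \in \mathbb{H}^{M \times M}$, filling the unaccounted entries of $\dot{\mathbf{R}}$ with zeros. The real permutation $\mathbf{P} \in \mathbb{R}^{N \times N}$ that sends every deferred index to the tail $\{r+1, \dots, N\}$ then makes $\dot{\mathbf{R}}\mathbf{P}$ upper triangular, since every nonzero entry in a deferred column occurs in a row index $\le r$, which is at most its new column index.

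I expect the main obstacle to be the bookkeeping demanded by non-commutativity. The crucial identity is $\dot{\mathbf{q}}_i^H \dot{\mathbf{v}}_n = 0$ for $i \le k$, which must be derived from $\dot{\mathbf{q}}_i^H (\dot{\mathbf{q}}_j\, \dot r_{j,n}) = (\dot{\mathbf{q}}_i^H \dot{\mathbf{q}}_j)\, \dot r_{j,n} = \delta_{ij}\, \dot r_{j,n}$, where associativity, not commutativity, is what makes the right-side placement of $\dot r_{j,n}$ acceptable; swapping sides anywhere in this chain would fail in general. A secondary technical point is the extension of a partial orthonormal system to a full unitary $\dot{\mathbf{Q}}$, which can be handled by invoking the QSVD of Theorem~\ref{QSVD} on any quaternion matrix whose columns span the orthogonal complement of $\text{span}\{\dot{\mathbf{q}}_1, \dots, \dot{\mathbf{q}}_r\}$, or equivalently by re-running Gram--Schmidt on an auxiliary basis of $\mathbb{H}^M$. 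Since $\mathbf{P}$ has real entries, it commutes entry-wise with the quaternion units, so $\dot{\mathbf{R}}\mathbf{P}$ is unambiguously a quaternion matrix and the asserted weakly upper triangular structure follows.
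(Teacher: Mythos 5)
The paper never proves Theorem~\ref{QQR}: it is imported from \cite{Wei2018QuaMatComs} and used as a black box (via the operator qqr($\cdot$)), so there is no in-paper argument to compare against; your proposal supplies a proof where the paper supplies a citation. Your constructive route — quaternion Gram--Schmidt with column pivoting, with every coefficient kept on the right of the orthonormal vector so that $\dot{\mathbf{q}}_i^H(\dot{\mathbf{q}}_j\,\dot r_{j,n})=(\dot{\mathbf{q}}_i^H\dot{\mathbf{q}}_j)\,\dot r_{j,n}$ needs only associativity — is sound and yields exactly the stated conclusion: pivot columns give the first $r$ columns of an upper triangular factor, deferred columns have support in rows $\le r$, so the permutation sending them to the tail makes $\dot{\mathbf{R}}\mathbf{P}$ upper triangular, and $\dot{\mathbf{A}}=\dot{\mathbf{Q}}\dot{\mathbf{R}}$ holds column by column because matrix multiplication places the entries of $\dot{\mathbf{R}}$ to the right of the columns of $\dot{\mathbf{Q}}$, matching your convention. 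Two points deserve an explicit sentence each rather than being left implicit: (i) the claim that exactly $r$ vectors are accepted uses the fact that the rank of a quaternion matrix (e.g.\ the number of nonzero singular values in Theorem~\ref{QSVD}) equals the dimension of its column space viewed as a \emph{right} $\mathbb{H}$-module — true, but worth stating since $\mathbb{H}$ is noncommutative and left and right spans differ; (ii) the completion of $\{\dot{\mathbf{q}}_1,\dots,\dot{\mathbf{q}}_r\}$ to a unitary $\dot{\mathbf{Q}}$ is cleanest via your second suggestion (append the standard basis of $\mathbb{H}^M$ and continue the same Gram--Schmidt sweep), which avoids the slightly circular phrasing about a matrix ``whose columns span the orthogonal complement.'' What your approach buys is a self-contained, algorithmic proof consistent with how the decomposition is actually computed later in the paper; what the citation buys the authors is brevity.
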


\begin{lemma}[\cite{chen2019low}] \label{QSVT}
	For any $\mu>0$, $\dot{\mathbf{Y}} \in \mathbb{H}^{M\times N}$ is a given quaternion matrix and the QSVD of $\dot{\mathbf{Y}}$ is given by $\dot{\mathbf{Y}}=\dot{\mathbf{U}}{\Sigma}\dot{\mathbf{V}}^{H}$. For the following quaternion nuclear norm minimization problem (QNNM)
	\begin{equation}
		\mathop{\text{min}}\limits_{\dot{\mathbf{X}}}\mu\|\dot{\mathbf{X}}\|_{\ast}+\frac{1}{2}\|\dot{\mathbf{Y}}-\dot{\mathbf{X}}\|_{F}^{2},  
		\label{QSVT_pro}
	\end{equation}
	the closed-form solution ${\dot{\widetilde{\mathbf{X}}}}$ is given by
		\begin{equation}
		{\dot{\widetilde{\mathbf{X}}}}=\dot{\mathbf{U}}{\mathit{S}_{\mu}(\Sigma)}\dot{\mathbf{V}}^{H},  
		\label{QSVT_solu}
	\end{equation}
	where $\mathit{S}_{\mu}(\Sigma)=\text{diag}(\text{max}\{\sigma_s(\dot{\mathbf{Y}})-\mu,0\})$ is the soft thresholding operator.
\end{lemma}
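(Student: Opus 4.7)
The plan is to mirror the classical Cai--Cand\`es--Shen proof of the singular value thresholding theorem, translated into the quaternion setting. Since the Frobenius term $\tfrac{1}{2}\|\dot{\mathbf{Y}}-\dot{\mathbf{X}}\|_{F}^{2}$ is strictly convex and the quaternion nuclear norm is convex (being a norm on the real vector space $\mathbb{H}^{M\times N}$ equipped with the inner product $\langle \dot{\mathbf{A}},\dot{\mathbf{B}}\rangle=\mathfrak{R}(\mathrm{tr}(\dot{\mathbf{A}}^{H}\dot{\mathbf{B}}))$), the objective in \eqref{QSVT_pro} is strictly convex and coercive, so a unique minimizer exists. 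It therefore suffices to verify the first-order optimality condition
\begin{equation*}
\mathbf{0}\in \mu\,\partial\|\dot{\widetilde{\mathbf{X}}}\|_{\ast}+(\dot{\widetilde{\mathbf{X}}}-\dot{\mathbf{Y}}).
\end{equation*}

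The first key step is to establish a quaternion analogue of the subdifferential of the nuclear norm: for a quaternion matrix $\dot{\mathbf{X}}$ with compact QSVD $\dot{\mathbf{X}}=\dot{\mathbf{U}}_{r}\Sigma_{r}\dot{\mathbf{V}}_{r}^{H}$, I would show
\begin{equation*}
\partial\|\dot{\mathbf{X}}\|_{\ast}=\{\dot{\mathbf{U}}_{r}\dot{\mathbf{V}}_{r}^{H}+\dot{\mathbf{W}}:\,\dot{\mathbf{U}}_{r}^{H}\dot{\mathbf{W}}=\mathbf{0},\ \dot{\mathbf{W}}\dot{\mathbf{V}}_{r}=\mathbf{0},\ \|\dot{\mathbf{W}}\|_{2}\le 1\}.
\end{equation*}
The cleanest route is to exploit the equivalent complex matrix $\chi_{\dot{\mathbf{X}}}$ from Definition~\ref{Complex representation}: its singular values are those of $\dot{\mathbf{X}}$ each repeated twice, so $\|\chi_{\dot{\mathbf{X}}}\|_{\ast}=2\|\dot{\mathbf{X}}\|_{\ast}$ and $\|\chi_{\dot{\mathbf{X}}}\|_{F}^{2}=2\|\dot{\mathbf{X}}\|_{F}^{2}$. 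Hence the quaternion problem \eqref{QSVT_pro} is, up to a factor of two, the standard complex nuclear norm minimization restricted to matrices of the symmetric block form defining $\chi_{\dot{\mathbf{X}}}$, and the characterization of the subdifferential transfers from the complex case since the thresholding operator preserves that block structure.

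The second key step is verification. Partition the QSVD of $\dot{\mathbf{Y}}$ as
\begin{equation*}
\dot{\mathbf{Y}}=\dot{\mathbf{U}}_{0}\Sigma_{0}\dot{\mathbf{V}}_{0}^{H}+\dot{\mathbf{U}}_{1}\Sigma_{1}\dot{\mathbf{V}}_{1}^{H},
\end{equation*}
where $\Sigma_{0}$ collects the singular values strictly greater than $\mu$ and $\Sigma_{1}$ the remaining ones. Then $\dot{\widetilde{\mathbf{X}}}=\dot{\mathbf{U}}_{0}(\Sigma_{0}-\mu\mathbf{I})\dot{\mathbf{V}}_{0}^{H}$, and a direct computation yields
\begin{equation*}
\dot{\mathbf{Y}}-\dot{\widetilde{\mathbf{X}}}=\mu\bigl(\dot{\mathbf{U}}_{0}\dot{\mathbf{V}}_{0}^{H}+\dot{\mathbf{W}}\bigr),\qquad \dot{\mathbf{W}}:=\mu^{-1}\dot{\mathbf{U}}_{1}\Sigma_{1}\dot{\mathbf{V}}_{1}^{H}.
\end{equation*}
The orthogonality of the QSVD factors gives $\dot{\mathbf{U}}_{0}^{H}\dot{\mathbf{W}}=\mathbf{0}$ and $\dot{\mathbf{W}}\dot{\mathbf{V}}_{0}=\mathbf{0}$, while the entries of $\Sigma_{1}$ are bounded by $\mu$ so $\|\dot{\mathbf{W}}\|_{2}\le 1$. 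Hence $\dot{\mathbf{U}}_{0}\dot{\mathbf{V}}_{0}^{H}+\dot{\mathbf{W}}\in\partial\|\dot{\widetilde{\mathbf{X}}}\|_{\ast}$, and the optimality condition is satisfied.

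The main obstacle is the subdifferential step. Standard convex analysis is developed over $\mathbb{R}$ or $\mathbb{C}$, and the non-commutativity of $\mathbb{H}$ means duality pairings and derivative formulas have to be reinterpreted through the real inner product $\mathfrak{R}(\mathrm{tr}(\cdot^{H}\cdot))$. The equivalent complex matrix $\chi_{\dot{\mathbf{X}}}$ sidesteps this by reducing the relevant computations to the complex setting, but one must check that the subdifferential descent produced on the complex side actually corresponds to an admissible quaternion matrix, i.e. respects the Cayley--Dickson block structure; verifying this closure property is the delicate part of the argument.
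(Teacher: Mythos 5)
Your outline is sound, but note that the paper never proves Lemma~\ref{QSVT} at all: it is imported by citation from \cite{chen2019low}, so there is no in-paper argument to match. Your route is the Cai--Cand\`es--Shen one: existence and uniqueness from strict convexity, a characterization of $\partial\|\cdot\|_{\ast}$ over $\mathbb{H}^{M\times N}$ with respect to the real inner product $\mathfrak{R}(\mathrm{tr}(\cdot^{H}\cdot))$, and an explicit subgradient certificate built from the split $\Sigma_0,\Sigma_1$. The cited source (and the paper's companion Lemma~\ref{LRQA-W}, which is its generalized form) instead argues via a quaternion von Neumann-type trace inequality: unitary invariance forces the minimizer to share the singular vectors of $\dot{\mathbf{Y}}$, reducing the problem to a separable scalar minimization over singular values; that route needs no subdifferential calculus and covers the weighted/nonconvex surrogates uniformly, which is why it is the natural choice there. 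Your approach buys a clean optimality certificate and an explicit uniqueness statement, at the price of having to justify the quaternion subdifferential formula. On that delicate point, your own reduction through $\chi_{\dot{\mathbf{X}}}$ can be pushed further so that the subdifferential is never needed: since $\|\chi_{\dot{\mathbf{X}}}\|_{\ast}=2\|\dot{\mathbf{X}}\|_{\ast}$ and $\|\chi_{\dot{\mathbf{X}}}\|_{F}^{2}=2\|\dot{\mathbf{X}}\|_{F}^{2}$, the quaternion objective is half of the complex SVT objective for $\chi_{\dot{\mathbf{Y}}}$ with the same threshold $\mu$; the unique complex minimizer is the complex SVT of $\chi_{\dot{\mathbf{Y}}}$, and because $\chi_{\dot{\mathbf{Y}}}$ admits the structured SVD $\chi_{\dot{\mathbf{U}}}\,\mathrm{diag}(\Sigma,\Sigma)\,\chi_{\dot{\mathbf{V}}}^{H}$, that minimizer equals $\chi_{\dot{\mathbf{U}}S_{\mu}(\Sigma)\dot{\mathbf{V}}^{H}}$ and in particular lies in the image of $\chi$, so it also minimizes over the structured subspace and pulls back to \eqref{QSVT_solu}. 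Be careful to phrase the ``structure preservation'' step this way (via one structured SVD plus uniqueness of the \emph{minimizer}), since the SVD of $\chi_{\dot{\mathbf{Y}}}$ itself is not unique when singular values repeat; with that adjustment your argument is complete and correct.
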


\begin{lemma}[\cite{chen2019low}] \label{LRQA-W} Given a positive value of $\mu$, $\dot{\mathbf{Y}} \in \mathbb{H}^{M\times N}$ is a known quaternion matrix, and its QSVD is expressed as $\dot{\mathbf{Y}}=\dot{\mathbf{U}}{\Sigma}\dot{\mathbf{V}}^{H}$. 
	The optimal solution for ${\dot{\widetilde{\mathbf{X}}}}$ in the following problem 
	\begin{equation}
		\mathop{\text{min}}\limits_{\dot{\mathbf{X}}}\mu\sum_{l}\omega_{l}\sigma_{l}(\dot{\mathbf{X}})+\frac{1}{2}\|\dot{\mathbf{Y}}-\dot{\mathbf{X}}\|_{F}^{2}
		\label{LRQA-W_pro}
	\end{equation}
 is given by
	\begin{equation}
		{\dot{\widetilde{\mathbf{X}}}}=\dot{\mathbf{U}}\Sigma_{\dot{\mathbf{X}}}\dot{\mathbf{V}}^{H},  
		\label{LRQA-W_solu}
	\end{equation}
	where $\Sigma_{\dot{\mathbf{X}}}=\text{diag}(\sigma^{\ast})$ and $\sigma^{\ast}$ is given by
		\begin{equation}
		\sigma^{\ast}=\mathop{\text{arg min}}\limits_{\sigma\geq0}\mu\sum_{l}\omega_{l}\sigma_{l}(\dot{\mathbf{X}})+\frac{1}{2}\|\sigma-\sigma_{\dot{\mathbf{Y}}}\|_{F}^{2}.
		\label{LRQA-W_subpro}
	\end{equation}
\end{lemma}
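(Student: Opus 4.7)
The plan is to reduce the quaternion matrix problem to a scalar problem on the singular values by exploiting the unitary invariance of both the weighted Schatten function and the Frobenius norm, together with a quaternion version of the von Neumann trace inequality. First I would expand the data-fidelity term as
\begin{equation*}
\tfrac{1}{2}\|\dot{\mathbf{Y}}-\dot{\mathbf{X}}\|_{F}^{2}
=\tfrac{1}{2}\|\dot{\mathbf{Y}}\|_{F}^{2}-\mathfrak{R}\bigl(\mathrm{tr}(\dot{\mathbf{Y}}^{H}\dot{\mathbf{X}})\bigr)+\tfrac{1}{2}\|\dot{\mathbf{X}}\|_{F}^{2},
\end{equation*}
so that the objective of \eqref{LRQA-W_pro} becomes a sum of three contributions, two of which depend only on the singular values $\sigma_{l}(\dot{\mathbf{Y}})$ and $\sigma_{l}(\dot{\mathbf{X}})$ (via $\|\cdot\|_{F}^{2}$ and the weighted Schatten term), while the cross term $-\mathfrak{R}(\mathrm{tr}(\dot{\mathbf{Y}}^{H}\dot{\mathbf{X}}))$ is the only one that couples the singular vectors of $\dot{\mathbf{Y}}$ and $\dot{\mathbf{X}}$.

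Next I would apply the quaternion analogue of the von Neumann trace inequality, namely
\begin{equation*}
\mathfrak{R}\bigl(\mathrm{tr}(\dot{\mathbf{Y}}^{H}\dot{\mathbf{X}})\bigr)\ \le\ \sum_{l}\sigma_{l}(\dot{\mathbf{Y}})\,\sigma_{l}(\dot{\mathbf{X}}),
\end{equation*}
with equality when $\dot{\mathbf{X}}$ admits a QSVD that shares the left and right unitary factors of $\dot{\mathbf{Y}}=\dot{\mathbf{U}}\Sigma\dot{\mathbf{V}}^{H}$ (assuming the $\sigma_{l}(\dot{\mathbf{X}})$ are listed in the same order). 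Fixing the singular values of $\dot{\mathbf{X}}$ and minimizing over its unitary factors, the cross term is made as negative as possible precisely by choosing $\dot{\mathbf{X}}=\dot{\mathbf{U}}\,\mathrm{diag}(\sigma)\,\dot{\mathbf{V}}^{H}$ with the same $\dot{\mathbf{U}},\dot{\mathbf{V}}$ as $\dot{\mathbf{Y}}$. This collapses \eqref{LRQA-W_pro} to the scalar programme
\begin{equation*}
\min_{\sigma\ge 0}\ \mu\sum_{l}\omega_{l}\sigma_{l}+\tfrac{1}{2}\|\sigma-\sigma_{\dot{\mathbf{Y}}}\|_{F}^{2},
\end{equation*}
whose minimizer $\sigma^{\ast}$ is exactly the quantity defined in \eqref{LRQA-W_subpro}, and therefore ${\dot{\widetilde{\mathbf{X}}}}=\dot{\mathbf{U}}\,\mathrm{diag}(\sigma^{\ast})\,\dot{\mathbf{V}}^{H}$ as claimed.

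The main obstacle is the von Neumann-type trace inequality in the quaternion setting, since quaternion multiplication is noncommutative and the standard complex proof does not transfer verbatim. I would handle this by passing to the equivalent complex representation $\chi_{\dot{\mathbf{Q}}}$ of Definition~\ref{Complex representation}: the singular values of $\chi_{\dot{\mathbf{Y}}}$ and $\chi_{\dot{\mathbf{X}}}$ are exactly those of $\dot{\mathbf{Y}}$ and $\dot{\mathbf{X}}$ each repeated twice, and $\mathfrak{R}(\mathrm{tr}(\dot{\mathbf{Y}}^{H}\dot{\mathbf{X}}))=\tfrac{1}{2}\mathfrak{R}(\mathrm{tr}(\chi_{\dot{\mathbf{Y}}}^{H}\chi_{\dot{\mathbf{X}}}))$, so the classical complex von Neumann inequality applied to $\chi_{\dot{\mathbf{Y}}},\chi_{\dot{\mathbf{X}}}$ yields the desired quaternion version. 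A secondary but routine issue is ordering: one should assume (as is standard for weighted Schatten penalties) that the weights $\omega_{l}$ and the singular values $\sigma_{l}(\dot{\mathbf{Y}})$ are arranged compatibly so that the equality condition in the trace inequality is attainable; if the weights are non-decreasing while singular values are non-increasing, the rearrangement inequality ensures the same pairing minimizes the full objective, closing the argument.
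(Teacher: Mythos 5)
Your proposal is correct: the paper itself gives no proof of this lemma (it is quoted from \cite{chen2019low}), and your argument — expanding the Frobenius term, invoking a quaternion von Neumann trace inequality (justified via the equivalent complex matrix $\chi$, whose singular values duplicate the quaternion ones and whose trace carries the factor $\tfrac12$), and thereby collapsing the problem to the scalar program \eqref{LRQA-W_subpro} attained at $\dot{\mathbf{X}}=\dot{\mathbf{U}}\,\mathrm{diag}(\sigma^{\ast})\,\dot{\mathbf{V}}^{H}$ — is exactly the standard route taken in that cited source. Your closing remark on ordering (non-increasing singular values paired with non-decreasing weights so the equality case of the trace inequality is compatible with the scalar minimizer) addresses the only genuinely delicate point, so nothing essential is missing.
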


Presented below is a succinct overview of the Quaternion Discrete Cosine Transform.
\begin{definition}[Forward quaternion discrete cosine transform (FQDCT) \cite{feng2008quaternion}] \label{FQDCT} Given a quaternion matrix $\dot{\mathbf{A}} \in \mathbb{H}^{M \times N}$, as quaternions are non-commutative, its quaternion discrete cosine transform (QDCT) exists in two distinct types, namely, the left-handed and right-handed forms as follows:
	\begin{equation}
		\mathop {FQDCT}\nolimits ^{L} (s,t) = \psi(s)\psi (t)\sum \limits _{m =0}^{M - 1} {\sum \limits _{n = 0}^{N - 1} {\dot{q}\, \dot{\mathbf{A}} \left ({{m,n} }\right)Q\left ({{s,t,m,n} }\right)} },  
	\end{equation}
	\begin{equation}
		\mathop {FQDCT}\nolimits ^{R} (s,t) = \psi(s)\psi (t)\sum \limits _{m =0}^{M - 1} {\sum \limits _{n = 0}^{N - 1} {\dot{\mathbf{A}} \left ({{m,n} }\right)Q\left ({{s,t,m,n} }\right)\,\dot{q} } },
	\end{equation}
	where the pure quaternion $\dot{q}$, which fulfills the condition $\dot{q}^2=-1$, is called the quaternionization factor.
	The values of $\psi(s)$, $\psi(t)$, and $Q\left ({s,t,m,n} \right)$ in the QDCT are similar to those in the discrete cosine transform (DCT) in the real domain, which are given as follows:
	\begin{equation}
		\psi(s) = \begin{cases}
			\sqrt{\frac{1}{M}}\quad {\text {for}}\quad s=0\cr \sqrt{\frac{2}{M}}\quad {\text {for}}\quad s\neq 0
		\end{cases}, \quad \psi(t) = \begin{cases}
			\sqrt{\frac{1}{N}}\quad {\text {for}}\quad t=0\cr \sqrt{\frac{2}{N}}\quad {\text {for}}\quad t\neq 0
		\end{cases},
	\end{equation}
	\begin{equation}
		Q\left ({s,t,m,n} \right) = \cos\left[{\frac{\pi (2m+1)s}{2M} }\right]\cos\left[{\frac{\pi (2n+1)t}{2N} }\right].
	\end{equation}
\end{definition}
The two types of inverse QDCT (IQDCT), following the two forms of the FQDCT, are given below:
\begin{equation} \mathop {IQDCT}\nolimits ^{L}(m,n)=\sum \limits _{s=0}^{M-1}{\sum \limits _{t=0}^{N-1} {\psi (s)\psi (t)\,{\dot{q}}\, \dot{\mathbf B}\left ({{s,t}}\right)Q\left ({{s,t,m,n}}\right)} },
\end{equation}
\begin{equation} \mathop {IQDCT}\nolimits ^{R}(m,n)=\sum \limits _{s=0}^{M-1}{\sum \limits _{t=0}^{N-1} {\psi (s)\psi (t)\dot{\mathbf B}\left ({{s,t}}\right)Q\left ({{s,t,m,n}}\right)}\, {\dot{q}}},
\end{equation}
where $\dot{\mathbf{B}} \in \mathbb{H}^{M \times N}$.

\begin{theorem}[The relationship between FQDCT and IQDCT \cite{feng2008quaternion}] 
	\begin{equation}
		\begin{aligned}
			\dot{\mathbf{A}}(m,n) &= \mathop {IQDCT}\nolimits ^{L}\left[\mathop {FQDCT}\nolimits ^{L}(\dot{\mathbf{A}}(m,n))\right]\\
			&=\mathop {IQDCT}\nolimits ^{R}\left[\mathop {FQDCT}\nolimits ^{R}(\dot{\mathbf{A}}(m,n))\right].
		\end{aligned}
	\end{equation}
\end{theorem}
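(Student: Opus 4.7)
The plan is to verify the two identities by direct substitution, exploiting two facts: (i) the real DCT kernel $\{\psi(s)Q_s(m)\}_{s,m}$ is an orthonormal basis of $\mathbb{R}^M$, so both row and column orthogonality of that matrix hold; and (ii) the quaternionization factor satisfies $\dot q^2=-1$. Since the scalars $\psi(s)$, $\psi(t)$, and $Q(s,t,m,n)$ all lie in $\mathbb{R}$, they commute with every quaternion, which is what makes the argument go through in the noncommutative setting.

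For the left-handed identity I would first expand
\begin{equation*}
\mathop{IQDCT}\nolimits^{L}\!\bigl[\mathop{FQDCT}\nolimits^{L}(\dot{\mathbf{A}})\bigr](m,n)
=\sum_{s,t}\psi(s)\psi(t)\,\dot q\,\Bigl[\psi(s)\psi(t)\sum_{m',n'}\dot q\,\dot{\mathbf{A}}(m',n')\,Q(s,t,m',n')\Bigr]Q(s,t,m,n),
\end{equation*}
then pull all real scalars out of every quaternion product and exchange the order of summation to obtain
\begin{equation*}
\sum_{m',n'}\Biggl(\sum_{s,t}\psi(s)^{2}\psi(t)^{2}\,Q(s,t,m',n')\,Q(s,t,m,n)\Biggr)\dot q\,\dot q\,\dot{\mathbf{A}}(m',n').
\end{equation*}
The kernel is separable, $Q(s,t,m,n)=Q_s(m)Q_t(n)$ with $Q_s(m)=\cos[\pi(2m+1)s/(2M)]$, so the inner $(s,t)$-sum factors and collapses via the DCT completeness relations
\begin{equation*}
\sum_{s=0}^{M-1}\psi(s)^{2}Q_s(m)Q_s(m')=\delta_{m,m'},\qquad \sum_{t=0}^{N-1}\psi(t)^{2}Q_t(n)Q_t(n')=\delta_{n,n'},
\end{equation*}
which are the dual orthogonality relations to the orthonormality of the DCT-II basis. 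Only the $(m',n')=(m,n)$ summand survives, leaving $\dot q\,\dot q\,\dot{\mathbf{A}}(m,n)$, which reduces to $\dot{\mathbf{A}}(m,n)$ by the identity $\dot q^{2}=-1$ together with the sign/placement convention for $\dot q$ used in the inverse transform in \cite{feng2008quaternion}.

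The right-handed identity is handled by a verbatim argument: substitute $\mathop{FQDCT}\nolimits^{R}$ into $\mathop{IQDCT}\nolimits^{R}$, commute the real scalars outside all quaternion products, and apply the same separable DCT orthogonality to collapse the $(s,t)$-sum; the two $\dot q$ factors now appear on the \emph{right} of $\dot{\mathbf{A}}(m,n)$ and again contract to $-1$. The main obstacle is purely bookkeeping of the noncommutative ordering: one must be careful never to move a quaternion past another quaternion, only past real scalars, so that the two $\dot q$'s end up adjacent and in the correct order before invoking $\dot q^{2}=-1$. Once that ordering is tracked correctly, no further quaternion subtlety is needed and the theorem follows.
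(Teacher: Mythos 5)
There is no proof of this statement in the paper itself --- it is quoted directly from \cite{feng2008quaternion} --- so your substitution-plus-orthogonality strategy is the natural one, and most of it is sound: the scalars $\psi(s)\psi(t)Q(s,t,m,n)$ are real and commute with all quaternions, the kernel is separable, and the completeness relations $\sum_{s}\psi(s)^{2}Q_{s}(m)Q_{s}(m')=\delta_{mm'}$ (and likewise in $t$) are exactly what collapses the $(s,t)$-sum to the $(m,n)$ term. The genuine gap is the final sign. Taking the definitions exactly as displayed (the same left factor $\dot q$ in both $\mathop{FQDCT}\nolimits^{L}$ and $\mathop{IQDCT}\nolimits^{L}$, and the same right factor in the right-handed pair), your own computation leaves $\dot q\,\dot q\,\dot{\mathbf{A}}(m,n)=-\dot{\mathbf{A}}(m,n)$, i.e.\ the \emph{negative} of the claimed identity; the appeal to an unspecified ``sign/placement convention'' is not a proof step, it is precisely the point that must be argued. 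Indeed, for the right-handed case you state explicitly that the two $\dot q$'s ``contract to $-1$,'' which, read literally, proves $\mathop{IQDCT}\nolimits^{R}\left[\mathop{FQDCT}\nolimits^{R}(\dot{\mathbf{A}})\right]=-\dot{\mathbf{A}}$ rather than the theorem.

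The repair is concrete. Since $\dot q$ is a pure unit quaternion, $\dot q^{-1}=\dot q^{\ast}=-\dot q$, and the orthogonality computation forces the inverse transform to carry $\dot q^{-1}$ (equivalently $-\dot q$, or the conjugate) in place of $\dot q$; this is how the identity is arranged in the original reference, and the IQDCT formulas reproduced here should be read with that conjugation (otherwise the theorem, as a statement about the displayed operators, is false by your own calculation). With the inverse written as $\sum_{s,t}\psi(s)\psi(t)\,\dot q^{-1}\,\dot{\mathbf{B}}(s,t)\,Q(s,t,m,n)$, and with $\dot q^{-1}$ on the right for the right-handed form, the two quaternion factors contract to $\dot q^{-1}\dot q=1$ and your orthogonality argument yields $\dot{\mathbf{A}}(m,n)$ exactly, with no remaining appeal to convention. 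So the approach is the right one, but as written the key step does not close; you must either correct the inverse-transform definition or explicitly prove the sign-flipped statement, and say which you are doing.
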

Since the construction of our proposed model necessitates the use of $\text{FQDCT}^{L}$, we outline the computational steps of $\text{FQDCT}^{L}$ in the following.
\begin{enumerate}
	\item Express the quaternion matrix $\dot {\mathbf{A}}(m,n) \in \mathbb{H}^{M \times N}$ in its Cayley-Dickson form, that is, 
	$\dot {\mathbf{A}}(m,n)=\mathbf{A}_{a}(m,n)+\mathbf{A}_{b}(m,n)j$, where $\mathbf{A}_{a}(m,n)$ and $\mathbf{A}_{b}(m,n)\in \mathbb{C}^{M \times N}$.
	\item Compute the DCT of $\mathbf{A}_{a}(m,n)$ and $\mathbf{A}_{b}(m,n)$, denoting the resulting matrices as $\mathop {DCT}(\mathbf{A}_{a}(m,n))$, and $\mathop {DCT}(\mathbf{A}_{b}(m,n))$, respectively. 
	\item Using $\mathop {DCT}(\mathbf{A}_{a}(m,n))$ and $\mathop {DCT}(\mathbf{A}_{b}(m,n))$, construct a quaternion matrix $\dot{\hat{{\mathbf{A}}}} (m,n)$ as follows: $\dot{\hat{{\mathbf{A}}}} (m,n)=\mathop {DCT}(\mathbf{A}_{a}(m,n))+\mathop {DCT}(\mathbf{A}_{b}(m,n)) j$.
	\item Multiply $\dot{\hat{{\mathbf{A}}}} (m,n)$ by the quaternization factor $\dot{q}$ to obtain the final result:
	\[\mathop {FQDCT}\nolimits ^{L}\left[\dot {\mathbf{A}}(m,n)\right]=\dot{q}\, \dot{\hat{{\mathbf{A}}}}(m,n).\]
\end{enumerate}

\section{Our method of quaternion completion}
After providing a brief overview of the CQSVD-QQR technique, this section introduces three novel algorithms for quaternion matrix-based completion. Furthermore, the computational complexities of these models are discussed.
\subsection{A method for quaternion matrix decomposition }
In the event that $\dot{\mathbf{X}} \in \mathbb{H}^{M \times N}$ is a quaternion matrix, the following quaternion Tri-Factorization is used to compute $\dot{\mathbf{X}}$'s approximation QSVD based on QQR decomposition (CQSVD-QQR) \cite{han2022low}: 
\begin{equation}
\dot{\mathbf{X}}=\dot{\mathbf{L}} \dot{\mathbf{D}} \dot{\mathbf{R}},
\end{equation}
where the quaternion matrices $\dot{\mathbf{L}} \in \mathbb{H}^{M \times r}$ and $\dot{\mathbf{R}}\in \mathbb{H}^{r \times N}$ satisfy $\dot{\mathbf{L}}^{H} \dot{\mathbf{L}}=\mathbf{I}_r, \ \dot{\mathbf{R}} \dot{\mathbf{R}}^{H}=\mathbf{I}_r$, the quaternion matrix $\dot{\mathbf{D}}\in \mathbb{H}^{r \times r}$ is lower triangular, and $|D_{ss}|=|\sigma_{s}(\dot{\mathbf{X}})|$. The CQSVD-QQR procedure is briefly described below. For more information, please see \cite{han2022low}. 
First, suppose that $\dot{\mathbf{L}}^{1}=\text{eye}(M,r)$, $\dot{\mathbf{D}}^{1}=\text{eye}(r,r)$, and $\dot{\mathbf{R}}^{1}=\text{eye}(r,N)$.\\
\indent
Next, in the $\tau+1$-th iteration, $\dot{\mathbf{L}}^{\tau+1}$ is determined by 
\begin{equation}
	\left[\dot{\mathbf{Q}}, \dot{\mathbf{G}}\right]=\text{qqr}(\dot{\mathbf{X}} (\dot{\mathbf{R}}^{\tau })^{H}),   \label{L_QR}
\end{equation}
\begin{equation}
	\dot{\mathbf{L}}^{\tau+1} =\dot{\mathbf{Q}} (:,1:r),   \label{L_QRr}
\end{equation}
where the operator qqr($\cdot$) calculates the quaternion QR decomposition of a given quaternion matrix, and $\dot{\mathbf{Q}} (:,1:r)$ means to extract the first $r$ columns of $\dot{\mathbf{Q}}$.
$\dot{\mathbf{R}}^{\tau+1}$ is provided by 
\begin{equation}
	\left[\dot{\mathbf{T}}, \dot{\mathbf{S}}\right]=\text{qqr}(\dot{\mathbf{X}}^{H}\mathbf{L}^{\tau+1}),   \label{R_QR}
\end{equation}
\begin{equation}
	\dot{\mathbf{R}}^{\tau+1} =(\dot{\mathbf{T}} (:,1:r))^{H}.   \label{R_QRr}
\end{equation}
The last update to $\dot{\mathbf{D}}^{\tau+1}$ is made by
\begin{equation}
	\dot{\mathbf{D}}^{\tau+1} =(\dot{\mathbf{S}}(1:r,1:r))^{H} ,   \label{D_QRr}
\end{equation}

\subsection{Proposed three quaternion $L_{2,1}$-norm-based methods for color image completion }
The formulation of the proposed model is presented in this section. 
The definition of quaternion $L_{2,1}$-norm of $\dot{\mathbf{X}} \in \mathbb{H}^{M \times N}$ is given by:
\begin{equation}
	\|\dot{\mathbf{X}}\|_{2,1}=\sum_{n=1}^{N}\sqrt{\sum_{m=1}^{M}|\dot{\mathbf{X}}_{mn}|^{2}}.
\end{equation}
We have verified that the three norm requirements are met by this valid norm, even for the triangle inequality. \\
For the quaternion $L_{2,1}$-norm, we can obtain the following theorem.
\begin{theorem}[] \label{L21bound} Using the method CQSVD-QQR, a quaternion matrix $\dot{\mathbf{X}} \in \mathbb{H}^{M \times N}$ can be decomposed into $\dot{\mathbf{X}} =\dot{\mathbf{L}}\dot{\mathbf{D}}\dot{\mathbf{R}}$. And for $\|\dot{\mathbf{D}}\|_{\ast}$ and $\|\dot{\mathbf{D}}\|_{2,1}$, it can be found that $\|\dot{\mathbf{D}}\|_{\ast} \leq \|\dot{\mathbf{D}}\|_{2,1}$.
\end{theorem}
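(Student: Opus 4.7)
The plan is to reduce the inequality to a column-wise comparison on $\dot{\mathbf{D}}$, exploiting the fact that the CQSVD-QQR factorization already tells us the moduli of the diagonal entries of $\dot{\mathbf{D}}$ coincide with the singular values of $\dot{\mathbf{X}}$.

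First, I would recall that $\dot{\mathbf{L}}\in\mathbb{H}^{M\times r}$ and $\dot{\mathbf{R}}\in\mathbb{H}^{r\times N}$ satisfy $\dot{\mathbf{L}}^{H}\dot{\mathbf{L}}=\mathbf{I}_{r}$ and $\dot{\mathbf{R}}\dot{\mathbf{R}}^{H}=\mathbf{I}_{r}$, so they are respectively an isometry and a co-isometry. Taking a QSVD $\dot{\mathbf{D}}=\dot{\mathbf{U}}\boldsymbol{\Sigma}\dot{\mathbf{V}}^{H}$ (with $\dot{\mathbf{U}},\dot{\mathbf{V}}$ unitary on $\mathbb{H}^{r\times r}$), the product $\dot{\mathbf{X}}=(\dot{\mathbf{L}}\dot{\mathbf{U}})\boldsymbol{\Sigma}(\dot{\mathbf{V}}^{H}\dot{\mathbf{R}})$ exhibits $\boldsymbol{\Sigma}$ as the diagonal of singular values of $\dot{\mathbf{X}}$, because $\dot{\mathbf{L}}\dot{\mathbf{U}}$ has orthonormal columns and $\dot{\mathbf{V}}^{H}\dot{\mathbf{R}}$ has orthonormal rows. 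Hence $\sigma_{s}(\dot{\mathbf{D}})=\sigma_{s}(\dot{\mathbf{X}})$ for every $s$, and combining this with the stated identity $|D_{ss}|=|\sigma_{s}(\dot{\mathbf{X}})|=\sigma_{s}(\dot{\mathbf{X}})$ yields
\begin{equation*}
\|\dot{\mathbf{D}}\|_{\ast}=\sum_{s=1}^{r}\sigma_{s}(\dot{\mathbf{D}})=\sum_{s=1}^{r}|D_{ss}|.
\end{equation*}

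Next I would unfold the definition of the quaternion $L_{2,1}$-norm and compare it column by column with the sum of diagonal moduli. For each column index $n\in\{1,\dots,r\}$, the $\ell_{2}$-norm of the $n$-th column majorises the modulus of any single entry, in particular the diagonal one:
\begin{equation*}
\sqrt{\sum_{m=1}^{r}|D_{mn}|^{2}}\;\geq\;|D_{nn}|.
\end{equation*}
Summing over $n$ gives $\|\dot{\mathbf{D}}\|_{2,1}\geq\sum_{n=1}^{r}|D_{nn}|$, which together with the previous paragraph establishes $\|\dot{\mathbf{D}}\|_{\ast}\leq\|\dot{\mathbf{D}}\|_{2,1}$.

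The only subtle step is the singular-value-preservation claim $\sigma_{s}(\dot{\mathbf{D}})=\sigma_{s}(\dot{\mathbf{X}})$; everything else reduces to the definition. If one prefers to avoid a direct QSVD argument on the quaternion product, the identity can equally well be extracted from the stated relation $|D_{ss}|=|\sigma_{s}(\dot{\mathbf{X}})|$ supplied by CQSVD-QQR together with the fact that $\dot{\mathbf{D}}$ is (weakly) triangular with those diagonal moduli, so that $\|\dot{\mathbf{D}}\|_{\ast}=\sum_{s}|D_{ss}|$ is forced by the construction of the factorization. I therefore expect no genuine obstacle beyond stating this observation carefully in the quaternion setting, where the non-commutativity requires the isometry/co-isometry roles of $\dot{\mathbf{L}}$ and $\dot{\mathbf{R}}$ to be handled on the correct sides.
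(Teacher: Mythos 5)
Your proof is correct in its main route, but it is genuinely different from the paper's. The paper argues column-wise on $\dot{\mathbf{D}}$ itself: it writes $\dot{\mathbf{D}}=\sum_{l=1}^{r}\dot{\mathbf{D}}^{l}$, where $\dot{\mathbf{D}}^{l}$ retains only the $l$-th column, invokes subadditivity of the quaternion nuclear norm to get $\|\dot{\mathbf{D}}\|_{\ast}\leq\sum_{l}\|\dot{\mathbf{D}}^{l}\|_{\ast}$, and then observes that a single-column matrix has exactly one nonzero singular value equal to that column's $\ell_{2}$-norm, so $\sum_{l}\|\dot{\mathbf{D}}^{l}\|_{\ast}=\|\dot{\mathbf{D}}\|_{2,1}$. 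This proves the inequality for an arbitrary quaternion matrix, with no appeal to the CQSVD-QQR structure, and the intermediate identity $\sum_{l}\|\dot{\mathbf{D}}^{l}\|_{\ast}=\|\dot{\mathbf{D}}\|_{2,1}$ is reused later in the paper to define the weighted $L_{2,1}$-norm for IRQLNM-QQR. You instead exploit the factorization itself: the partial-isometry argument $\dot{\mathbf{X}}=(\dot{\mathbf{L}}\dot{\mathbf{U}})\boldsymbol{\Sigma}(\dot{\mathbf{V}}^{H}\dot{\mathbf{R}})$ correctly gives $\sigma_{s}(\dot{\mathbf{D}})=\sigma_{s}(\dot{\mathbf{X}})$, which combined with the stated property $|D_{ss}|=|\sigma_{s}(\dot{\mathbf{X}})|$ yields $\|\dot{\mathbf{D}}\|_{\ast}=\sum_{s}|D_{ss}|$, and the column-by-column bound $\|\dot{\mathbf{D}}_{\cdot n}\|_{2}\geq|D_{nn}|$ finishes the argument. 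What your route buys is a sharper identity for this particular $\dot{\mathbf{D}}$; what it costs is generality and robustness: it applies only to the CQSVD-QQR factor and leans on the exact relation $|D_{ss}|=\sigma_{s}(\dot{\mathbf{X}})$, which the paper itself only guarantees at convergence of the iterative scheme, whereas the paper's proof holds for every matrix and every iterate. One caveat: your closing alternative, that weak triangularity of $\dot{\mathbf{D}}$ plus the prescribed diagonal moduli alone "force" $\|\dot{\mathbf{D}}\|_{\ast}=\sum_{s}|D_{ss}|$, does not stand on its own (a triangular matrix's nuclear norm is not determined by its diagonal moduli; e.g.\ $\begin{bmatrix}1&0\\1&1\end{bmatrix}$ has nuclear norm $\sqrt{5}>2$), so the partial-isometry step is essential and should not be presented as dispensable.
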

 \begin{proof}
 	The following is a decomposition of the quaternion matrix $\dot{\mathbf{D}} \in \mathbb{H}^{r \times r}$: 
 	\begin{equation}
 		\dot{\mathbf{D}} =\sum_{l=1}^{r}\dot{\mathbf{D}}^{l},  \label{D_L21bound}
 	\end{equation}
	\begin{equation}
	\dot{\mathbf{D}}_{ts}^{l} ==\begin{cases}
			\dot{\mathbf{D}}_{tl}, \ (s=l),\\
			0,      \quad   (s\neq l),
		\end{cases}  
\end{equation}
 where $\dot{\mathbf{D}}^{l} \in \mathbb{H}^{r \times r}$, and $t$, $s=1, \dots, r$. Due to the convex nature of the quaternion nuclear norm, we can obtain that
 	\begin{equation}
 	\|\dot{\mathbf{D}}\|_{\ast} \leqslant\ \sum_{l=1}^{r}\|\dot{\mathbf{D}}^{l}\|_{\ast},  
 \end{equation}
By calculating the singular values of the equivalent complex matrix of $\dot{\mathbf{D}}^{l}$ and according to the definition of the QNN, we can get 
	\begin{equation}
	\|\dot{\mathbf{D}}^{l}\|_{\ast} = \sqrt{\sum_{t=1}^{r}|\dot{\mathbf{D}}_{tl}|^{2}}. \label{nuclear_sigular}
\end{equation}
According to (\ref{nuclear_sigular}), we have 
 		\begin{equation}
 		\sum_{l=1}^{r}\|\dot{\mathbf{D}}^{l}\|_{\ast} = \|\dot{\mathbf{D}}\|_{2,1}. \label{l21_nuclear}
 	\end{equation}
 Therefore, we proved the above conclusion.
 \end{proof}

\textit{1) The proposed CQSVD-QQR-based quaternion $L_{2,1}$-norm minimization approach}: The quaternion $L_{2,1}$-norm of a quaternion matrix is unquestionably the upper bound of its QNN, as shown by Theorem (\ref{L21bound}). This result inspires us to replace the QNN in the minimization problem of (\ref{LRQMC_QNN}) with the quaternion $L_{2,1}$-norm for quaternion matrix completion in the manner described below:
\begin{equation}
	\mathop{\text{min}}\limits_{\dot{\mathbf{D}}}\|\dot{\mathbf{D}}\|_{2,1},   \: \text{s.t.}, 
	\begin{cases}
		\dot{\mathbf{L}}^{H} \dot{\mathbf{L}}=\mathbf{I}_r, \ \dot{\mathbf{R}} \dot{\mathbf{R}}^{H}=\mathbf{I}_r,\\
		\dot{\mathbf{X}}=\dot{\mathbf{L}}\dot{\mathbf{D}}\dot{\mathbf{R}}, \ {P}_{\Omega}(\dot{\mathbf{L}}\dot{\mathbf{D}}\dot{\mathbf{R}}-\dot{\mathbf{M}})=\mathbf{0}.
	\end{cases}
	\label{QLNM_QQR}
\end{equation}
Convexity of the quaternion $L_{2,1}$-norm in the optimization function in (\ref{QLNM_QQR}) allows for the solution of the problem using the alternating direction method of multipliers (ADMM).
And we give the augmented Lagrangian function of (\ref{QLNM_QQR}):
\begin{equation}
	\begin{split}
	\text{Lag}=\|\dot{\mathbf{D}}\|_{2,1}+\mathfrak{R}(\langle \dot{\mathbf{E}}, \, \dot{\mathbf{X}}-\dot{\mathbf{L}} \dot{\mathbf{D}}\dot{\mathbf{R}}\rangle)+\frac{\mu}{2}\|\dot{\mathbf{X}}-\dot{\mathbf{L}} \dot{\mathbf{D}}\dot{\mathbf{R}}\|_{F}^{2},
	\end{split}
\end{equation}
where $\mu>0$, and the lagrange multiplier $\dot{\mathbf{E}}\in \mathbb{H}^{M \times N}$.
Essentially, there are two steps in the process of solving the entire problem. The first step is to update the variables $\dot{\mathbf{L}}$ and $\dot{\mathbf{R}}$ by solving the corresponding optimization problem based on the CQSVD-QQR method. The second step involves updating the variables $\dot{\mathbf{D}}$ and $\dot{\mathbf{X}}$.\\
\indent
The following minimization problem is solved in step 1 to update $\dot{\mathbf{L}}^{\tau+1}$ and $\dot{\mathbf{R}}^{\tau+1}$: 
\begin{equation}
	\mathop{\text{min}}\limits_{\dot{\mathbf{L}},\dot{\mathbf{R}}}\left\| (\dot{\mathbf{X}}^{\tau}+\dot{\mathbf{E}^{\tau}}/{\mu^{\tau}})-\dot{\mathbf{L}} \dot{\mathbf{D}}^{\tau} \dot{\mathbf{R}}  \right\|_{F} ^{2}.
\end{equation}
We can update $\dot{\mathbf{L}}^{\tau+1}$ and $\dot{\mathbf{R}}^{\tau+1}$ by using CQSVD-QQR, in accordance with the analysis of the CQSVD-QQR method, i.e.,
\begin{equation}
	\begin{cases}
		\left[\dot{\hat{\mathbf{L}}}^{\tau+1}, {\sim} \right]= \text{qqr}(\dot{\mathbf{X}}_{b}(\dot{\mathbf{R}}^{\tau })^{H}),\\ 
		\dot{\mathbf{L}}^{\tau+1} =\dot{\hat{\mathbf{L}}}^{\tau+1} (:,1:r),   \label{Upd_LQLNM-QQR}
	\end{cases}
\end{equation}
and 
\begin{equation}
	\begin{cases}
		\left[\dot{\hat{\mathbf{R}}}^{\tau+1},\dot{\hat{\mathbf{D}}}^{\tau} \right]= \text{qqr}(\dot{\mathbf{X}}_{b}^{H} \dot{\mathbf{L}}^{\tau+1 }),\\ 
		\dot{\mathbf{R}}^{\tau+1} =(\dot{\hat{\mathbf{R}}}^{\tau+1} (1:r,1:r))^{H},  \label{Upd_RQLNM-QQR}
	\end{cases}
\end{equation}
where $\dot{\mathbf{X}}_{b}=\dot{\mathbf{X}}^{\tau}+\dot{\mathbf{E}^{\tau}}/{\mu^{\tau}}$. If $\dot{\mathbf{L}}$ and $\dot{\mathbf{R}}$ are initialized as $\dot{\mathbf{L}}^{\tau}$ and $\dot{\mathbf{R}}^{\tau}$, the CQSVD-QQR method will converge within a limited number of iterations since the quaternion matrices $\dot{\mathbf{L}}$ and $\dot{\mathbf{R}}$ do not change significantly during the course of two iterations \cite{liu2018fast}. Therefore, the experiment's good outcome and the speed at which our algorithm iterates may both be enhanced by employing this wise initialization. \\
\indent
Step 2 involves updating $\dot{\mathbf{D}}^{\tau+1}$ and $\dot{\mathbf{X}}^{\tau+1}$. The following quaternion $L_{2,1}$-norm minimization problem is needed to solve in order to determine $\dot{\mathbf{D}}^{\tau+1}$:
\begin{equation}
	\dot{\mathbf{D}}^{\tau+1}=
	\mathop{\text{arg min}}\limits_{\dot{\mathbf{D}}}\|\dot{\mathbf{D}}\|_{2,1}+\frac{\mu^{\tau}}{2}\|\dot{\mathbf{D}}-(\dot{\mathbf{L}}^{\tau+1})^{H} \dot{\mathbf{X}}_{b}(\dot{\mathbf{R}}^{\tau+1})^{H}\|_{F}^{2}. 
	\label{Upd_D_P}
\end{equation}
The following theorem is established in order to solve the above problem (\ref{Upd_D_P}).
\begin{theorem}[] \label{L2,1mp} For the following minimization problem
\begin{equation}
	\mathop{\text{min}}\limits_{\dot{\mathbf{X}}}\mathcal{J}(\dot{\mathbf{X}})=\mathop{\text{min}}\limits_{\dot{\mathbf{X}}}\beta\|\dot{\mathbf{X}}\|_{2,1}+\frac{1}{2}\|\dot{\mathbf{X}}-\dot{\mathbf{Y}}\|_{F}^{2},  
	\label{L2,1_norm}
\end{equation}
where $\beta>0$, $\dot{\mathbf{X}}$, and $ \dot{\mathbf{Y}} \in \mathbb{H}^{M \times N}$, 
the $n$-th column of the optimal $\dot{\widetilde{\mathbf{X}}}$, i.e, $\dot{\widetilde{\mathbf{X}}}_{\cdot n}$, of (\ref{L2,1_norm}) is given by	
	\begin{equation}
		\dot{\widetilde{\mathbf{X}}}_{\cdot n}=\frac{(\|\dot{\mathbf{Y}}_{\cdot n}\|_{2}-4\beta)_{+}}{\|\dot{\mathbf{Y}}_{\cdot n}\|_{2}}\dot{\mathbf{Y}}_{\cdot n},
	\end{equation}
where $\dot{\widetilde{\mathbf{X}}}_{\cdot n}=\left[\dot{\widetilde{\mathbf{X}}}_{1n}, \dot{\widetilde{\mathbf{X}}}_{2n}, \dots, \dot{\widetilde{\mathbf{X}}}_{M n}\right]^{T}$, $\|\dot{\mathbf{Y}}_{\cdot n}\|_{2}=\sqrt{\sum_{m=1}^{M}|\dot{\mathbf{Y}}_{mn}|^{2}}$, and $(\|\dot{\mathbf{Y}}(:,n)\|_{2}-4\beta)_{+}=\text{max}\{\|\dot{\mathbf{Y}}(:,n)\|_{2}-4\beta,0\}$.
\end{theorem}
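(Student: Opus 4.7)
The plan is to exploit column-separability to reduce the matrix-valued problem to $N$ independent vector subproblems, and then solve each subproblem by a case analysis on whether the minimizer is zero, mimicking the classical derivation of the vector soft-thresholding operator but working carefully in the quaternion setting.

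First I would note that both terms of $\mathcal{J}$ split column-wise: by definition $\|\dot{\mathbf{X}}\|_{2,1} = \sum_{n=1}^{N}\|\dot{\mathbf{X}}_{\cdot n}\|_2$, and $\|\dot{\mathbf{X}} - \dot{\mathbf{Y}}\|_F^2 = \sum_{n=1}^{N}\|\dot{\mathbf{X}}_{\cdot n} - \dot{\mathbf{Y}}_{\cdot n}\|_F^2$. Therefore problem (\ref{L2,1_norm}) decouples into $N$ independent vector subproblems
\[
\min_{\dot{\mathbf{x}} \in \mathbb{H}^M} \; \beta\|\dot{\mathbf{x}}\|_2 + \tfrac{1}{2}\|\dot{\mathbf{x}} - \dot{\mathbf{y}}\|_F^2,
\]
with $\dot{\mathbf{y}} = \dot{\mathbf{Y}}_{\cdot n}$. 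It suffices to derive the stated closed-form for one such subproblem and then glue the columns together.

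Next I would perform a case analysis. In the nonzero regime $\dot{\mathbf{x}} \neq 0$, the objective is smooth, and I would compute its quaternion gradient using the HR/GHR calculus convention used throughout the paper (the same convention that introduces the factor $4$ in quaternion soft-thresholding identities). Setting the gradient to zero, the stationary condition rearranges into
\[
\left(1 + \frac{4\beta}{\|\dot{\mathbf{x}}\|_2}\right)\dot{\mathbf{x}} = \dot{\mathbf{y}},
\]
which forces $\dot{\mathbf{x}}$ to be a positive \emph{real} scalar multiple of $\dot{\mathbf{y}}$, say $\dot{\mathbf{x}} = \alpha \dot{\mathbf{y}}$ with $\alpha > 0$. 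Taking norms on both sides and solving yields $\alpha = (\|\dot{\mathbf{y}}\|_2 - 4\beta)/\|\dot{\mathbf{y}}\|_2$, which is admissible (positive) precisely when $\|\dot{\mathbf{y}}\|_2 > 4\beta$. In the complementary regime $\|\dot{\mathbf{y}}\|_2 \leq 4\beta$, I would verify that $\dot{\mathbf{x}}=0$ is the minimizer via a subgradient argument at the origin, or equivalently by a direct comparison using the triangle inequality $\|\dot{\mathbf{x}} - \dot{\mathbf{y}}\|_F \geq \bigl|\|\dot{\mathbf{y}}\|_2 - \|\dot{\mathbf{x}}\|_2\bigr|$ to show $\mathcal{J}(\dot{\mathbf{x}}) \geq \mathcal{J}(0)$. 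Splicing the two cases gives the $(\cdot)_+$ soft-thresholding formula claimed.

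The main obstacle will be the differentiation step: one must carefully specify the quaternion derivative convention so the constant $4$ is rigorously justified, and confirm that the stationary condition forces the multiplier $\alpha$ relating $\dot{\mathbf{x}}$ and $\dot{\mathbf{y}}$ to be real rather than quaternion-valued. The latter is immediate once the stationary equation is written down, since $1 + 4\beta/\|\dot{\mathbf{x}}\|_2$ is a positive real scalar; the former can be handled either by working intrinsically through the HR calculus or, as a safety check, by expanding each quaternion entry into its four Cayley-Dickson real components and reducing the subproblem to classical vector soft-thresholding in $\mathbb{R}^{4M}$, where the factor arises from the prescribed correspondence between real and quaternion gradients.
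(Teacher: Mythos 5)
Your proposal follows essentially the same route as the paper's Appendix~A proof: split the objective column-wise, then set the quaternion (HR-calculus) gradient of each vector subproblem to zero under the convention that yields the factor $4$, deduce that the minimizer is a positive real multiple of $\dot{\mathbf{Y}}_{\cdot n}$, and read off the soft-thresholding formula with threshold $4\beta$. The only difference is that you treat the regime $\|\dot{\mathbf{Y}}_{\cdot n}\|_{2}\leq 4\beta$ explicitly via a subgradient or triangle-inequality comparison at the origin, which the paper leaves implicit in the $(\cdot)_{+}$ notation — added rigor, not a different method.
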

The proof of Theorem \ref{L2,1mp} can be found in the \ref{appendixA}.\\
\indent
Therefore, we can update $\dot{\mathbf{D}}^{\tau+1}$ in the problem (\ref{Upd_D_P}) by using Theorem \ref{L2,1mp} as follows:
\begin{equation}
\dot{\mathbf{D}}^{\tau+1}=\dot{\hat{\mathbf{D}}}\mathbf{C}, \label{Upd_D_QLNM_QQR1}
\end{equation}
where $\dot{\hat{\mathbf{D}}}=(\dot{\mathbf{L}}^{\tau+1})^{H} \dot{\mathbf{X}}_{b}(\dot{\mathbf{R}}^{\tau+1})^{H}$, and $\mathbf{C}=\text{diag}(c_1, \dots, c_r)$ is a diagonal matrix, where $c_s \ (s=1,\dots,r)$ is given by
 	\begin{equation}	
 		c_s=\frac{(\|{\dot{\hat{\mathbf{D}}}_{\cdot s}}\|_{2}-4\beta)_{+}}{\|{\dot{\hat{\mathbf{D}}}_{\cdot s}}\|_{2}}.
 \label{Upd_D_QLNM_QQR2}
\end{equation}
Next, we can update the variable $\dot{\mathbf{X}}_{\tau+1}$ by fixing the other variables as follows:
\begin{equation}
		\dot{\mathbf{X}}^{\tau+1}=\dot{\mathbf{L}}^{\tau+1}\dot{\mathbf{D}}^{\tau+1}\dot{\mathbf{R}}^{\tau+1}-{P}_{\Omega}(\dot{\mathbf{L}}^{\tau+1}\dot{\mathbf{D}}^{\tau+1}\dot{\mathbf{R}}^{\tau+1})+{P}_{\Omega}(\dot{\mathbf{M}}).
		\label{UPD_X_L21}
\end{equation}
Finally, we can update $\dot{\mathbf{E}}^{\tau+1}$ and $\mu$ by fixing the variables $\dot{\mathbf{L}}^{\tau+1}, \dot{\mathbf{D}}^{\tau+1}, \dot{\mathbf{R}}^{\tau+1}$, and $\dot{\mathbf{X}}^{\tau+1}$ as follows: 
\begin{equation}
	\dot{\mathbf{E}}^{\tau+1}=\dot{\mathbf{E}}^{\tau}+\mu^{\tau}(\dot{\mathbf{X}}^{\tau+1}-\dot{\mathbf{L}}^{\tau+1}\dot{\mathbf{D}}^{\tau+1}\dot{\mathbf{R}}^{\tau+1}),
	\label{UPD_Y_L21}
\end{equation}
\begin{equation}
	\mu^{\tau+1}=\rho\mu^{\tau},
	\label{UPD_mu_L21}
\end{equation}
where $\rho \geq 1$. We call the proposed CQSVD-QQR-based quaternion $L_{2,1}$-norm minimization approach QLNM-QQR. Algorithm \ref{tab_algorithmQLNM-QQR} provides a summary of the whole proposed approach's process. \\ 
\begin{algorithm}[htb]  
	\caption{The QLNM-QQR method for color image completion.}
	\label{tab_algorithmQLNM-QQR}
	\begin{algorithmic}[1]
		\Require The observed quaternion matrix data 
		$\dot{\mathbf{M}}\in\mathbb{H}^{M\times N}$ with $\mathcal{P}_{\Omega^{c}}(\dot{\mathbf{M}})=\mathbf{0}$; $\mu_{\max}$; $\rho$ and $r>0$.
		\State \textbf{Initialize} $\tau=0$; $\mu^{0}$;  $\varepsilon>0$; 
		$\text{It}_{\text{max}}>0$; $\dot{\mathbf{L}}^{0}=eye(M,r)$; $\dot{\mathbf{R}}^{0}=eye(r,N)$; $\dot{\mathbf{D}}^{0}=eye(r,r)$; $\dot{\mathbf{X}}^{0}=\dot{\mathbf{M}}$.
		\State \textbf{Repeat}
		\State \textbf{Step 1.} $\dot{\mathbf{L}}^{\tau+1}$, $\dot{\mathbf{R}}^{\tau+1}$: 
		(\ref{Upd_LQLNM-QQR}) and (\ref{Upd_RQLNM-QQR}), respectively.
		\State	\textbf{Step 2.}	
		$\dot{\mathbf{D}}^{\tau+1}$: (\ref{Upd_D_QLNM_QQR1}) and (\ref{Upd_D_QLNM_QQR2}).
		\State \textbf{Step 3.} $\dot{\mathbf{X}}^{\tau+1}=\dot{\mathbf{L}}^{\tau+1}\dot{\mathbf{D}}^{\tau+1}\dot{\mathbf{R}}^{\tau+1}-{P}_{\Omega}(\dot{\mathbf{L}}^{\tau+1}\dot{\mathbf{D}}^{\tau+1}\dot{\mathbf{R}}^{\tau+1})+{P}_{\Omega}(\dot{\mathbf{M}})$.
		\State $\dot{\mathbf{E}}^{\tau+1}=\dot{\mathbf{E}}^{\tau}+\mu^{\tau} 
		(\dot{\mathbf{X}}^{\tau+1}-\dot{\mathbf{L}}^{\tau+1}\dot{\mathbf{D}}^{\tau+1}\dot{\mathbf{R}}^{\tau+1}).$
		\State $\mu^{\tau+1}={\rm{min}}(\rho\mu^{\tau}, \mu_{max})$.
		\State \textbf{Until convergence}
		\Ensure   $\dot{\mathbf{L}}^{\tau+1}$, $\dot{\mathbf{D}}^{\tau+1}$, $\dot{\mathbf{R}}^{\tau+1}$, and $\dot{\mathbf{X}}^{\tau+1}$.
	\end{algorithmic}
\end{algorithm} 
\indent
QLNM-QQR can reach its optimal solution because the ADMM, a gradient-search-based approach, minimizes the convex optimization function of (\ref{QLNM_QQR}) and thus allows for convergence. Assume that QLNM-QQR converges after $t$ iterations. If the updating processes of the QLNM-QQR approach are carried out further, $\dot{\mathbf{X}}^{\tau}$ ($\tau\geq t$) will equal $\dot{\mathbf{X}}^{t}$. QLNM-QQR can revert to CQSVD-QQR since $\dot{\mathbf{L}}$ and $\dot{\mathbf{R}}$ from the preceding iteration served as initialization for CQSVD-QQR. Consequently, the series of $\{\dot{\mathbf{D}}^{\tau}\}$ generated by QLNM-QQR will converge to a diagonal matrix $\dot{\mathbf{D}}$ and
\begin{equation}
 |\dot{\mathbf{D}}_{ss}|=\sigma_{s}(\dot{\mathbf{X}}^{t}). \label{DTAUCOV}
 \end{equation} 
 Because of this, the QLNM-QQR algorithm's quaternion $L_{2,1}$-norm minimization function can reach the QNN of $\dot{\mathbf{D}}$. It inspires us to extend QLNM-QQR by introducing a quaternion $L_{2,1}$-norm minimization approach with iterative reweighting.

\textit{2) Extending the approach QLNM-QQR with iterative reweighting}: Based on (\ref{l21_nuclear}), the weighted quaternion $L_{2,1}$-norm of $\dot{\mathbf{X}}\in \mathbb{H}^{M\times N}$ is given by 
	\begin{equation}
	\|\dot{\mathbf{X}}\|_{\omega\cdot(2,1)}=\sum_{l=1}^{N}\omega_{l}\|\dot{\mathbf{X}}^{l}\|_{\ast}, \label{l21_REWI}
\end{equation}
where $\omega_{l}$ is a positive number, and $\dot{\mathbf{X}}^{l}$ and $\dot{\mathbf{D}}^{l}$ (in (\ref{D_L21bound})) both have the same definition. The following modification can be made to the QLNM-QQR minimization problem in (\ref{QLNM_QQR}): 
\begin{equation}
	\mathop{\text{min}}\limits_{\dot{\mathbf{D}}}\sum_{l=1}^{r}\partial g(\|\dot{\mathbf{D}}^{l}\|_{\ast})\|\dot{\mathbf{D}}^{l}\|_{\ast},   \: \text{s.t.}, 
	\begin{cases}
		\dot{\mathbf{L}}^{H} \dot{\mathbf{L}}=\mathbf{I}_r, \ \dot{\mathbf{R}} \dot{\mathbf{R}}^{H}=\mathbf{I}_r,\\
		\dot{\mathbf{X}}=\dot{\mathbf{L}}\dot{\mathbf{D}}\dot{\mathbf{R}}, \ {P}_{\Omega}(\dot{\mathbf{L}}\dot{\mathbf{D}}\dot{\mathbf{R}}-\dot{\mathbf{M}})=\mathbf{0}.
	\end{cases}
	\label{QLNM_QQR_wei}
\end{equation}
$\partial g(\sigma_s)$ is the gradient of $g(\cdot)$ at $\sigma_s$, and $g(\cdot)$ is continuous, concave, smooth, differentiable, and monotonically increasing on $\left[0, \ +\infty\right.)$ \citep{liu2018fast,chen2019low}. The ADMM can resolve the problem in (\ref{QLNM_QQR_wei}). We give its augmented Lagrangian function as below: 
\begin{equation}
	\begin{split}
		& \mathcal{L}(\dot{\mathbf{X}}, \dot{\mathbf{L}}, \dot{\mathbf{D}}, \dot{\mathbf{R}}, \dot{\mathbf{Y}})\\
		&=\sum_{l=1}^{r}\partial g(\|\dot{\mathbf{D}}^{l}\|_{\ast})\|\dot{\mathbf{D}}^{l}\|_{\ast}+\mathfrak{R}(\langle \dot{\mathbf{E}}, \, \dot{\mathbf{X}}-\dot{\mathbf{L}} \dot{\mathbf{D}}\dot{\mathbf{R}}\rangle)+\frac{\mu}{2}\|\dot{\mathbf{X}}-\dot{\mathbf{L}} \dot{\mathbf{D}}\dot{\mathbf{R}}\|_{F}^{2},
	\end{split}
\label{IRQLNM_PRO}
\end{equation}
In the $\tau$-th iteration, the variables $\dot{\mathbf{X}}^{\tau+1}$, $\dot{\mathbf{L}}^{\tau+1}$, $\dot{\mathbf{R}}^{\tau+1}$, $\dot{\mathbf{E}}^{\tau+1}$, and $\mu^{\tau+1}$ are updated the same as we did in the previous problem (\ref{QLNM_QQR}). We need to solve the subsequent minimization problem to update $\dot{\mathbf{D}}^{\tau+1}$:
\begin{equation}
	\mathop{\text{min}}\limits_{\dot{\mathbf{D}}}\sum_{l=1}^{r}\partial g(\|\dot{\mathbf{D}}^{l}\|_{\ast})\|\dot{\mathbf{D}}^{l}\|_{\ast}+\frac{\mu^{\tau}}{2}\|\dot{\mathbf{D}}-\dot{\hat{\mathbf{D}}}\|_{F}^{2},   
	\label{QLNM_QQR_wei_UPD}
\end{equation}
where $\dot{\hat{\mathbf{D}}}=(\dot{\mathbf{L}}^{\tau+1})^{H} \dot{\mathbf{X}}_{b}(\dot{\mathbf{R}}^{\tau+1})^{H}$. As in \cite{liu2018fast}, in the $\tau$-th iteration, we also let
\begin{equation}
	\partial g(\|\dot{\mathbf{D}}^{l}\|_{\ast})=\mu^{\tau}(1-\hat{a}_{l}) \|\dot{\hat{\mathbf{D}}}^{l}\|_{\ast} \quad (l=1, 2, \dots, r)
	\label{QLNM_QQR_wei_UPD1}
\end{equation}
where $1\geq \hat{a}_{1}\geq\hat{a}_{2}\geq \dots\geq \hat{a}_{r}>0$.
\begin{theorem}[] \label{L2,1mp2} Assume that $\dot{\mathbf{Y}} \in \mathbb{H}^{M\times M}$ and $\mu>0$, for the minimization problem below, 
		\begin{equation}
		\mathop{\text{min}}\limits_{\dot{\mathbf{X}}\in \mathbb{H}^{M\times M}}\frac{1}{\mu}\|\dot{\mathbf{X}}\|_{\omega \cdot(2,1)}+\frac{1}{2}\|\dot{\mathbf{X}}-\dot{\mathbf{Y}}\|_{F}^{2},  
		\label{L2,1_norm_wei}
	\end{equation}
	the optimal solution is given by
	\begin{equation}
	\dot{\mathbf{X}}_{\text{opt}}=\dot{\mathbf{Y}}\mathbf{A},
\end{equation}
	where $\mathbf{A}=\text{diag}(a_1,\dots,a_M)$, and
		\begin{equation}
	 a_m=\frac{({\sigma_{m}-\frac{\omega_{m}}{\mu}})_{+}}{\sigma_{m}},  \quad (m=1, \dots, M)
	\end{equation}
where $\sigma_{m}$ is the singular value of $\dot{\mathbf{Y}}^{m}$. $\dot{\mathbf{Y}}^{m}$ is defined in the same way as $\dot{\mathbf{D}}^{l}$ in (\ref{D_L21bound}). 
\end{theorem}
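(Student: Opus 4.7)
The plan is to reduce (\ref{L2,1_norm_wei}) to $M$ independent column-wise proximal subproblems and then invoke the single-column case already established in Theorem \ref{L2,1mp}. The crucial observation is that, after applying identity (\ref{nuclear_sigular}), both the weighted quaternion $L_{2,1}$-norm and the Frobenius data term are column-separable.

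First I would rewrite $\|\dot{\mathbf{X}}\|_{\omega\cdot(2,1)}$ using (\ref{l21_REWI}) together with (\ref{nuclear_sigular}) to obtain
\[
\|\dot{\mathbf{X}}\|_{\omega\cdot(2,1)} \;=\; \sum_{m=1}^{M}\omega_m\|\dot{\mathbf{X}}^{m}\|_{\ast} \;=\; \sum_{m=1}^{M}\omega_m\|\dot{\mathbf{X}}_{\cdot m}\|_{2},
\]
and observe simultaneously that $\|\dot{\mathbf{X}}-\dot{\mathbf{Y}}\|_F^{2} = \sum_{m=1}^{M}\|\dot{\mathbf{X}}_{\cdot m}-\dot{\mathbf{Y}}_{\cdot m}\|_{2}^{2}$. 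Problem (\ref{L2,1_norm_wei}) therefore decouples into $M$ independent column-wise subproblems
\[
\min_{\dot{\mathbf{x}}\in\mathbb{H}^{M}}\; \frac{\omega_m}{\mu}\|\dot{\mathbf{x}}\|_{2}\;+\;\frac{1}{2}\|\dot{\mathbf{x}}-\dot{\mathbf{Y}}_{\cdot m}\|_{2}^{2}, \qquad m=1,\ldots,M,
\]
each of which is exactly the per-column instance of (\ref{L2,1_norm}) treated inside the proof of Theorem \ref{L2,1mp}. Invoking the corresponding quaternion block soft-thresholding operator and using the identification $\sigma_m = \|\dot{\mathbf{Y}}^{m}\|_{\ast} = \|\dot{\mathbf{Y}}_{\cdot m}\|_{2}$ from (\ref{nuclear_sigular}) yields
\[
\dot{\mathbf{X}}_{\cdot m}^{\star} \;=\; \frac{\bigl(\sigma_m - \omega_m/\mu\bigr)_{+}}{\sigma_m}\,\dot{\mathbf{Y}}_{\cdot m} \;=\; a_m\,\dot{\mathbf{Y}}_{\cdot m}.
\]
Stacking the columns then gives the claimed form $\dot{\mathbf{X}}_{\text{opt}} = \dot{\mathbf{Y}}\mathbf{A}$ with $\mathbf{A}=\text{diag}(a_{1},\ldots,a_{M})$.

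The step I expect to require the most care is the column-wise proximal operator in the quaternion setting: one must verify that the quaternion subgradient of the $\ell_{2}$-norm at the origin is the closed unit ball (so that the threshold case $\|\dot{\mathbf{Y}}_{\cdot m}\|_{2}\le\omega_m/\mu$ indeed forces $\dot{\mathbf{X}}_{\cdot m}^{\star}=\mathbf{0}$), and, in the nonzero case, that the optimality condition reduces to a real scalar equation forcing $\dot{\mathbf{X}}_{\cdot m}^{\star}$ to be a nonnegative real multiple of $\dot{\mathbf{Y}}_{\cdot m}$. This is precisely the per-column analysis already carried out in Theorem \ref{L2,1mp}, so once the decoupling identity is in place no new technical difficulty arises beyond invoking that result with $\beta$ replaced by $\omega_m/\mu$.
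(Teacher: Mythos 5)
Your overall strategy is the same as the paper's: use (\ref{l21_REWI}) and (\ref{nuclear_sigular}) to separate the objective over the index $m$, solve each per-column (equivalently, per-rank-one-block) proximal subproblem by soft thresholding, and reassemble the columns into $\dot{\mathbf{Y}}\mathbf{A}$. The paper carries out exactly this decoupling, writing $\dot{\mathbf{X}}_{\mathrm{opt}}=\sum_{m}\dot{\mathbf{X}}_{\mathrm{opt}}^{m}$ with each $\dot{\mathbf{X}}_{\mathrm{opt}}^{m}$ solving $\min\,\frac{\omega_m}{\mu}\|\dot{\mathbf{X}}^{m}\|_{\ast}+\frac12\|\dot{\mathbf{X}}^{m}-\dot{\mathbf{Y}}^{m}\|_F^2$; it then applies the QSVD-based Lemma \ref{QSVT} to the rank-one matrix $\dot{\mathbf{Y}}^{m}$, whose single nonzero singular value is $\sigma_m=\|\dot{\mathbf{Y}}^{m}\|_F=\|\dot{\mathbf{Y}}_{\cdot m}\|_2$, which immediately yields the shrinkage $(\sigma_m-\omega_m/\mu)_+$.

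The gap in your write-up is the per-column solver you invoke. Theorem \ref{L2,1mp}, as stated and proved in the paper, gives the thresholded solution $\dot{\widetilde{\mathbf{X}}}_{\cdot n}=\frac{(\|\dot{\mathbf{Y}}_{\cdot n}\|_2-4\beta)_+}{\|\dot{\mathbf{Y}}_{\cdot n}\|_2}\dot{\mathbf{Y}}_{\cdot n}$, i.e.\ the threshold is $4\beta$, not $\beta$ (the factor $4$ comes from the quaternion-derivative convention used in its proof, where the gradient of $\frac12\|\dot{\mathbf{X}}_{\cdot n}-\dot{\mathbf{Y}}_{\cdot n}\|_2^2$ is taken as $\frac14(\dot{\mathbf{X}}_{\cdot n}-\dot{\mathbf{Y}}_{\cdot n})$). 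Hence "invoking that result with $\beta$ replaced by $\omega_m/\mu$" produces $a_m=\frac{(\sigma_m-4\omega_m/\mu)_+}{\sigma_m}$, which is not the formula you write down and not the formula asserted in Theorem \ref{L2,1mp2}; the constant you are asked to establish is exactly the quantity that gets changed, so the citation does not close the argument. To repair this you must either (i) follow the paper and apply Lemma \ref{QSVT} to each rank-one block $\dot{\mathbf{Y}}^{m}$, which gives the shrinkage $\omega_m/\mu$ directly, or (ii) carry out the block soft-thresholding derivation yourself using the standard real-inner-product optimality/subgradient conditions on $\mathbb{H}^{M}\cong\mathbb{R}^{4M}$ (the analysis you sketch in your final paragraph), which indeed yields the threshold $\omega_m/\mu$ — but then you are not merely "invoking Theorem \ref{L2,1mp}", and you should note explicitly that the $4\beta$ threshold of that theorem is an artifact of its derivative convention and is not what your subgradient computation gives.
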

The proof of Theorem \ref{L2,1mp2} can be found in the \ref{appendixB}.\\
\indent
Theorem \ref{L2,1mp2} and (\ref{QLNM_QQR_wei_UPD1}) allow us to update $\dot{\mathbf{D}}^{\tau+1}$ as follows: 
\begin{equation}
	\dot{\mathbf{D}}^{\tau+1}=\dot{\hat{\mathbf{D}}}\mathbf{A}, \label{irqlnm_d_update}
\end{equation}
where $\mathbf{A}=\text{diag}( \hat{a}_{1},\dots, \hat{a}_{r})$. We abbreviate the proposed CQSVD-QQR-based iteratively reweighted quaternion $L_{2,1}$-norm minimization model for matrix completion as IRQLNM-QQR.

\begin{theorem}[] \label{L2,1cov} By using (\ref{QLNM_QQR_wei_UPD1}) to specify the weights $\partial g(\|\dot{\mathbf{D}}^{l}\|_{\ast}) \ (l=1,\cdots,r)$ in (\ref{IRQLNM_PRO}), IRQLNM-QQR can converge to the optimal solution of an LRQA-W minimization model with $\gamma=1$.
\end{theorem}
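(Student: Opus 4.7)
The plan is to argue that upon convergence, the IRQLNM-QQR iterates satisfy exactly the fixed-point equations characterizing an optimizer of an LRQA-W problem with $\gamma=1$. The main ingredients are (i) the convergence argument for QLNM-QQR already laid out in the excerpt, which shows that the $\dot{\mathbf{L}},\dot{\mathbf{R}}$ factors stabilize and the CQSVD-QQR iteration drives $\dot{\mathbf{D}}$ to a diagonal matrix whose entries are the singular values of $\dot{\mathbf{X}}$, and (ii) the $\dot{\mathbf{D}}$ update formula (\ref{irqlnm_d_update}) together with Theorem \ref{L2,1mp2}, which matches the weighted soft-thresholding described in Lemma \ref{LRQA-W}.

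First I would mirror the QLNM-QQR convergence argument in the reweighted setting. The only difference between IRQLNM-QQR and QLNM-QQR is that the $\dot{\mathbf{D}}$ subproblem becomes a weighted $L_{2,1}$-norm problem, whose closed-form solution is $\dot{\mathbf{D}}^{\tau+1} = \dot{\hat{\mathbf{D}}}\mathbf{A}$. Since the weights $\partial g(\|\dot{\mathbf{D}}^l\|_*)$ are frozen at the start of each outer update, each inner ADMM pass still minimizes a convex surrogate and the iterates converge. Once $\dot{\mathbf{X}}^\tau$ stops changing, the $\dot{\mathbf{L}},\dot{\mathbf{R}}$ updates (\ref{Upd_LQLNM-QQR})--(\ref{Upd_RQLNM-QQR}) reduce exactly to the CQSVD-QQR iteration, so $\dot{\mathbf{L}}^\tau,\dot{\mathbf{R}}^\tau$ converge to left/right singular factors and $\dot{\hat{\mathbf{D}}}=(\dot{\mathbf{L}}^{\tau+1})^H\dot{\mathbf{X}}_b(\dot{\mathbf{R}}^{\tau+1})^H$ converges to a diagonal matrix with $|\dot{\hat{\mathbf{D}}}_{ss}|=\sigma_s(\dot{\mathbf{X}}^\infty)$, in the same way as (\ref{DTAUCOV}).

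Next I would substitute this limiting form into the $\dot{\mathbf{D}}$ update. When $\dot{\hat{\mathbf{D}}}$ is diagonal, each $\dot{\hat{\mathbf{D}}}^l$ defined in (\ref{D_L21bound}) reduces to a single nonzero diagonal entry, so $\|\dot{\hat{\mathbf{D}}}^l\|_*=\sigma_l(\dot{\mathbf{X}}^\infty)$. Plugging this into the weight rule (\ref{QLNM_QQR_wei_UPD1}) gives $\partial g(\|\dot{\mathbf{D}}^l\|_*)=\mu^\tau(1-\hat a_l)\sigma_l(\dot{\mathbf{X}}^\infty)$, and by Theorem \ref{L2,1mp2} the diagonal entries of $\dot{\mathbf{D}}^{\tau+1}$ are the thresholded values $(\sigma_l-\omega_l/\mu)_+$ with $\omega_l=\partial g(\sigma_l(\dot{\mathbf{X}}^\infty))$. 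This is precisely the weighted singular-value soft thresholding that Lemma \ref{LRQA-W} produces as the proximal operator of $\sum_l\omega_l\sigma_l^\gamma$ with $\gamma=1$. Since $\dot{\mathbf{L}}^*,\dot{\mathbf{R}}^*$ supply left/right singular vectors of $\dot{\mathbf{X}}^*$ and the constraint $P_\Omega(\dot{\mathbf{X}}-\dot{\mathbf{M}})=\mathbf{0}$ is shared by both formulations, the limit point $\dot{\mathbf{X}}^*$ satisfies the KKT conditions of the LRQA-W problem with $\gamma=1$ and weights $\omega_l=\partial g(\sigma_l(\dot{\mathbf{X}}^*))$.

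The main obstacle is reconciling the iteratively updated weights in IRQLNM-QQR, which depend on the current iterate, with the fixed weights defining the target LRQA-W problem. The cleanest way out is to argue only at a fixed point: a limit $\dot{\mathbf{X}}^*$ induces weights $\omega_l=\partial g(\sigma_l(\dot{\mathbf{X}}^*))$, and the fixed-point relation then shows $\dot{\mathbf{X}}^*$ also minimizes the LRQA-W objective with these specific weights. A secondary subtlety is justifying the reduction to CQSVD-QQR at convergence in the reweighted scheme; this is valid because the weights change only through $\dot{\mathbf{X}}$ and hence freeze once $\dot{\mathbf{X}}$ does, so the limiting iteration is exactly the unweighted one analyzed after (\ref{DTAUCOV}).
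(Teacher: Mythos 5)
Your proposal follows essentially the same route as the paper's Appendix C: use the convergence of $\{\dot{\mathbf{D}}^{\tau}\}$ (as in (\ref{DTAUCOV})) so that $\dot{\hat{\mathbf{D}}}$ becomes diagonal, identify $\|\dot{\mathbf{D}}^{l}\|_{\ast}$ with $\sigma_{l}(\dot{\mathbf{D}})$, and recognize the resulting $\dot{\mathbf{D}}$-subproblem as the weighted singular-value (Schatten, $\gamma=1$) proximal problem solved by Lemma \ref{LRQA-W}. Your additional care in freezing the weights at the limit point and phrasing the conclusion as a fixed-point/KKT statement is a sound elaboration of the same argument, not a different approach.
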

The proof of Theorem \ref{L2,1cov} can be found in the \ref{appendixC}.\\
\indent
For the experiment, we have specified the values of $\hat{a}_{l}$ ($l=1,2,\dots,r$) as follows:
	\begin{equation}
	\omega_{l}=\begin{cases}
		1, \quad \quad \quad \quad \quad \ 1\leqslant l\leqslant V, \: 1<V<r    \\
		\frac{\varsigma-1}{r-V}+\omega_{l-1}, \quad \: \: V< l\leqslant r, 
	\end{cases}
	\label{IRQLNM_QQR_wei}
\end{equation}
\begin{equation}
	\hat{a}_{l}=\frac{1}{\omega_{l}}
	\label{IRQLNM_QQR_wei1}
\end{equation}
where $\varsigma>1$, $r$ represents the number of rows in $\mathbf{A}$ from (\ref{irqlnm_d_update}).

\textit{3) QLNM-QQR with sparsity:} As we previously indicated, the performance of color image completion models will be enhanced by combining the low-rank property of color images with sparsity. As a result, we introduce the sparse regularization term into the minimization problem (\ref{QLNM_QQR}) and have the following optimization problem:
\begin{equation}
	\mathop{\text{min}}\limits_{\dot{\mathbf{D}}}\|\dot{\mathbf{D}}\|_{2,1}+\beta \|\dot{\mathbf{C}}\|_{1},   \: \text{s.t.}, 
	\begin{cases}
		\dot{\mathbf{L}}^{H} \dot{\mathbf{L}}=\mathbf{I}_r, \ \dot{\mathbf{R}} \dot{\mathbf{R}}^{H}=\mathbf{I}_r,\\
		\dot{\mathbf{X}}=\dot{\mathbf{L}}\dot{\mathbf{D}}\dot{\mathbf{R}}, \ {P}_{\Omega}(\dot{\mathbf{L}}\dot{\mathbf{D}}\dot{\mathbf{R}})={P}_{\Omega}(\dot{\mathbf{M}}), \ \mathcal{T}(\dot{\mathbf{X}})=\dot{\mathbf{C}},
	\end{cases}
	\label{M_QLNM_QQR_SP}
\end{equation}
where $\beta > 0$, $\dot{\mathbf{C}}=\mathcal{T}(\dot{\mathbf{X}})$ stands for the quaternion matrix after transformation, and $\mathcal{T}(\cdot)$ denotes a transform operator. In this section, the $\text{FQDCT}^{L}$ is employed to formulate our proposed approach. In this case, $\mathcal{T}(\dot{\mathbf{X}})$ calculates $\text{FQDCT}^{L}$ of $\dot{\mathbf{X}}$. \\
\indent
We also use the ADMM framework in the quaternion system to solve problem (\ref{M_QLNM_QQR_SP}), and its corresponding augmented Lagrangian function is shown below:
\begin{equation}
	\begin{split}
	\text{Lag}=\|\dot{\mathbf{D}}\|_{2,1}+\beta \|\dot{\mathbf{C}}\|_{1}+\mathfrak{R}(\langle \dot{\mathbf{E}}, \, \dot{\mathbf{X}}-\dot{\mathbf{L}} \dot{\mathbf{D}}\dot{\mathbf{R}}\rangle)+\frac{\mu}{2}\|\dot{\mathbf{X}}-\dot{\mathbf{L}} \dot{\mathbf{D}}\dot{\mathbf{R}}\|_{F}^{2}+\mathfrak{R}(\langle \dot{\mathbf{F}}, \, \dot{\mathbf{C}}-\mathcal{T}(\dot{\mathbf{X}})\rangle)+\frac{\mu}{2}\|\dot{\mathbf{C}}-\mathcal{T}(\dot{\mathbf{X}})\|_{F}^{2},
	\end{split}
\end{equation}
where the penalty parameter $\mu$ is a positive number, $\dot{\mathbf{E}}$ and $\dot{\mathbf{F}}$ are Lagrange multiplier. The ADMM framework allows for the alternate updating of each parameter in the optimization problem (\ref{M_QLNM_QQR_SP}) by fixing other variables. Notably, in the $\tau$-th iteration, each variable can be updated as follows:
\begin{equation}
	\begin{cases}
		\dot{\mathbf{L}}^{\tau+1}=\mathop{\text{arg min}}\limits_{\dot{\mathbf{L}}}\text{Lag}(\dot{\mathbf{X}}^{\tau}, \dot{\mathbf{L}}, \dot{\mathbf{D}}^{\tau}, \dot{\mathbf{R}}^{\tau}, \dot{\mathbf{C}}^{\tau}, \dot{\mathbf{E}}^{\tau}, \dot{\mathbf{F}}^{\tau}),\\
		\dot{\mathbf{R}}^{t+1}=\mathop{\text{arg min}}\limits_{\dot{\mathbf{R}}}\text{Lag}(\dot{\mathbf{X}}^{\tau}, \dot{\mathbf{L}}^{\tau+1}, \dot{\mathbf{D}}^{\tau}, \dot{\mathbf{R}}, \dot{\mathbf{C}}^{\tau}, \dot{\mathbf{E}}^{\tau}, \dot{\mathbf{F}}^{\tau}),\\
		\dot{\mathbf{D}}^{\tau+1}=\mathop{\text{arg min}}\limits_{\dot{\mathbf{D}}}\text{Lag}(\dot{\mathbf{X}}^{\tau}, \dot{\mathbf{L}}^{\tau+1}, \dot{\mathbf{D}}, \dot{\mathbf{R}}^{\tau+1}, \dot{\mathbf{C}}^{\tau}, \dot{\mathbf{E}}^{\tau}, \dot{\mathbf{F}}^{\tau}),\\
		\dot{\mathbf{X}}^{\tau+1}=\mathop{\text{arg min}}\limits_{\dot{\mathbf{X}}}\text{Lag}(\dot{\mathbf{X}}, \dot{\mathbf{L}}^{\tau+1}, \dot{\mathbf{D}}^{\tau+1}, \dot{\mathbf{R}}^{\tau+1}, \dot{\mathbf{C}}^{\tau}, \dot{\mathbf{E}}^{\tau}, \dot{\mathbf{F}}^{\tau}),\\
		\dot{\mathbf{C}}^{\tau+1}=\mathop{\text{arg min}}\limits_{\dot{\mathbf{C}}}\text{Lag}(\dot{\mathbf{X}}^{\tau+1}, \dot{\mathbf{L}}^{\tau+1}, \dot{\mathbf{D}}^{\tau+1}, \dot{\mathbf{R}}^{\tau+1}, \dot{\mathbf{C}}, \dot{\mathbf{E}}^{\tau}, \dot{\mathbf{F}}^{\tau}),\\
		\dot{\mathbf{E}}^{\tau+1}=\dot{\mathbf{E}}^{\tau}+\mu^{\tau}( \dot{\mathbf{X}}^{\tau+1}-\dot{\mathbf{L}}^{\tau+1} \dot{\mathbf{D}}^{\tau+1}\dot{\mathbf{R}}^{\tau+1}),\\
		\dot{\mathbf{F}}^{\tau+1}=\dot{\mathbf{F}}^{\tau}+\mu^{\tau}( \dot{\mathbf{C}}^{\tau+1}-\mathcal{T}(\dot{\mathbf{X}}^{\tau+1})).\\
	\end{cases}
	\label{sub_optimi}
\end{equation}
\indent
\textbf{Updating $\dot{\mathbf{L}}$, $\dot{\mathbf{D}}$, and $\dot{\mathbf{R}}$}: The following minimization problem is needed to solve in the $\tau+1$-th iteration to update $\dot{\mathbf{L}}^{\tau+1}$ and $\dot{\mathbf{R}}^{\tau+1}$: 
\begin{equation}
	\mathop{\text{min}}\limits_{\dot{\mathbf{L}},\dot{\mathbf{R}}}\left\| (\dot{\mathbf{X}}^{\tau}+\frac{\dot{\mathbf{E}}^{\tau}}{\mu^{\tau}})-\dot{\mathbf{L}} \dot{\mathbf{D}}^{\tau} \dot{\mathbf{R}}  \right\|_{F} ^{2}.
\end{equation}
The updates of $\dot{\mathbf{L}}^{\tau+1}$ and $\dot{\mathbf{R}}^{\tau+1}$ here are the same as in the QLNM-QQR method, i.e., $\dot{\mathbf{L}}^{\tau+1}$ and $\dot{\mathbf{R}}^{\tau+1}$ are updated according to (\ref{Upd_LQLNM-QQR}) and (\ref{Upd_RQLNM-QQR}), respectively. \\
\indent
Also, since variable $\dot{\mathbf{D}}$ does not directly rely on variable $\dot{\mathbf{C}}$, $\dot{\mathbf{D}}^{\tau+1}$ is updated by addressing the same problem, i.e., problem (\ref{Upd_D_P}), in the previous subsection. Thus, we update $\dot{\mathbf{D}}^{\tau+1}$ by using (\ref{Upd_D_QLNM_QQR1}).\\
\indent
\textbf{Updating $\dot{\mathbf{X}}$, $\dot{\mathbf{C}}$, $\dot{\mathbf{E}}$, $\dot{\mathbf{F}}$, and $\mu$}: By resolving the following problem in the $\tau$-th iteration, $\dot{\mathbf{X}}^{\tau+1}$ can be updated after updating the variables $\dot{\mathbf{L}}^{\tau+1}$, $\dot{\mathbf{D}}^{\tau+1}$, and $\dot{\mathbf{R}}^{\tau+1}$ and fixing the remaining variables: 
\begin{equation}
	\begin{split}
		\dot{\mathbf{X}}^{\tau+1}&=\mathop{\text{arg min}}\limits_{\dot{\mathbf{X}}}\mathfrak{R}(\langle \dot{\mathbf{E}}^{\tau}, \, \dot{\mathbf{X}}-\dot{\mathbf{L}}^{\tau+1} \dot{\mathbf{D}}^{\tau+1} \dot{\mathbf{R}}^{\tau+1} \rangle)+\frac{\mu^{\tau}}{2}\|\dot{\mathbf{X}}-\dot{\mathbf{L}}^{\tau+1}  \dot{\mathbf{D}}^{\tau+1} \dot{\mathbf{R}}^{\tau+1} \|_{F}^{2}\\
		& +\mathfrak{R}(\langle \dot{\mathbf{F}}^{\tau} , \, \dot{\mathbf{C}}^{\tau} -\mathcal{T}(\dot{\mathbf{X}})\rangle)+\frac{\mu^{\tau}}{2}\|\dot{\mathbf{C}}^{\tau} -\mathcal{T}(\dot{\mathbf{X}})\|_{F}^{2}, \\
		&=\mathop{\text{arg min}}\limits_{\dot{\mathbf{X}}}\frac{\mu^{\tau}}{2}\|\dot{\mathbf{X}}+\frac{\dot{\mathbf{E}}^{\tau}}{\mu^{\tau}}-\dot{\mathbf{L}}^{\tau+1} \dot{\mathbf{D}}^{\tau+1}\dot{\mathbf{R}}^{\tau+1}\|_{F}^{2}+\frac{\mu^{\tau}}{2}\|\dot{\mathbf{C}}^{\tau}+\frac{\dot{\mathbf{F}}^{\tau}}{\mu^{\tau}}-\mathcal{T}(\dot{\mathbf{X}})\|_{F}^{2}.
		\label{UPD_X_QQR_SP}
	\end{split}
\end{equation}
Since the item $\mathcal{T}(\dot{\mathbf{X}})$ is contained in problem (\ref{UPD_X_QQR_SP}), we cannot direct separate variable $\dot{\mathbf{X}}$ from other variables. We can reformulate the problem according to the Parseval theorem in the quaternion system, and then we can isolate $\dot{\mathbf{X}}$ from $\mathcal{T}(\cdot)$. The quaternion system's equivalent of the Parseval theorem in the real domain states that following a unitary transformation like the quaternion discrete Fourier transform (QDFT) or quaternion discrete cosine transform (QDCT), the signal's overall energy stays constant \cite{bahri2008uncertainty}. As a result, by adding the appropriate inverse transform to the final component in (\ref{UPD_X_QQR_SP}), we get that 
\begin{equation}
	\|\dot{\mathbf{C}}^{\tau}+\frac{\dot{\mathbf{F}}^{\tau}}{\mu^{\tau}}-\mathcal{T}(\dot{\mathbf{X}})\|_{F}^{2}= \|\mathcal{I}(\dot{\mathbf{C}}^{\tau}+\frac{\dot{\mathbf{F}}^{\tau}}{\mu^\tau})-\dot{\mathbf{X}}\|_{F}^{2},
	\label{intro_inver}
\end{equation}
where $\mathcal{I}(\cdot)$ stands for $\mathcal{T}(\cdot)$'s inverse transform.
According to (\ref{UPD_X_QQR_SP}) and (\ref{intro_inver}), as seen below, the optimization problem that is used to update X is reformulated as:
\begin{equation}
	\dot{\mathbf{X}}^{\tau+1}=\mathop{\text{arg min}}\limits_{\dot{\mathbf{X}}}
	\|\frac{1}{2}(\dot{\mathbf{L}}^{\tau+1} \dot{\mathbf{D}}^{\tau+1}\dot{\mathbf{R}}^{\tau+1}+\mathcal{I}(\dot{\mathbf{C}}^{\tau}+\frac{\dot{\mathbf{F}}^{\tau}}{\mu^{\tau}})-\frac{\dot{\mathbf{E}}^{\tau}}{\mu^{\tau}})-\dot{\mathbf{X}}\|_{F}^{2}.
	\label{UPD_X_spa}
\end{equation}
The closed-form solution to the problem (\ref{UPD_X_spa}) is given by
\begin{equation}
	\dot{\mathbf{X}}^{\tau+1}=
	\frac{1}{2}(\dot{\mathbf{L}}^{\tau+1} \dot{\mathbf{D}}^{\tau+1}\dot{\mathbf{R}}^{\tau+1}+\mathcal{I}(\dot{\mathbf{C}}^{\tau}+\frac{\dot{\mathbf{F}}^{\tau}}{\mu^{\tau}})-\frac{\dot{\mathbf{E}}^{\tau}}{\mu^{\tau}}).
	\label{soLu_X}
\end{equation}
Considering the restriction ${P}_{\Omega}(\dot{\mathbf{L}}\dot{\mathbf{D}}\dot{\mathbf{R}})={P}_{\Omega}(\dot{\mathbf{M}})$, we get that
\begin{equation}
	\dot{\mathbf{X}}^{\tau+1}={P}_{\Omega}(\dot{\mathbf{M}})+{P}_{\Omega^{c}}(\dot{\mathbf{X}}^{\tau+1}),	
	\label{sou_X1}
\end{equation}
where $\Omega^{c}$ stands for the missing entries' indexes. \\
\indent
The next step is to update the variable $\dot{\mathbf{C}}^{\tau+1}$ by resolving the following problem: 
\begin{equation}
	\begin{aligned}
		\dot{\mathbf{C}}^{t+1}&=\mathop{\text{arg min}}\limits_{\dot{\mathbf{C}}}
		\beta \|\dot{\mathbf{C}}\|_{1}+\mathfrak{R}(\langle \dot{\mathbf{F}}^{\tau}, \, \dot{\mathbf{C}}-\mathcal{T}(\dot{\mathbf{X}}^{\tau+1})\rangle)+\frac{\mu^{\tau}}{2}\|\dot{\mathbf{C}}-\mathcal{T}(\dot{\mathbf{X}}^{\tau+1})\|_{F}^{2}\\
		&=\mathop{\text{arg min}}\limits_{\dot{\mathbf{C}}}\beta \|\dot{\mathbf{C}}\|_{1}
		+\frac{\mu^{\tau}}{2}\|\frac{\dot{\mathbf{F}}^{\tau}}{\mu^\tau}+\dot{\mathbf{C}}-\mathcal{T}(\dot{\mathbf{X}}^{\tau+1})\|_{F}^{2}. \label{UPD_W_SP}
	\end{aligned}
\end{equation}
The closed-form solution to the problem (\ref{UPD_W_SP}) is given by
\begin{equation}
	\dot{\mathbf{C}}^{\tau+1}=\mathcal{S}_{\frac{4\beta}{\mu^{\tau}}}(\mathcal{T}(\dot{\mathbf{X}}^{\tau+1})-\frac{\dot{\mathbf{F}}^{\tau}}{\mu^\tau}),
	\label{sou_W}
\end{equation}
where $\mathcal{S}_{t}(\dot{\mathbf{x}})=\frac{\dot{\mathbf{x}}}{|\dot{\mathbf{x}}|}\text{max}\{|\dot{\mathbf{x}}|-t,0\}$ stands for the element-wise soft thresholding operator \cite{yang2022quaternion}.\\
\indent
Finally, the penalty parameter $\mu^{\tau+1}$ is updated as follows:
\begin{equation}
	\mu^{\tau+1}=\rho\mu^{\tau}.
	\label{UPD_mu}
\end{equation}
We call this approach QLNM-QQR-SR because, in contrast to QLNM-QQR, it also considers a sparse regularization term. Algorithm \ref{tab_algorithm2} summarizes all of the steps involved in the proposes model.
\begin{algorithm}[htb]  
	\caption{The proposed QLNM-QQR-SR method for color image completion.}
	\label{tab_algorithm2}
	\begin{algorithmic}[1]
		\Require The observed data $\dot{\mathbf{M}}\in\mathbb{H}^{M\times N}$ ($\mathcal{P}_{\Omega^{c}}(\dot{\mathbf{M}})=\mathbf{0}$); $\rho$; $\mu_{\max}$; $\beta$; $r$.
		\State \textbf{Initialize} $\tau=0$;  $\varepsilon>0$;  $\mu^{0}$;
		$\text{It}_{\text{max}}>0$; $\dot{\mathbf{L}}^{0}=eye(M,r)$; $\dot{\mathbf{R}}^{0}=eye(r,N)$; $\dot{\mathbf{D}}^{0}=eye(r,r)$; $\dot{\mathbf{X}}^{0}=\dot{\mathbf{M}}$; $\dot{\mathbf{C}}^{0}=\mathbf{0}$.
		\State \textbf{Repeat}
		\State \textbf{Step 1.} $\dot{\mathbf{L}}^{\tau+1}$, $\dot{\mathbf{R}}^{\tau+1}$: 
		(\ref{Upd_LQLNM-QQR}) and (\ref{Upd_RQLNM-QQR}), respectively.
		\State	\textbf{Step 2.}	
		$\dot{\mathbf{D}}^{t+1}$: (\ref{Upd_D_QLNM_QQR1}) and (\ref{Upd_D_QLNM_QQR2}).
		\State \textbf{Step 3.} $\dot{\mathbf{X}}^{\tau+1}=\mathcal{P}_{\Omega}(\dot{\mathbf{M}})+\mathcal{P}_{\Omega^{c}}\big(\frac{1} 
		{2}(\dot{\mathbf{L}}^{\tau+1} \dot{\mathbf{D}}^{\tau+1}\dot{\mathbf{R}}^{\tau+1}-\frac{\dot{\mathbf{E}}^{t}}{\mu^\tau}+\mathcal{I}(\dot{\mathbf{C}}^{\tau}+\frac{\dot{\mathbf{F}}^{\tau}}{\mu^\tau}))\big)$.
		\State \textbf{Step 4.} $\dot{\mathbf{C}}^{\tau+1}=\mathcal{S}_{\frac{4\beta}{\mu^{\tau}}}(\mathcal{T}(\dot{\mathbf{X}}^{\tau+1})-\frac{\dot{\mathbf{F}}^{\tau}}{\mu^\tau})$.
		\State $\dot{\mathbf{E}}^{\tau+1}=\dot{\mathbf{E}}^{\tau}+\mu^{\tau}( \dot{\mathbf{X}}^{\tau+1}-\dot{\mathbf{L}}^{\tau+1} \dot{\mathbf{D}}^{\tau+1}\dot{\mathbf{R}}^{\tau+1}).$
		\State $\dot{\mathbf{F}}^{\tau+1}=\dot{\mathbf{F}}^{\tau}+\mu^{\tau}( \dot{\mathbf{C}}^{\tau+1}-\mathcal{T}(\dot{\mathbf{X}}^{\tau+1}))$.
		\State $\mu^{\tau+1}={\rm{min}}(\rho\mu^{\tau}, \mu_{max})$.
		\State \textbf{Until convergence}
		\Ensure   $\dot{\mathbf{L}}^{\tau+1}$, $\dot{\mathbf{D}}^{\tau+1}$, $\dot{\mathbf{R}}^{\tau+1}$, $\dot{\mathbf{X}}^{\tau+1}$, and $\dot{\mathbf{C}}^{\tau+1}$.
	\end{algorithmic}
\end{algorithm}  
	
\subsection{Complexity analysis}
The computational complexities of our proposed three approaches are investigated in this subsection. According to Algorithm \ref{tab_algorithmQLNM-QQR}, the computation of the QQR decomposition of two quaternion matrices accounts for most of the computing cost of each iteration of QLNM-QQR. Additionally, these two quaternion matrices have sizes of $M\times r$ and $N\times r$, respectively, where $r<\text{min}\{M, N\}$. As a result, the complexity is approximately $\mathcal{O}(r^2(M+N)-r^{3})$. Similarly, the complexity of IRQLNM-QQR is also around $\mathcal{O}(r^2(M+N)-r^{3})$. Regarding the algorithmic complexity of the QLNM-QQR-SR technique, the transformation operator $\mathcal{T}(\cdot)$ adds significant additional algorithmic complexity on top of the QQR decomposition computation. $\mathcal{T}(\cdot)$ corresponds to a complexity of approximately $\mathcal{O}(M^2N^2+MN)$. The proposed QLNM-QQR-SR approach's overall computing complexity is thus around $\mathcal{O}(M^2N^2+MN)$ per iteration.

However, methods directly based on the QNN, such as LRQA-G \cite{chen2019low}, need to calculate the QSVD of the quaternion matrix with size $M\times N$, and the complexity is about $\mathcal{O}(\text{min}(MN^2, M^2N))$. It is clear that the computational cost of QQR decomposition of two quaternion matrices of size $M\times r$ and $N\times r$ is smaller than that of QSVD of the quaternion matrix with size $M\times N$. Additionally, the computational complexity of each iteration for the LRQMC method is with an estimated cost of $\mathcal{O}(\widetilde{r}({\widetilde{r}}^2+(M+N)\widetilde{r}+MN))$, where $\widetilde{r}$ denotes the estimated rank of the complex representation matrix of $\dot{\mathbf{X}}$ of size $2M\times 2N$. The computational cost of IRLNM-QR is around $\mathcal{O}(r^2(M+N))$. It can be found that both QLNM-QQR and IRQLNM-QQR approaches our proposed have computational complexities that are comparable to IRLNM-QR while being less than those of LRQA and LRQMC.
\section{Simulation results and discussion}
To show the effectiveness of the proposed three quaternion-based completion techniques (i.e., QLNM-QQR, IRQLNM-QQR, and QLNM-QQR-SR), we perform numerical experiments on natural color images and color medical images in this part. We also present corresponding numerical results analysis for the three quaternion-based completion methods. On a MATLAB 2019b platform with an i7-9700 CPU and 16 GB memory, we run all of the tests.    \\
\indent
\textbf{Evaluation metrics:} Peak signal-to-noise ratio (PSNR) and structural similarity (SSIM) are two extensively used metrics that we utilize to assess the effectiveness of the proposed QQR-QNN-SR, IRQLNM-QQR, and QLNM-QQR-SR. Better recovery performance is indicated by higher PSNR and SSIM.

\indent
We present the experimental results and analysis of the QLNM-QQR, IRQLNM-QQR, and QLNM-QQR-SR approaches. Specifically, we conduct numerical experiments on matrices with randomly missing entries, including those composed of quaternions. To showcase the effectiveness of our method, we consider the missing ratio (MR) of entries ranging from $50\%$ to $85\%$.\\
\indent
\textbf{ Method comparison:} Several popular completion techniques, namely, WNNM \cite{gu2017weighted}, MC-NC \cite{nie2018matrix}, IRLNM-QR \cite{liu2018fast}, TNN-SR \cite{dong2018low}, QLNF \cite{yang2022logarithmic}, TQLNA \cite{yang2022logarithmic}, LRQA-G \cite{chen2019low}, and LRQMC \cite{miao2021color} are contrasted with the proposed QLNM-QQR, IRQLNM-QQR, and QLNM-QQR-SR approaches.
Whereas WNNM, MC-NC, IRLNM-QR, and TNN-SR are LRMC-based completion algorithms, QLNF, TQLNA, LRQA-G, and LRQMC are LRQMC-based completion methods. Like the processing in \cite{dong2018low}, the approaches based on real matrix completion handle the three color channels individually first before combining their output to produce the final product. \\
\indent
\textbf{Experimental results on the eight natural color images:} In our tests, eight frequently used $256\times 256$ natural color images, as shown in Fig. \ref{fig:1}, are chosen as the test images (four from McMaster Datase \cite{zhang2011color}).
Our experiments used the following parameters for QLNM-QQR: $\mu^{0}=0.003, \rho =1.05$. And the value of $r$ is set to $65$, $90$, $105$, and $125$, corresponding to MR values of $85\%$, $75\%$, $65\%$, and $50\%$, respectively. For IRQLNM-QQR, we set $\mu^{0}=0.003$ and $\rho =1$. The value of $r$ is set to $115$, $125$, $155$, and $170$, corresponding to MR values of $85\%$, $75\%$, $65\%$, and $50\%$, respectively. To implement IRLNM-QQR, we set the values of parameters $\varsigma$ and V in (\ref{IRQLNM_QQR_wei}) to 10 and 3, respectively. As for the QLNM-QQR-SR method, $\mu^{0}$, $\beta$, and $\rho$ are set as $0.5$, $0.5$, and $1.05$, respectively. The parameter $r$ is assigned the values 60, 85, 100, and 120, which correspond to MR (missing rate) values of $85\%$, $75\%$, $65\%$, and $50\%$, respectively. As the value of MR increases, the number of missing pixels increases, and the corresponding rank decreases.

\begin{figure}[htbp]
	\centering
	\begin{minipage}[t]{0.1\textwidth}
		\centering
		\centerline{\includegraphics[width=1.7cm,height=1.7cm]{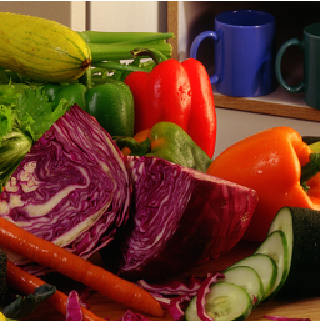}}
		\centerline{Image(1)}    
	\end{minipage}
	\begin{minipage}[t]{0.1\textwidth}
		\centering
		\centerline{\includegraphics[width=1.7cm,height=1.7cm]{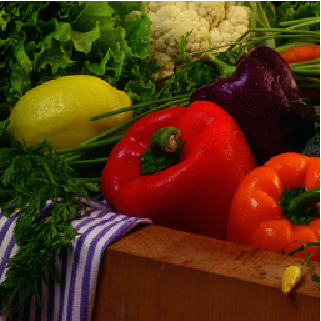}}
		\centerline{Image(2)}
	\end{minipage}
	\begin{minipage}[t]{0.1\textwidth}
		\centering
		\centerline{\includegraphics[width=1.7cm,height=1.7cm]{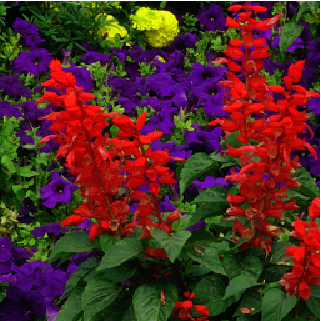}}
		\centerline{Image(3)}
	\end{minipage}
	\begin{minipage}[t]{0.1\textwidth}
		\centering
		\centerline{\includegraphics[width=1.7cm,height=1.7cm]{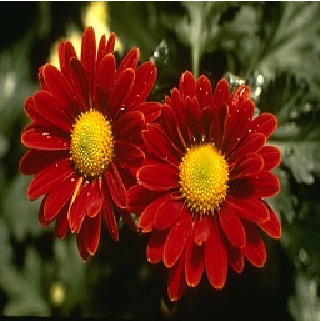}}
		\centerline{Image(4)}
	\end{minipage}
	\begin{minipage}[t]{0.1\textwidth}
		\centering
		\centerline{\includegraphics[width=1.7cm,height=1.7cm]{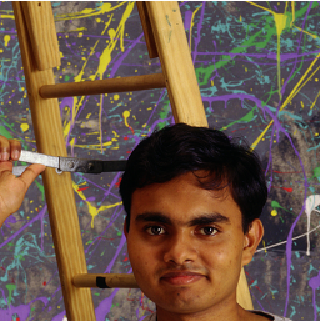}}
		\centerline{Image(5)}
	\end{minipage}
	\begin{minipage}[t]{0.1\textwidth}
		\centering
		\centerline{\includegraphics[width=1.7cm,height=1.7cm]{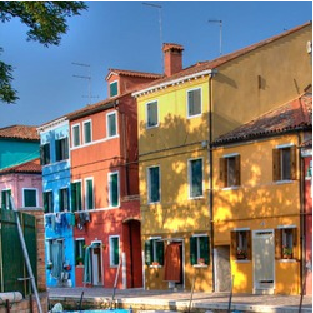}}
		\centerline{Image(6)}
	\end{minipage}
	\begin{minipage}[t]{0.1\textwidth}
		\centering
		\centerline{\includegraphics[width=1.7cm,height=1.7cm]{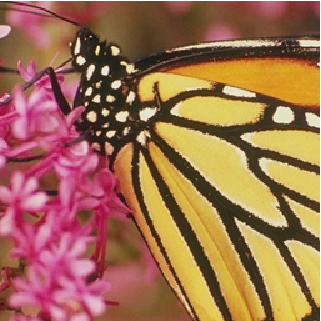}}
		\centerline{Image(7)}
	\end{minipage}
	\begin{minipage}[t]{0.1\textwidth}
		\centering
		\centerline{\includegraphics[width=1.7cm,height=1.7cm]{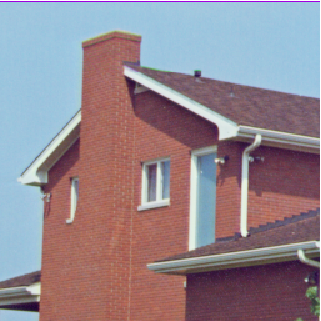}}
		\centerline{Image(8)}
	\end{minipage}
	\caption{Ground truth: Image(1)-Image(8) are eight color images, each with dimensions of $256\times256\times3$.}
	\label{fig:1}
\end{figure}

\indent
The recovery results for MR$=75\%$ are compared among different methods, as shown in \cref{fig:randommissing}.
The results of the quantitative evaluation for different methods under different missing ratios (MR) are presented in Table \ref{tablecolornature}. To illustrate the superior performance of our proposed approaches, we present in \cref{fig:IMAGE25} the visual results for image(2) and image(5) recovered using our approaches as well as several state-of-the-art approaches, all with a missing rate (MR) of $85\%$.\\
\indent
The results presented in Table \ref{tablecolornature} and \cref{fig:randommissing} indicate that both QLNM-QQR and IRQLNM-QQR exhibit superior completion performance compared to WNNM. IRQLNM-QQR achieves superior completion results, higher PSNR and SSIM values compared to IRLNM-QR, and QLNM-QQR generally outperforms IRLNM-QR. Also, when MR is large, IRQLNM-QQR outperforms MC-NC. These results demonstrate the effectiveness of quaternion representations in solving color image completion problems. Based on Theorem \ref{L2,1cov}, the IRQLNM-QQR method will approach the optimal solution of an LRQA-W minimization model. Additionally, as discussed in \cite{chen2019low}, LRQA-based models using non-convex functions demonstrate similar performance in the color image completion problem. Moreover, compared to LRQA-N and LRQA-L, the LRQA-G method has been found to have the best completion results. Table \ref{tablecolornature} indicates that the IRQLNM-QQR method can achieve numerical results comparable to LRQA-G. Furthermore, when the missing rate is high, the IRQLNM-QQR method generally outperforms the LRQA-G method. Also, the experimental results suggest that IRLNM-QQR exhibits superior precision to QLNM-QQR. \\
\indent
 According to the results presented in \cref{fig:randommissing} and Table \ref{tablecolornature}, it can be observed that the QLNM-QQR-SR method proposed in this paper outperforms other methods in terms of both visual and quantitative assessments. The results indicate that incorporating sparse prior information is crucial for achieving better completion results, as both TNN-SR and QLNM-QQR-SR outperform other methods.  Furthermore, QLNM-QQR-SR performs better than TNN-SR, which can be attributed to the ability of quaternions to characterize color images better. Based on the visual results presented in \cref{fig:IMAGE25}, our proposed method outperforms other state-of-the-art methods in terms of recovering more details from the observed images.

\begin{figure*}[htbp]
	\centering	
	\begin{minipage}[h]{0.065\textwidth}
		\centering
		\begin{minipage}{1\textwidth}
			\centering
			\includegraphics[width=1.1cm,height=1.1cm]{SR_0.25_EPS/ORI_10.eps}
		\end{minipage} 
		\hfill\\
		\begin{minipage}{1\textwidth}
			\centering
			\includegraphics[width=1.1cm,height=1.1cm]{SR_0.25_EPS/ORI_11.eps}
		\end{minipage} 
		\hfill\\
		\begin{minipage}{1\textwidth}
			\centering
			\includegraphics[width=1.1cm,height=1.1cm]{SR_0.25_EPS/ORI_17.eps}
		\end{minipage} 
		\hfill\\
			\begin{minipage}{1\textwidth}
			\centering
			\includegraphics[width=1.1cm,height=1.1cm]{SR_0.25_EPS/ORI_Flower.eps}
		\end{minipage} 
		\hfill\\	
		\begin{minipage}{1\textwidth}
			\centering
			\includegraphics[width=1.1cm,height=1.1cm]{SR_0.25_EPS/ORI_6.eps}
		\end{minipage} 
		\hfill\\
		\begin{minipage}{1\textwidth}
			\centering
			\includegraphics[width=1.1cm,height=1.1cm]{SR_0.25_EPS/ORI_Burano.eps}
		\end{minipage} 
		\hfill\\
		\begin{minipage}{1\textwidth}
			\centering
			\includegraphics[width=1.1cm,height=1.1cm]{SR_0.25_EPS/ORI_monarch.eps}
		\end{minipage} 
		\hfill\\
		\begin{minipage}{1\textwidth}
			\centering
			\includegraphics[width=1.1cm,height=1.1cm]{SR_0.25_EPS/ORI_4.1.05.eps}
		\end{minipage} 
		\hfill\\
		\caption*{(a)}
	\end{minipage}
	\begin{minipage}[h]{0.065\textwidth}
		\centering
		\begin{minipage}{1\textwidth}
			\centering
			\includegraphics[width=1.1cm,height=1.1cm]{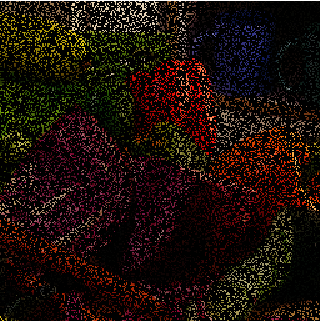}
		\end{minipage} 
		\hfill\\
		\begin{minipage}{1\textwidth}
			\centering
			\includegraphics[width=1.1cm,height=1.1cm]{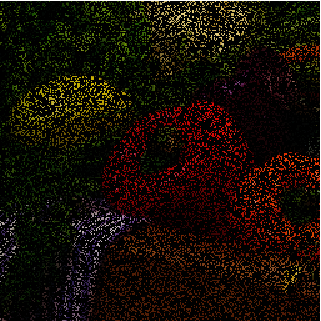}
		\end{minipage} 
		\hfill\\
		\begin{minipage}{1\textwidth}
			\centering
			\includegraphics[width=1.1cm,height=1.1cm]{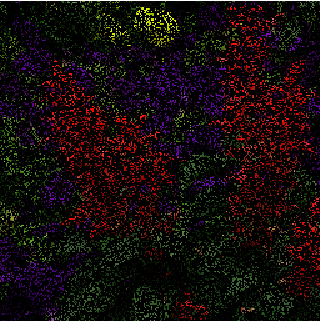}
		\end{minipage} 
		\hfill\\
			\begin{minipage}{1\textwidth}
			\centering
			\includegraphics[width=1.1cm,height=1.1cm]{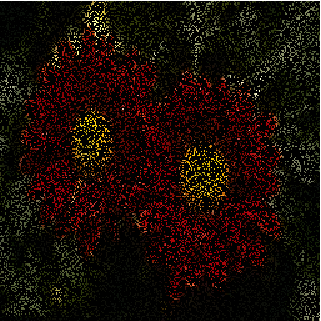}
		\end{minipage} 
		\hfill\\
		\begin{minipage}{1\textwidth}
			\centering
			\includegraphics[width=1.1cm,height=1.1cm]{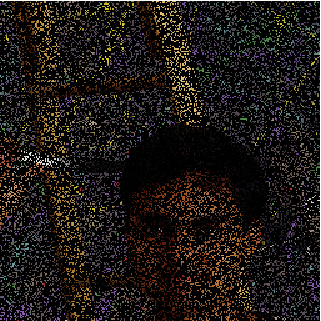}
		\end{minipage} 
		\hfill\\
		\begin{minipage}{1\textwidth}
			\centering
			\includegraphics[width=1.1cm,height=1.1cm]{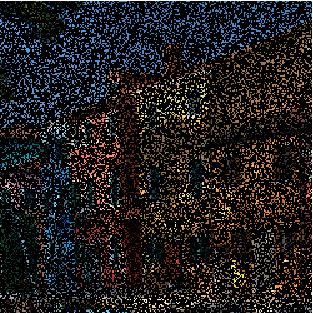}
		\end{minipage} 
		\hfill\\
		\begin{minipage}{1\textwidth}
			\centering
			\includegraphics[width=1.1cm,height=1.1cm]{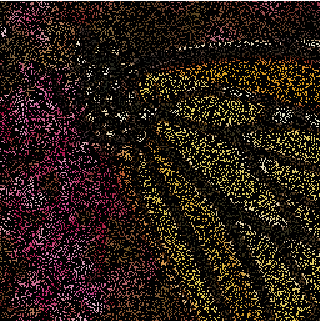}
		\end{minipage} 
		\hfill\\
			\begin{minipage}{1\textwidth}
			\centering
			\includegraphics[width=1.1cm,height=1.1cm]{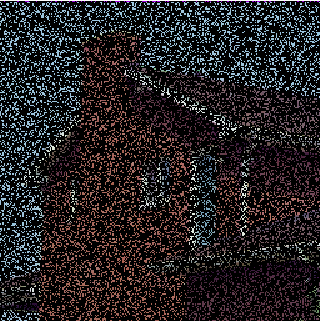}
		\end{minipage} 
		\hfill\\
		\caption*{(b)}
	\end{minipage}
\begin{minipage}[h]{0.065\textwidth}
	\centering
	\begin{minipage}{1\textwidth}
		\centering
		\includegraphics[width=1.1cm,height=1.1cm]{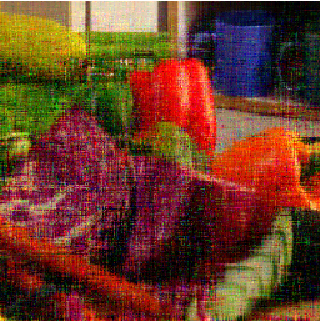}
	\end{minipage} 
	\hfill\\
	\begin{minipage}{1\textwidth}
		\centering
		\includegraphics[width=1.1cm,height=1.1cm]{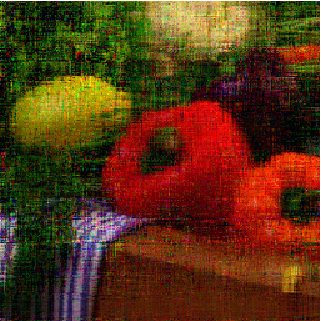}
	\end{minipage} 
	\hfill\\
	\begin{minipage}{1\textwidth}
		\centering
		\includegraphics[width=1.1cm,height=1.1cm]{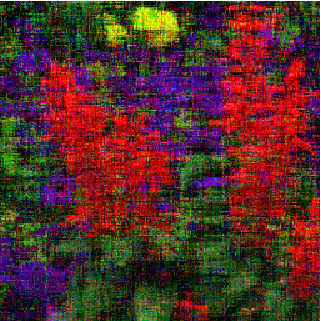}
	\end{minipage} 
	\hfill\\
		\begin{minipage}{1\textwidth}
		\centering
		\includegraphics[width=1.1cm,height=1.1cm]{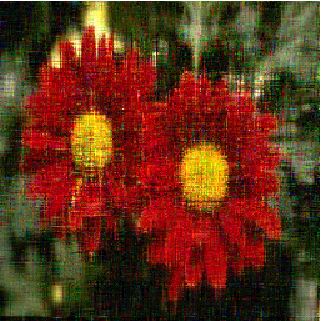}
	\end{minipage} 
	\hfill\\
	\begin{minipage}{1\textwidth}
		\centering
		\includegraphics[width=1.1cm,height=1.1cm]{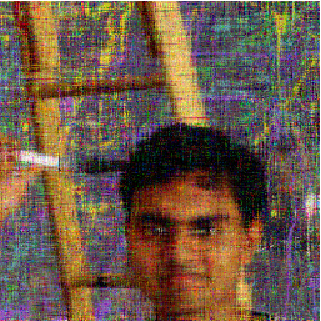}
	\end{minipage} 
	\hfill\\
	\begin{minipage}{1\textwidth}
		\centering
		\includegraphics[width=1.1cm,height=1.1cm]{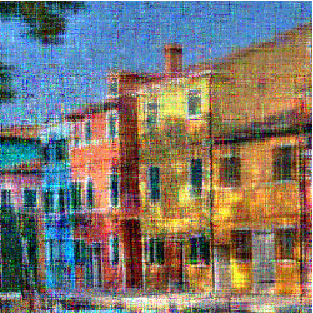}
	\end{minipage} 
	\hfill\\
	\begin{minipage}{1\textwidth}
		\centering
		\includegraphics[width=1.1cm,height=1.1cm]{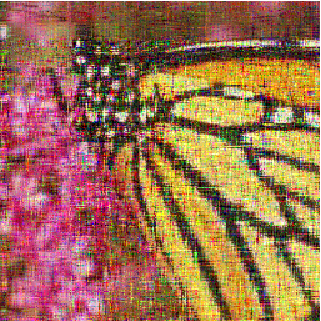}
	\end{minipage} 
	\hfill\\
		\begin{minipage}{1\textwidth}
		\centering
		\includegraphics[width=1.1cm,height=1.1cm]{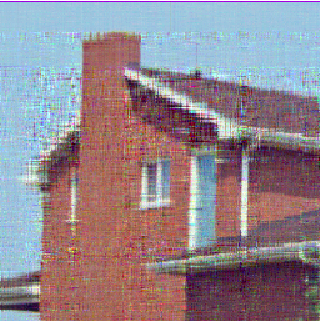}
	\end{minipage} 
	\hfill\\
	\caption*{(c)}
\end{minipage}
\begin{minipage}[h]{0.065\textwidth}
	\centering
	\begin{minipage}{1\textwidth}
		\centering
		\includegraphics[width=1.1cm,height=1.1cm]{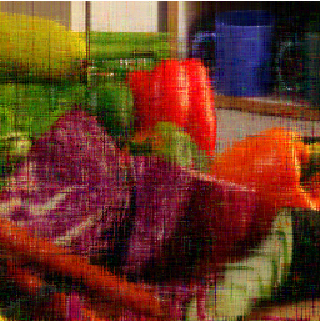}
	\end{minipage} 
	\hfill\\
	\begin{minipage}{1\textwidth}
		\centering
		\includegraphics[width=1.1cm,height=1.1cm]{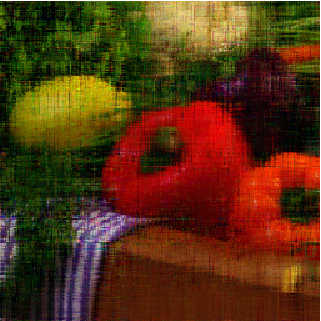}
	\end{minipage} 
	\hfill\\
	\begin{minipage}{1\textwidth}
		\centering
		\includegraphics[width=1.1cm,height=1.1cm]{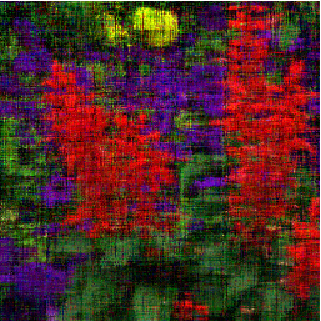}
	\end{minipage} 
	\hfill\\
		\begin{minipage}{1\textwidth}
		\centering
		\includegraphics[width=1.1cm,height=1.1cm]{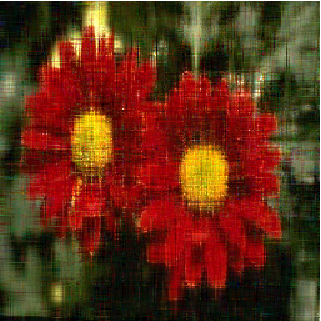}
	\end{minipage} 
	\hfill\\
	\begin{minipage}{1\textwidth}
		\centering
		\includegraphics[width=1.1cm,height=1.1cm]{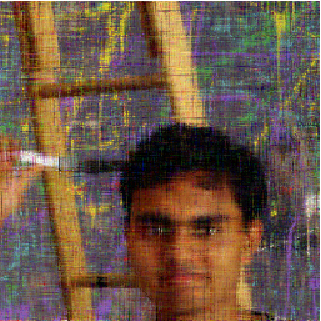}
	\end{minipage} 
	\hfill\\
	\begin{minipage}{1\textwidth}
		\centering
		\includegraphics[width=1.1cm,height=1.1cm]{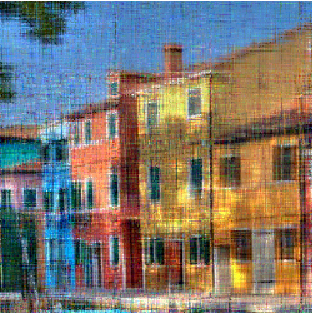}
	\end{minipage} 
	\hfill\\
	\begin{minipage}{1\textwidth}
		\centering
		\includegraphics[width=1.1cm,height=1.1cm]{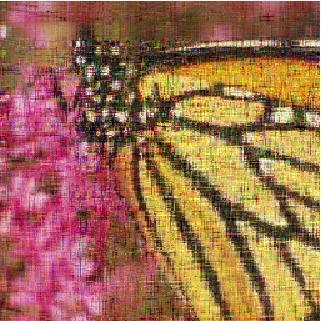}
	\end{minipage} 
	\hfill\\
		\begin{minipage}{1\textwidth}
		\centering
		\includegraphics[width=1.1cm,height=1.1cm]{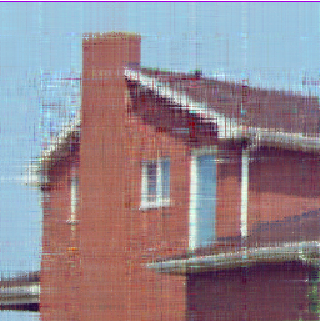}
	\end{minipage} 
	\hfill\\
	\caption*{(d)}
\end{minipage}
\begin{minipage}[h]{0.065\textwidth}
	\centering
	\begin{minipage}{1\textwidth}
		\centering
		\includegraphics[width=1.1cm,height=1.1cm]{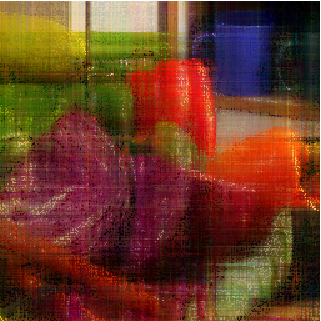}
	\end{minipage} 
	\hfill\\
	\begin{minipage}{1\textwidth}
		\centering
		\includegraphics[width=1.1cm,height=1.1cm]{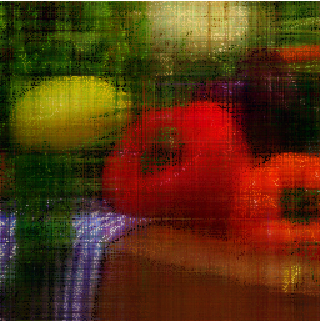}
	\end{minipage} 
	\hfill\\
	\begin{minipage}{1\textwidth}
		\centering
		\includegraphics[width=1.1cm,height=1.1cm]{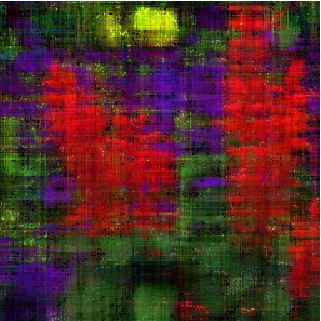}
	\end{minipage} 
	\hfill\\
	\begin{minipage}{1\textwidth}
		\centering
		\includegraphics[width=1.1cm,height=1.1cm]{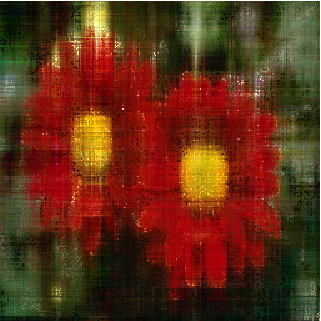}
	\end{minipage} 
	\hfill\\
	\begin{minipage}{1\textwidth}
		\centering
		\includegraphics[width=1.1cm,height=1.1cm]{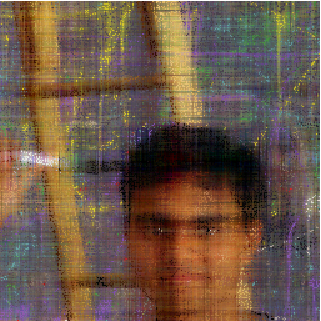}
	\end{minipage} 
	\hfill\\
	\begin{minipage}{1\textwidth}
		\centering
		\includegraphics[width=1.1cm,height=1.1cm]{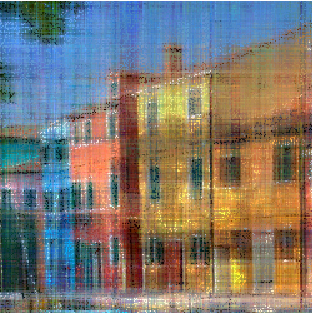}
	\end{minipage} 
	\hfill\\
	\begin{minipage}{1\textwidth}
		\centering
		\includegraphics[width=1.1cm,height=1.1cm]{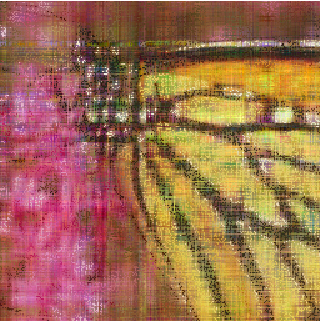}
	\end{minipage} 
	\hfill\\
	\begin{minipage}{1\textwidth}
		\centering
		\includegraphics[width=1.1cm,height=1.1cm]{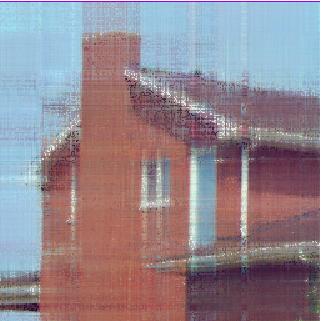}
	\end{minipage} 
	\hfill\\
	\caption*{(e)}
\end{minipage}
	\begin{minipage}[h]{0.065\textwidth}
		\centering
		\begin{minipage}{1\textwidth}
			\centering
			\includegraphics[width=1.1cm,height=1.1cm]{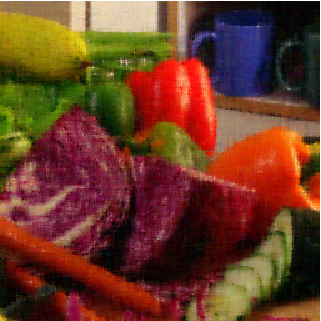}
		\end{minipage} 
		\hfill\\
		\begin{minipage}{1\textwidth}
			\centering
			\includegraphics[width=1.1cm,height=1.1cm]{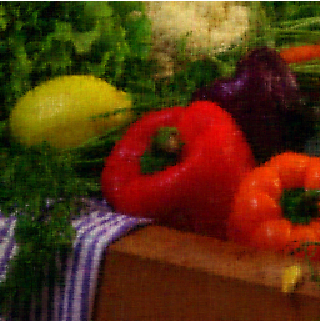}
		\end{minipage} 
		\hfill\\
		\begin{minipage}{1\textwidth}
			\centering
			\includegraphics[width=1.1cm,height=1.1cm]{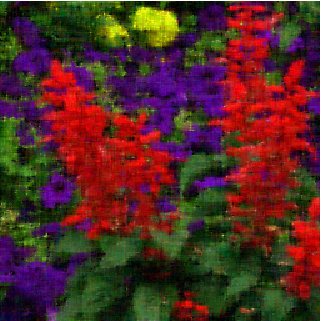}
		\end{minipage} 
		\hfill\\
		\begin{minipage}{1\textwidth}
			\centering
			\includegraphics[width=1.1cm,height=1.1cm]{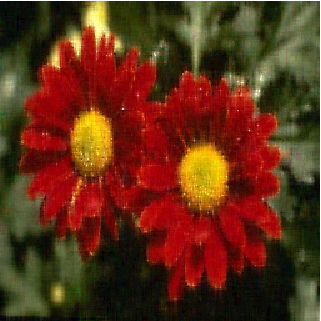}
		\end{minipage} 
		\hfill\\
		\begin{minipage}{1\textwidth}
			\centering
			\includegraphics[width=1.1cm,height=1.1cm]{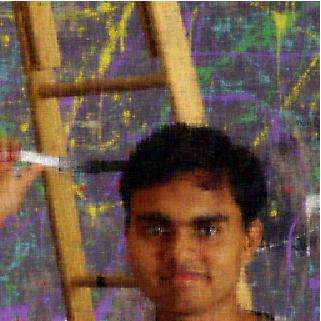}
		\end{minipage} 
		\hfill\\
		\begin{minipage}{1\textwidth}
			\centering
			\includegraphics[width=1.1cm,height=1.1cm]{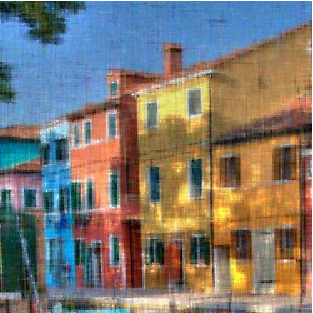}
		\end{minipage} 
		\hfill\\
		\begin{minipage}{1\textwidth}
			\centering
			\includegraphics[width=1.1cm,height=1.1cm]{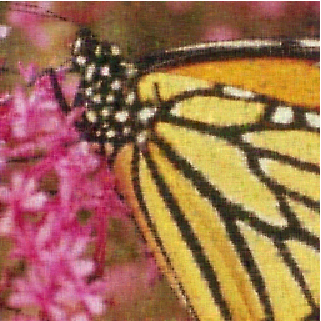}
		\end{minipage} 
		\hfill\\
		\begin{minipage}{1\textwidth}
			\centering
			\includegraphics[width=1.1cm,height=1.1cm]{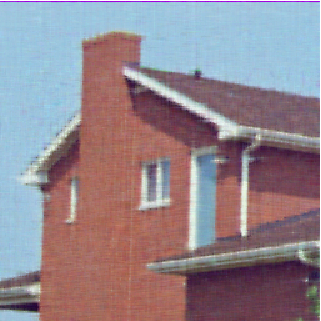}
		\end{minipage} 
		\hfill\\
		\caption*{(f)}
	\end{minipage}
		\begin{minipage}[h]{0.065\textwidth}
		\centering
		\begin{minipage}{1\textwidth}
			\centering
			\includegraphics[width=1.1cm,height=1.1cm]{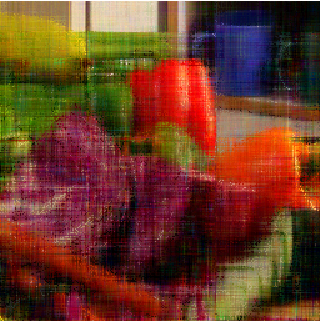}
		\end{minipage} 
		\hfill\\
		\begin{minipage}{1\textwidth}
			\centering
			\includegraphics[width=1.1cm,height=1.1cm]{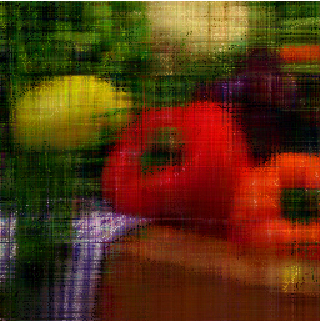}
		\end{minipage} 
		\hfill\\
		\begin{minipage}{1\textwidth}
			\centering
			\includegraphics[width=1.1cm,height=1.1cm]{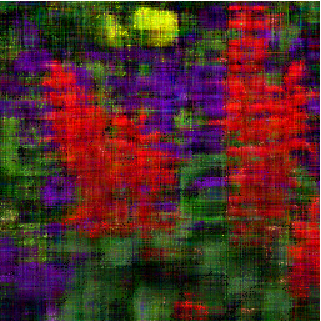}
		\end{minipage} 
		\hfill\\
		\begin{minipage}{1\textwidth}
			\centering
			\includegraphics[width=1.1cm,height=1.1cm]{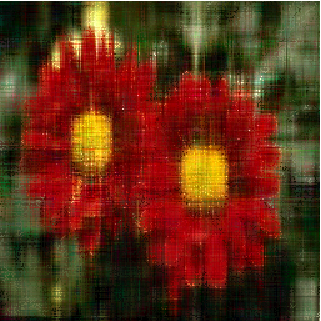}
		\end{minipage} 
		\hfill\\
		\begin{minipage}{1\textwidth}
			\centering
			\includegraphics[width=1.1cm,height=1.1cm]{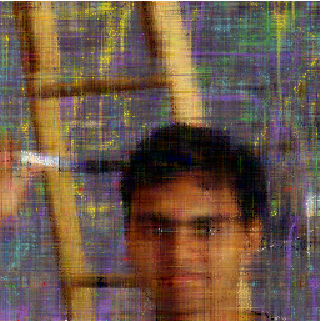}
		\end{minipage} 
		\hfill\\
		\begin{minipage}{1\textwidth}
			\centering
			\includegraphics[width=1.1cm,height=1.1cm]{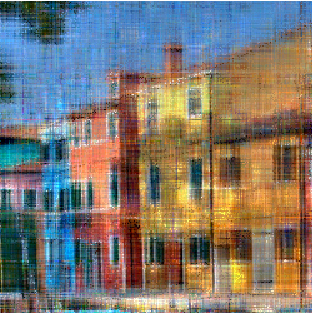}
		\end{minipage} 
		\hfill\\
		\begin{minipage}{1\textwidth}
			\centering
			\includegraphics[width=1.1cm,height=1.1cm]{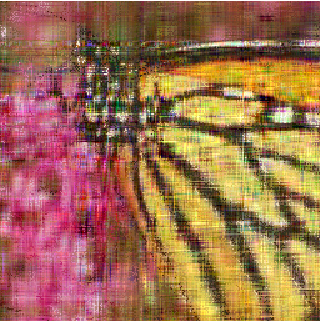}
		\end{minipage} 
		\hfill\\
		\begin{minipage}{1\textwidth}
			\centering
			\includegraphics[width=1.1cm,height=1.1cm]{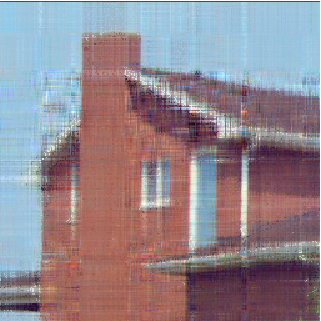}
		\end{minipage} 
		\hfill\\
		\caption*{(g)}
	\end{minipage}
	\begin{minipage}[h]{0.065\textwidth}
	\centering
	\begin{minipage}{1\textwidth}
		\centering
		\includegraphics[width=1.1cm,height=1.1cm]{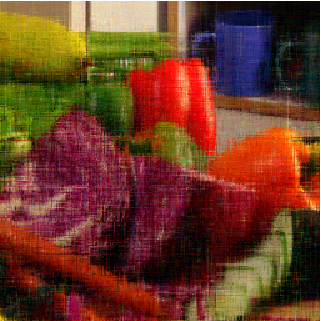}
	\end{minipage} 
	\hfill\\
	\begin{minipage}{1\textwidth}
		\centering
		\includegraphics[width=1.1cm,height=1.1cm]{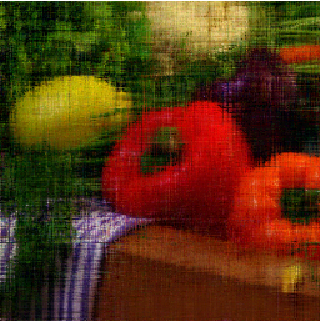}
	\end{minipage} 
	\hfill\\
	\begin{minipage}{1\textwidth}
		\centering
		\includegraphics[width=1.1cm,height=1.1cm]{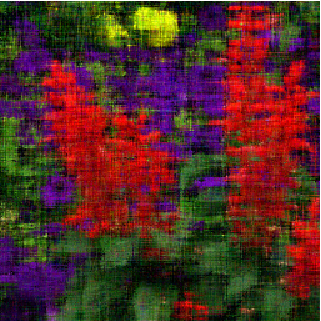}
	\end{minipage} 
	\hfill\\
	\begin{minipage}{1\textwidth}
		\centering
		\includegraphics[width=1.1cm,height=1.1cm]{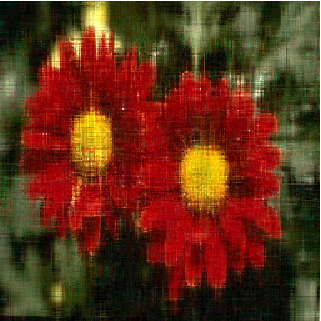}
	\end{minipage} 
	\hfill\\
	\begin{minipage}{1\textwidth}
		\centering
		\includegraphics[width=1.1cm,height=1.1cm]{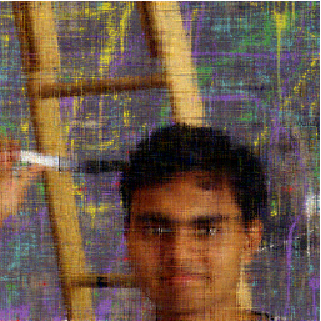}
	\end{minipage} 
	\hfill\\
	\begin{minipage}{1\textwidth}
		\centering
		\includegraphics[width=1.1cm,height=1.1cm]{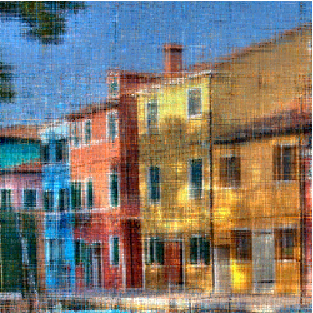}
	\end{minipage} 
	\hfill\\
	\begin{minipage}{1\textwidth}
		\centering
		\includegraphics[width=1.1cm,height=1.1cm]{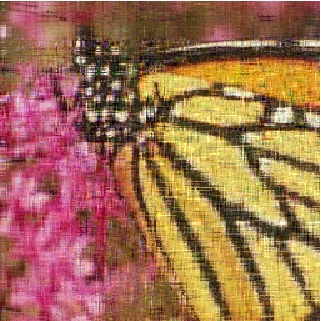}
	\end{minipage} 
	\hfill\\
	\begin{minipage}{1\textwidth}
		\centering
		\includegraphics[width=1.1cm,height=1.1cm]{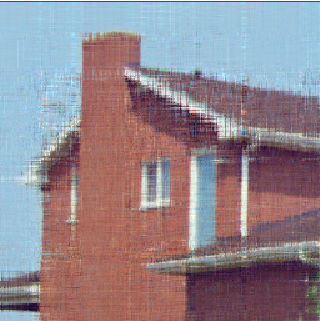}
	\end{minipage} 
	\hfill\\
	\caption*{(h)}
\end{minipage}
	\begin{minipage}[h]{0.065\textwidth}
	\centering
	\begin{minipage}{1\textwidth}
		\centering
		\includegraphics[width=1.1cm,height=1.1cm]{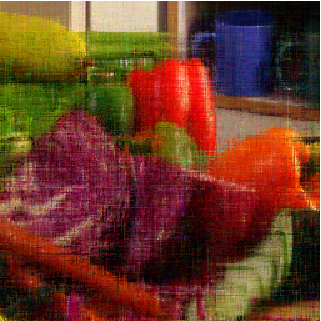}
	\end{minipage} 
	\hfill\\
	\begin{minipage}{1\textwidth}
		\centering
		\includegraphics[width=1.1cm,height=1.1cm]{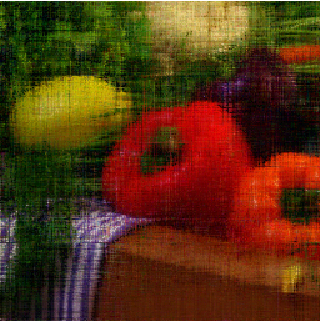}
	\end{minipage} 
	\hfill\\
	\begin{minipage}{1\textwidth}
		\centering
		\includegraphics[width=1.1cm,height=1.1cm]{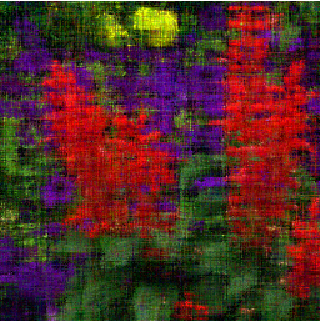}
	\end{minipage} 
	\hfill\\
	\begin{minipage}{1\textwidth}
		\centering
		\includegraphics[width=1.1cm,height=1.1cm]{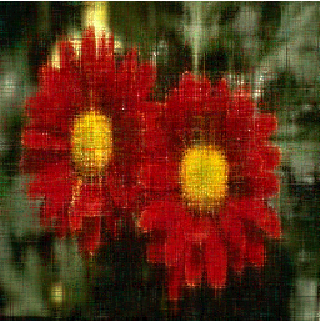}
	\end{minipage} 
	\hfill\\
	\begin{minipage}{1\textwidth}
		\centering
		\includegraphics[width=1.1cm,height=1.1cm]{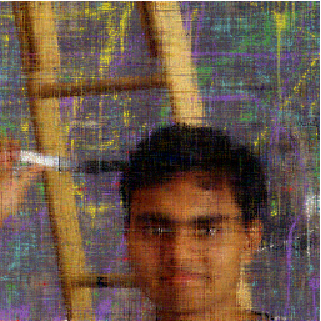}
	\end{minipage} 
	\hfill\\
	\begin{minipage}{1\textwidth}
		\centering
		\includegraphics[width=1.1cm,height=1.1cm]{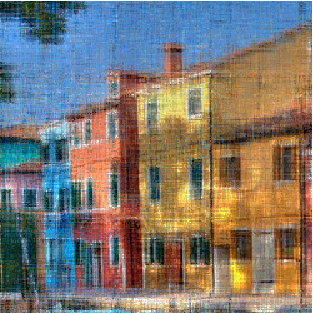}
	\end{minipage} 
	\hfill\\
	\begin{minipage}{1\textwidth}
		\centering
		\includegraphics[width=1.1cm,height=1.1cm]{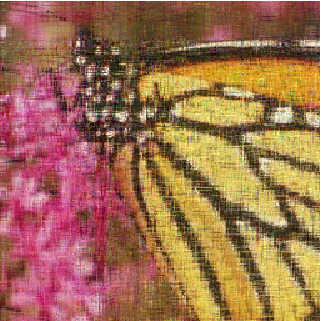}
	\end{minipage} 
	\hfill\\
	\begin{minipage}{1\textwidth}
		\centering
		\includegraphics[width=1.1cm,height=1.1cm]{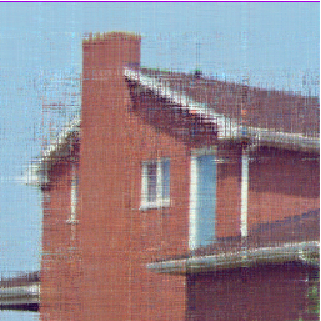}
	\end{minipage} 
	\hfill\\
	\caption*{(i)}
\end{minipage}
	\begin{minipage}[h]{0.065\textwidth}
	\centering
	\begin{minipage}{1\textwidth}
		\centering
		\includegraphics[width=1.1cm,height=1.1cm]{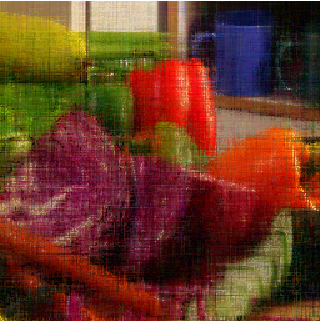}
	\end{minipage} 
	\hfill\\
	\begin{minipage}{1\textwidth}
		\centering
		\includegraphics[width=1.1cm,height=1.1cm]{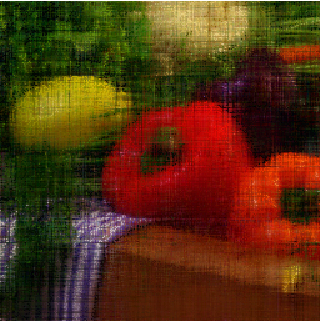}
	\end{minipage} 
	\hfill\\
	\begin{minipage}{1\textwidth}
		\centering
		\includegraphics[width=1.1cm,height=1.1cm]{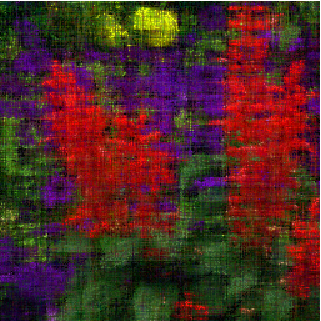}
	\end{minipage} 
	\hfill\\
	\begin{minipage}{1\textwidth}
		\centering
		\includegraphics[width=1.1cm,height=1.1cm]{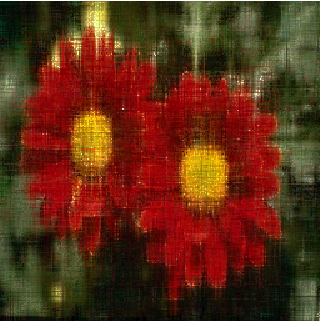}
	\end{minipage} 
	\hfill\\
	\begin{minipage}{1\textwidth}
		\centering
		\includegraphics[width=1.1cm,height=1.1cm]{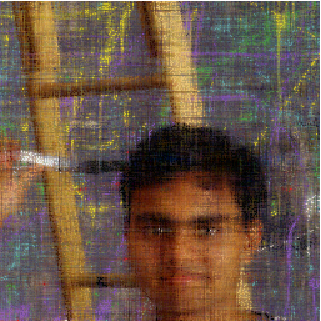}
	\end{minipage} 
	\hfill\\
	\begin{minipage}{1\textwidth}
		\centering
		\includegraphics[width=1.1cm,height=1.1cm]{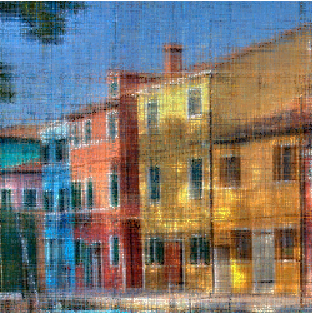}
	\end{minipage} 
	\hfill\\
	\begin{minipage}{1\textwidth}
		\centering
		\includegraphics[width=1.1cm,height=1.1cm]{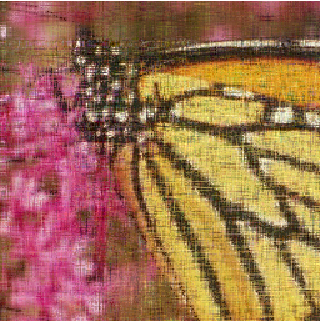}
	\end{minipage} 
	\hfill\\
	\begin{minipage}{1\textwidth}
		\centering
		\includegraphics[width=1.1cm,height=1.1cm]{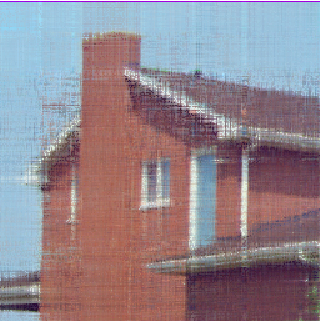}
	\end{minipage} 
	\hfill\\
	\caption*{(j)}
\end{minipage}
\begin{minipage}[h]{0.065\textwidth}
	\centering
	\begin{minipage}{1\textwidth}
		\centering
		\includegraphics[width=1.1cm,height=1.1cm]{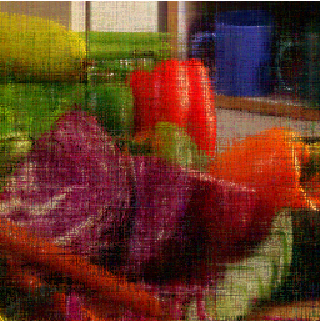}
	\end{minipage} 
	\hfill\\
	\begin{minipage}{1\textwidth}
		\centering
		\includegraphics[width=1.1cm,height=1.1cm]{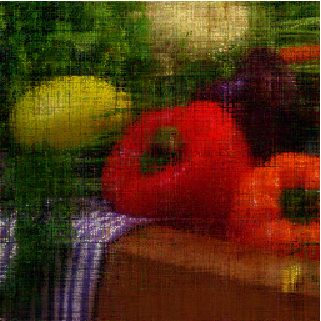}
	\end{minipage} 
	\hfill\\
	\begin{minipage}{1\textwidth}
		\centering
		\includegraphics[width=1.1cm,height=1.1cm]{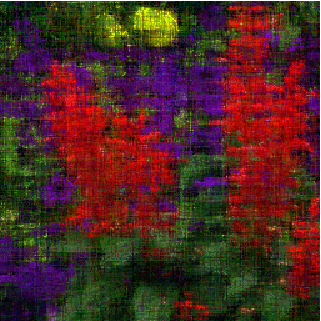}
	\end{minipage} 
	\hfill\\
	\begin{minipage}{1\textwidth}
		\centering
		\includegraphics[width=1.1cm,height=1.1cm]{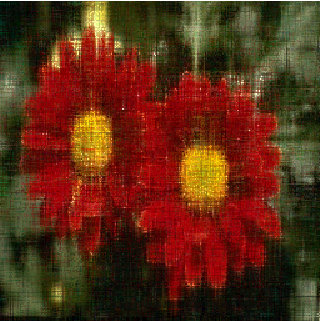}
	\end{minipage} 
	\hfill\\
	\begin{minipage}{1\textwidth}
		\centering
		\includegraphics[width=1.1cm,height=1.1cm]{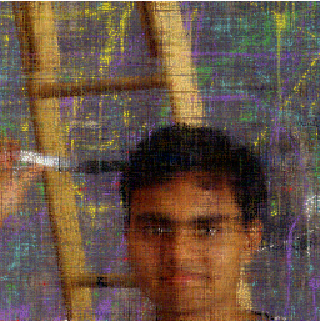}
	\end{minipage} 
	\hfill\\
	\begin{minipage}{1\textwidth}
		\centering
		\includegraphics[width=1.1cm,height=1.1cm]{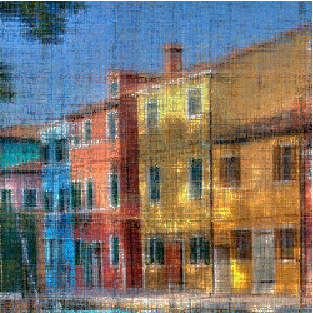}
	\end{minipage} 
	\hfill\\
	\begin{minipage}{1\textwidth}
		\centering
		\includegraphics[width=1.1cm,height=1.1cm]{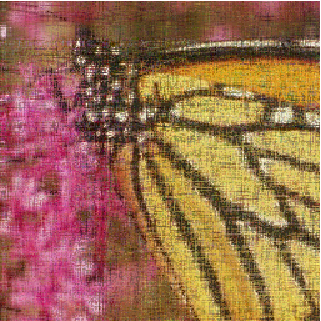}
	\end{minipage} 
	\hfill\\
	\begin{minipage}{1\textwidth}
		\centering
		\includegraphics[width=1.1cm,height=1.1cm]{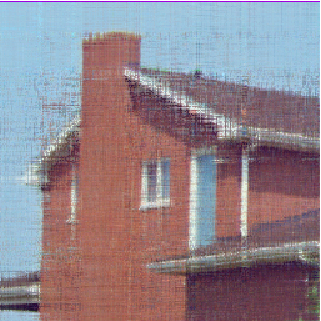}
	\end{minipage} 
	\hfill\\
	\caption*{(k)}
\end{minipage}
	\begin{minipage}[h]{0.065\textwidth}
		\centering
		\begin{minipage}{1\textwidth}
			\centering
			\includegraphics[width=1.1cm,height=1.1cm]{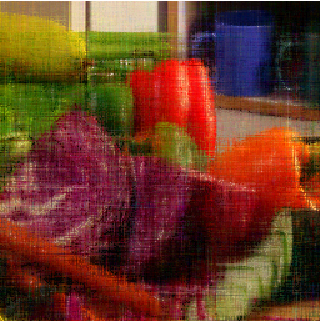}
		\end{minipage} 
		\hfill\\
		\begin{minipage}{1\textwidth}
			\centering
			\includegraphics[width=1.1cm,height=1.1cm]{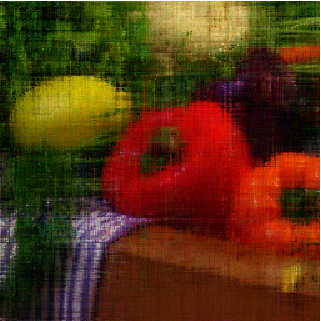}
		\end{minipage} 
		\hfill\\
		\begin{minipage}{1\textwidth}
			\centering
			\includegraphics[width=1.1cm,height=1.1cm]{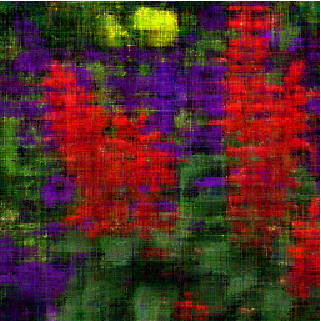}
		\end{minipage} 
		\hfill\\
		\begin{minipage}{1\textwidth}
			\centering
			\includegraphics[width=1.1cm,height=1.1cm]{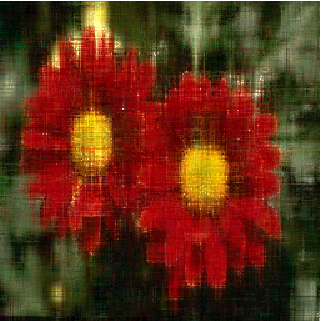}
		\end{minipage} 
		\hfill\\
		\begin{minipage}{1\textwidth}
			\centering
			\includegraphics[width=1.1cm,height=1.1cm]{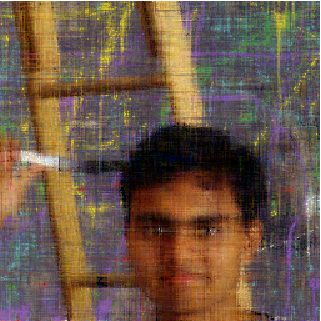}
		\end{minipage} 
		\hfill\\
		\begin{minipage}{1\textwidth}
			\centering
			\includegraphics[width=1.1cm,height=1.1cm]{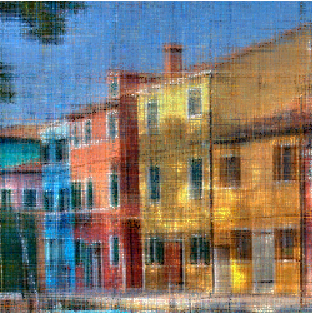}
		\end{minipage} 
		\hfill\\
		\begin{minipage}{1\textwidth}
			\centering
			\includegraphics[width=1.1cm,height=1.1cm]{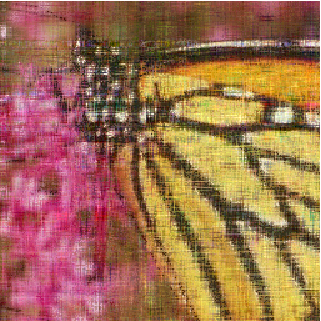}
		\end{minipage} 
		\hfill\\
		\begin{minipage}{1\textwidth}
			\centering
			\includegraphics[width=1.1cm,height=1.1cm]{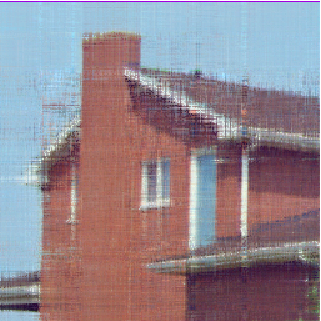}
		\end{minipage} 
		\hfill\\
		\caption*{(l)}
	\end{minipage}
\begin{minipage}[h]{0.065\textwidth}
	\centering
	\begin{minipage}{1\textwidth}
		\centering
		\includegraphics[width=1.1cm,height=1.1cm]{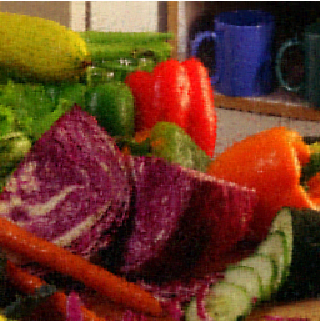}
	\end{minipage} 
	\hfill\\
	\begin{minipage}{1\textwidth}
		\centering
		\includegraphics[width=1.1cm,height=1.1cm]{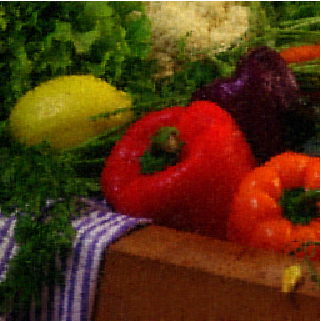}
	\end{minipage} 
	\hfill\\
	\begin{minipage}{1\textwidth}
		\centering
		\includegraphics[width=1.1cm,height=1.1cm]{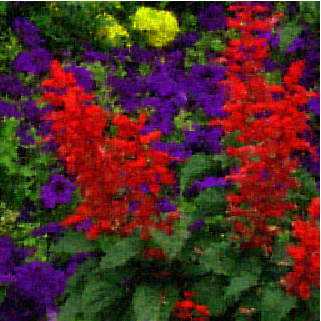}
	\end{minipage} 
	\hfill\\
	\begin{minipage}{1\textwidth}
		\centering
		\includegraphics[width=1.1cm,height=1.1cm]{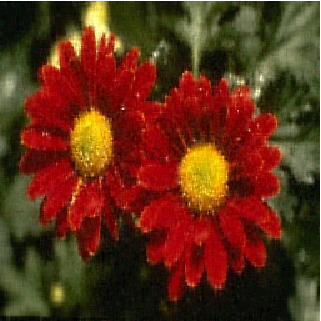}
	\end{minipage} 
	\hfill\\
	\begin{minipage}{1\textwidth}
		\centering
		\includegraphics[width=1.1cm,height=1.1cm]{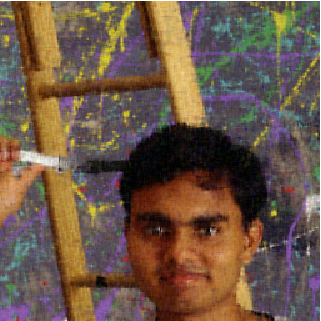}
	\end{minipage} 
	\hfill\\
	\begin{minipage}{1\textwidth}
		\centering
		\includegraphics[width=1.1cm,height=1.1cm]{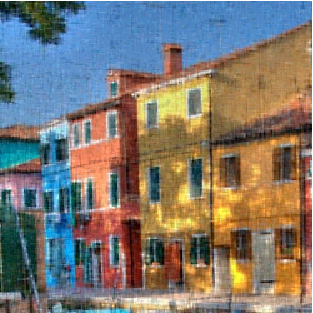}
	\end{minipage} 
	\hfill\\
	\begin{minipage}{1\textwidth}
		\centering
		\includegraphics[width=1.1cm,height=1.1cm]{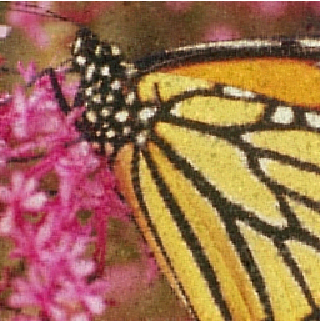}
	\end{minipage} 
	\hfill\\
	\begin{minipage}{1\textwidth}
		\centering
		\includegraphics[width=1.1cm,height=1.1cm]{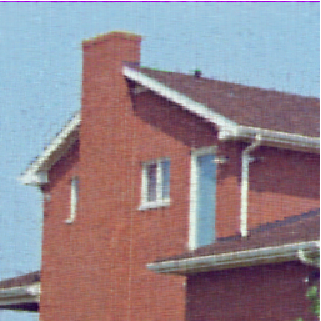}
	\end{minipage} 
	\hfill\\
	\caption*{(m)}
\end{minipage}
	\hfill\\
	\caption{(a) Ground truth. From top to bottom: Image(1)-Image(8). (b) Observation (MR=$75\%$). (c)-(m) are the restored results of WNNM, MC-NC, IRLNM-QR, TNN-SR, QLNF, TQLNA, LRQA-G, LRQMC, QLNM-QQR, IRQLNM-QQR, and QLNM-QQR-SR, respectively.}
	\label{fig:randommissing} 
\end{figure*}

\begin{table*}[t]
	\caption{A comparison of quantitative assessment indices (PSNR/SSIM) across different methods on the set of eight color images.}
	\label{tablecolornature}
	\centering
	\resizebox{\textwidth}{!}{
		\begin{tabular}{|c|c|c|c|c|c|c|c|c|c|c|c|}		
			\hline
			Methods:& WNNM  &  MC-NC  &IRLNM-QR  & TNN-SR & QLNF  &TQLNA & LRQA-G & LRQMC   & QLNM-QQR &IRQLNM-QQR & QLNM-QQR-SR	
			\\ \toprule
			\hline
			Images:  &\multicolumn{11}{c|}{${\rm{MR}}=50\%$}\\
			\hline				
			Image(1) & 23.386/0.848	&	26.939/0.935	&	24.336/0.888	&	28.657/0.955	&	24.223/0.862	&	27.020/0.909	&	26.561/0.927	&	27.007/0.936	&	26.208/0.923	&	26.557/0.930	&	\textbf{29.156}/\textbf{0.959}\\
			Image(2) & 24.837/0.861	&	28.233/0.937	&	25.767/0.899	&	30.070/0.957	&	25.312/0.872	&	28.260/0.921	&	27.748/0.928	&	28.319/0.938	&	27.442/0.926	&	27.771/	0.930	&	\textbf{30.487}/\textbf{0.960}\\
		    Image(3) &	19.995/0.759	&	23.562/0.878	&	22.155/0.841	&	25.357/0.918	&	21.659/0.814	&	23.927/0.878	&	23.376/0.873	&	23.306/0.877	&	23.057/0.869	&	23.603/0.881	&	\textbf{25.767}/\textbf{0.924}\\
			Image(4) & 24.231/0.877	&	27.586/0.950	&	25.085/0.899	&	29.205/0.965	&	25.007/0.853	&	27.497/0.918	&	27.267/0.944	&	27.499/0.950	&	26.904/0.940	&	27.223/0.945	&	\textbf{29.507}/\textbf{0.967}\\
			Image(5) & 22.795/0.731	&	26.294/0.862	&	24.214/0.811	&	28.030/0.903	&	23.840/0.749	&	26.411/0.826	&	25.985/0.850	&	26.448/0.865	&	25.693/0.846	&	26.097/0.859	&	\textbf{28.681}/\textbf{0.915}\\
			Image(6) & 22.246/0.883	&	25.686/0.943	&	22.802/0.898	&	27.046/0.958	&	22.352/0.884	&	25.734/0.941	&	25.266/0.937	&	25.658/0.942	&	24.790/0.931	&	25.081/0.936	&	\textbf{27.417}/\textbf{0.961}\\
			Image(7) & 21.406/0.877	&	24.439/0.938	&	21.018/0.886	&	27.303/0.968	&	20.249/0.861	&	24.358/0.933	&	23.969/0.931	&	24.319/0.937	&	23.433/0.926	&	23.406/0.926	&	\textbf{27.933}/\textbf{0.971}\\
			Image(8) & 27.822/0.920	&	29.482/0.956	&	26.564/0.925	&	33.175/0.978	&	26.448/0.913	&	30.684/0.959	&	29.925/0.957	&	30.434/0.962	&	29.493/0.954	&	29.284/0.954	&	\textbf{33.635}/\textbf{0.980}\\
				\hline
			Aver. &  23.340 	&	26.528	&	23.993	&	28.605	&	23.636	&	26.736	&	26.262	&	26.624	&	25.878	&	26.128	&	\textbf{29.073}	\\ \toprule
			\hline
			Images:  &\multicolumn{11}{c|}{${\rm{MR}}=65\%$}\\
			\hline
		    Image(1) & 20.815/0.771	&	24.155/0.884	&	22.029/0.834	&	26.489/0.931	&	22.677/0.823	&	24.292/0.859	&	23.823/0.874	&	24.103/0.886	&	23.323/0.866	&	23.841/0.878	&	\textbf{26.996}/\textbf{0.938}\\
		    Image(2) & 22.196/0.784	&	25.546/0.889	&	23.600/0.848	&	27.936/0.932	&	23.867/0.831	&	25.613/0.871	&	25.180/0.879	&	25.652/0.892	&	24.644/0.873	&	25.233/0.882	&	\textbf{28.422}/\textbf{0.937}\\
		    Image(3) & 17.533/0.638	&	20.605/0.777	&	20.154/0.758	&	23.277/0.870	&	20.153/0.743	&	21.465/0.798	&	20.907/0.786	&	21.249/0.806	&	20.521/0.776	&	21.193/0.801	&	\textbf{23.752}/\textbf{0.883}\\
		    Image(4) & 21.791/0.786	&	24.960/0.900	&	22.829/0.844	&	27.225/0.944	&	23.498/0.814	&	24.980/0.858	&	24.668/0.892	&	24.783/0.901	&	24.114/0.883	&	24.670/0.896	&	\textbf{27.611}/\textbf{0.950}\\
		    Image(5) & 20.130/0.607	&	23.535/0.763	&	22.181/0.719	&	25.765/0.844	&	22.342/0.669	&	23.759/0.728	&	23.405/0.754	&	23.609/0.770	&	22.957/0.743	&	23.620/0.769	&	\textbf{26.385}/\textbf{0.862}\\
		    Image(6) & 19.226/0.791	&	22.618/0.891	&	20.442/0.835	&	24.415/0.925	&	20.705/0.836	&	22.532/0.884	&	22.154/0.878	&	21.797/0.872	&	21.540/0.865	&	22.122/0.880	&	\textbf{24.854}/\textbf{0.932}\\
		    Image(7) & 18.091/0.776	&	21.101/0.877	&	18.901/0.834	&	24.607/0.943	&	18.651/0.813	&	21.090/0.874	&	20.749/0.871	&	20.333/0.868	&	20.123/0.861	&	20.501/0.870	&	\textbf{25.185}/\textbf{0.948}\\
		    Image(8) & 24.594/0.856	&	27.095/0.923	&	24.384/0.883	&	30.836/0.963	&	25.028/0.879	&	27.380/0.917	&	26.893/0.917	&	26.272/0.914	&	26.237/0.909	&	26.525/0.917	&	\textbf{31.369}/\textbf{0.967}\\
			\hline
			Aver. & 20.547	&	23.702	&	21.815	&	26.319	&	22.115	&	23.889	&	23.472	&	23.475	&	22.932	&	23.463	&	\textbf{26.822} \\ \toprule
			\hline	
			Images  &\multicolumn{11}{c|}{${\rm{MR}}=75\%$}\\
			\hline		
			Image(1) & 18.794/0.689	&	21.882/0.818	&	20.580/0.788	&	25.004/0.907	&	21.574/0.784	&	22.470/0.810	&	22.024/0.822	&	21.788/0.826	&	21.269/0.806	&	22.072/0.830	&	\textbf{25.455}/\textbf{0.914}\\
			Image(2) & 19.965/0.704	&	23.122/0.825	&	22.028/0.801	&	26.621/0.909	&	22.547/0.787	&	23.565/0.815	&	23.189/0.825	&	22.931/0.828	&	22.537/0.816	&	23.297/0.830	&	\textbf{27.090}/\textbf{0.917}\\
			Image(3) & 15.707/0.529	&	18.252/0.652	&	18.884/0.683	&	21.926/0.822	&	19.082/0.678	&	19.815/0.719	&	19.287/0.701	&	19.290/0.708	&	18.870/0.684	&	19.577/0.719	&	\textbf{22.431}/\textbf{0.842}\\
			Image(4) & 19.676/0.689	&	22.652/0.831	&	21.256/0.795	&	25.811/0.926	&	22.054/0.772	&	22.977/0.800	&	22.731/0.836	&	22.207/0.835	&	21.962/0.820	&	22.694/0.842	&	\textbf{26.234}/\textbf{0.937}\\
			Image(5) & 18.105/0.495	&	21.118/0.646	&	20.654/0.637	&	24.263/0.789	&	21.185/0.600	&	21.800/0.630	&	21.398/0.653	&	21.153/0.657	&	20.848/0.637	&	21.703/0.674	&	\textbf{24.798}/\textbf{0.812}\\
			Image(6) & 16.971/0.689	&	19.952/0.812	&	18.926/0.776	&	22.889/0.895	&	19.517/0.791	&	20.480/0.823	&	20.153/0.816	&	20.074/0.817	&	19.389/0.793	&	20.183/0.820	&	\textbf{23.342}/\textbf{0.906}\\
			Image(7) & 15.317/0.652	&	17.749/0.778	&	17.387/0.787	&	22.758/0.918	&	17.411/0.768	&	18.731/0.810	&	18.528/0.810	&	18.697/0.821	&	17.850/0.794	&	18.480/0.814	&	\textbf{23.406}/\textbf{0.928}\\
			Image(8) & 21.725/0.769	&	24.816/0.881	&	22.782/0.843	&	28.964/0.946	&	23.637/0.844	&	24.673/0.866	&	24.580/0.873	&	23.961/0.868	&	23.839/0.860	&	24.386/0.875	&	\textbf{29.373}/\textbf{0.951}\\	
			\hline
			Aver. & 18.283	&	21.193	&	20.312	&	24.780	&	20.876	&	21.814	&	21.486	&21.263	&	20.821	&	21.549	&	\textbf{25.266} \\ \toprule	
			\hline
			Images  &\multicolumn{11}{c|}{${\rm{MR}}=85\%$}\\
			\hline
		    Image(1) & 16.018/0.562	&	17.777/0.644	&	18.637/0.713	&	23.413/0.871	&	19.658/0.713	&	19.949/0.726	&	19.720/0.738	&	19.042/0.731	&	18.771/0.710	&	19.758/0.748	&	\textbf{23.836}/\textbf{0.883}\\
		    Image(2) & 17.665/0.603	&	19.079/0.684	&	19.914/0.727	&	24.561/0.866	&	20.541/0.711	&	21.201/0.737	&	20.895/0.747	&	20.059/0.743	&	19.784/0.725	&	20.901/0.750	&	\textbf{25.064}/\textbf{0.878}\\
		    Image(3) & 13.367/0.377	&	15.852/0.485	&	17.084/0.560	&	20.328/0.754	&	17.103/0.549	&	17.801/0.598	&	17.361/0.577	&	17.320/0.578	&	16.890/0.546	&	17.583/0.596	&	\textbf{20.768}/\textbf{0.779}\\
		    Image(4) & 17.216/0.569	&	18.991/0.682	&	19.428/0.719	&	24.307/0.896	&	20.149/0.699	&	20.550/0.711	&	20.520/0.749	&	19.889/0.751	&	19.572/0.725	&	20.461/0.757	&	\textbf{24.744}/\textbf{0.910}\\
		    Image(5) & 15.578/0.349	&	17.502/0.438	&	18.862/0.519	&	22.712/0.708	&	19.465/0.490	&	19.703/0.502	&	19.399/0.531	&	18.945/0.527	&	18.587/0.503	&	19.683/0.551	&	\textbf{23.152}/\textbf{0.734}\\
		    Image(6) & 14.105/0.523	&	15.674/0.599	&	16.885/0.672	&	20.992/0.842	&	17.322/0.685	&	17.635/0.703	&	17.687/0.710	&	17.711/0.713	&	16.929/0.674	&	17.772/0.718	&	\textbf{21.384}/\textbf{0.858}\\
		    Image(7) & 12.230/0.471	&	14.049/0.593	&	15.408/0.705	&	20.556/0.874	&	14.970/0.653	&	15.751/0.690	&	16.034/0.714	&	15.744/0.721	&	15.507/0.699	&	16.055/0.723	&	\textbf{21.184}/\textbf{0.889}\\
		    Image(8) & 18.320/0.628	&	20.356/0.738	&	19.730/0.730	&	26.484/0.916	&	21.107/0.762	&	21.554/0.776	&	21.573/0.788	&	21.586/0.798	&	20.655/0.760	&	21.603/0.797	&	\textbf{26.825}/\textbf{0.922}\\
			\hline
			Aver. & 15.562	&	17.410 	&	18.244	&	22.919	&	18.789	&	19.268	&	19.149	&	18.787	&	18.337	&	19.227	&	\textbf{23.370}	\\ \toprule
	\end{tabular}}
\end{table*}	

\begin{figure}[htbp]
	\centering
	\tiny
	\begin{minipage}[t]{0.11\textwidth}
		\centerline{\includegraphics[width=1.8cm,height=1.8cm]{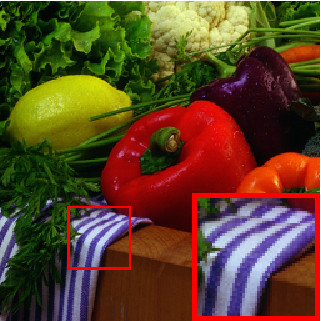}}
		\centerline{(a) Ground truth}   
	\end{minipage}

	\begin{minipage}[t]{0.11\textwidth}
	\centerline{\includegraphics[width=1.8cm,height=1.8cm]{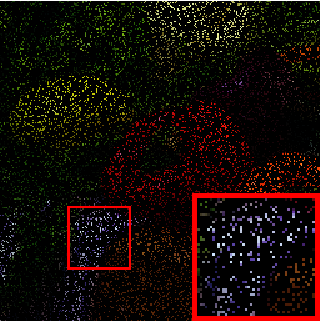}}
	\centerline{(b) Observation}
\end{minipage}
	\begin{minipage}[t]{0.11\textwidth}
		\centerline{\includegraphics[width=1.8cm,height=1.8cm]{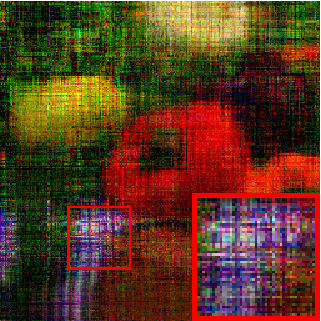}}
		\centerline{(c) WNNM}
	\end{minipage}
	\begin{minipage}[t]{0.11\textwidth}
		\centerline{\includegraphics[width=1.8cm,height=1.8cm]{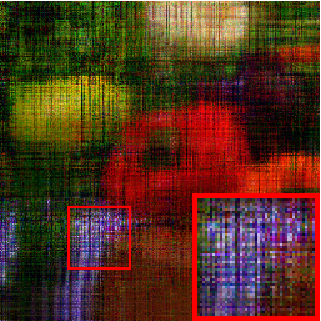}}
		\centerline{(d) MC-NC}
	\end{minipage}	
	\begin{minipage}[t]{0.11\textwidth}
		\centerline{\includegraphics[width=1.8cm,height=1.8cm]{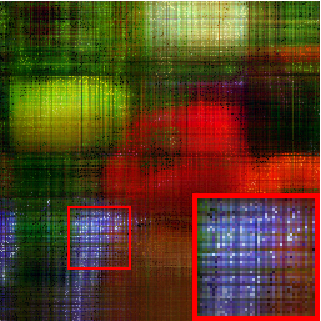}}
		\centerline{(e) IRLNM-QR}
	\end{minipage}
	\begin{minipage}[t]{0.11\textwidth}
		\centerline{\includegraphics[width=1.8cm,height=1.8cm]{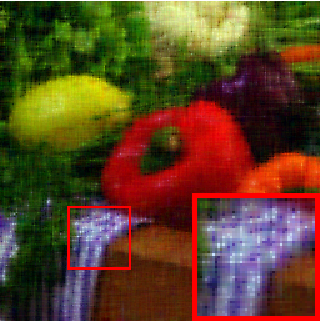}}
		\centerline{(f) TNN-SR}
	\end{minipage}
	\begin{minipage}[t]{0.11\textwidth}
		\centerline{\includegraphics[width=1.8cm,height=1.8cm]{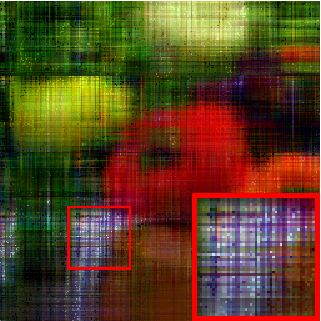}}
		\centerline{(g) QLNF}   
	\end{minipage}

	\begin{minipage}[t]{0.11\textwidth}
		\centerline{\includegraphics[width=1.8cm,height=1.8cm]{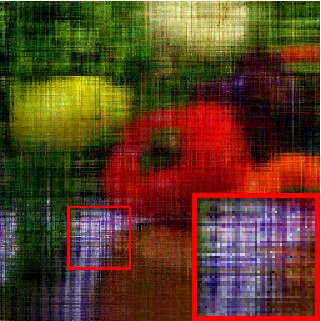}}
		\centerline{(h) TQLNA}
	\end{minipage}
	\begin{minipage}[t]{0.11\textwidth}
		\centerline{\includegraphics[width=1.8cm,height=1.8cm]{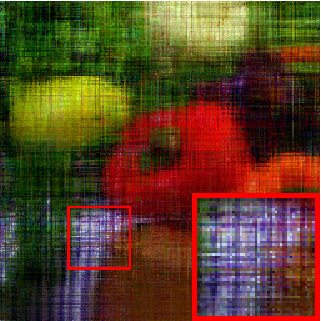}}
		\centerline{(i) LRQA-G}
	\end{minipage}
	\begin{minipage}[t]{0.11\textwidth}
		\centerline{\includegraphics[width=1.8cm,height=1.8cm]{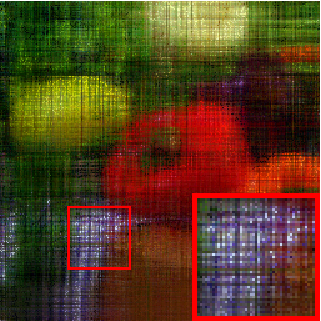}}
		\centerline{(j) LRQMC}
	\end{minipage}
	\begin{minipage}[t]{0.11\textwidth}
		\centerline{\includegraphics[width=1.8cm,height=1.8cm]{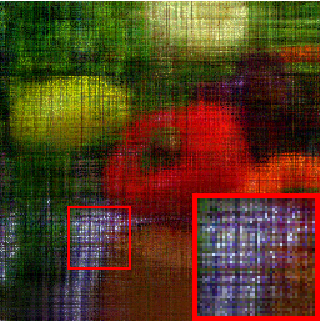}}
		\centerline{(k) QLNM-QQR}
	\end{minipage}
	\begin{minipage}[t]{0.11\textwidth}
		\centerline{\includegraphics[width=1.8cm,height=1.8cm]{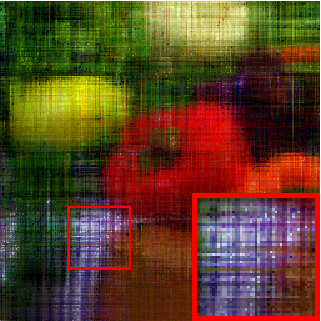}}
		\centerline{(l) IRQLNM-QQR}
	\end{minipage}
	\begin{minipage}[t]{0.11\textwidth}
		\centerline{\includegraphics[width=1.8cm,height=1.8cm]{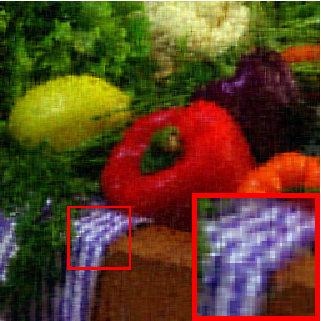}}
		\centerline{(m) QLNM-QQR-SR}
	\end{minipage}

	\begin{minipage}[t]{0.11\textwidth}
		\centering
		\centerline{\includegraphics[width=1.8cm,height=1.8cm]{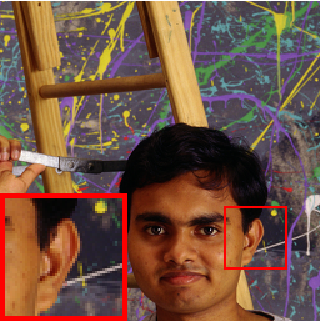}}
		\centerline{(a) Ground truth}   
	\end{minipage}

	\begin{minipage}[t]{0.11\textwidth}
	\centering
	\centerline{\includegraphics[width=1.8cm,height=1.8cm]{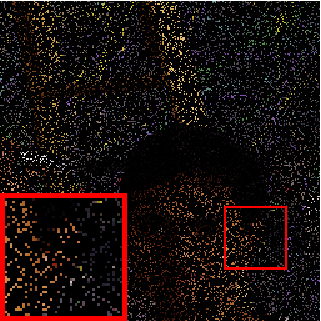}}
	\centerline{(b) Observation}
\end{minipage}
	\begin{minipage}[t]{0.11\textwidth}
		\centering
		\centerline{\includegraphics[width=1.8cm,height=1.8cm]{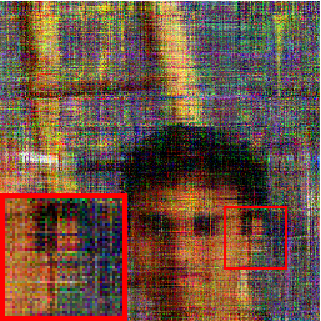}}
		\centerline{(c) WNNM}
	\end{minipage}
	\begin{minipage}[t]{0.11\textwidth}
		\centering
		\centerline{\includegraphics[width=1.8cm,height=1.8cm]{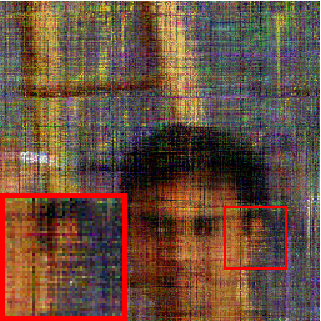}}
		\centerline{(d) MC-NC}
	\end{minipage}
	\begin{minipage}[t]{0.11\textwidth}
		\centering
		\centerline{\includegraphics[width=1.8cm,height=1.8cm]{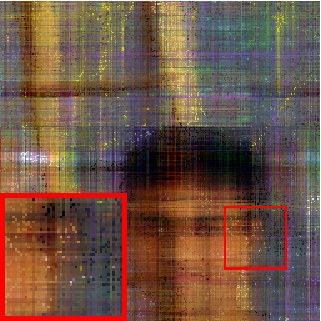}}
		\centerline{(e) IRLNM-QR}
	\end{minipage}
	\begin{minipage}[t]{0.11\textwidth}
		\centering
		\centerline{\includegraphics[width=1.8cm,height=1.8cm]{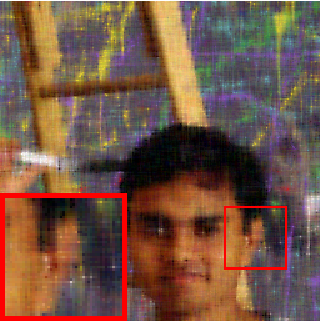}}
		\centerline{(f) TNN-SR}
	\end{minipage}
	\begin{minipage}[t]{0.11\textwidth}
		\centering
		\centerline{\includegraphics[width=1.8cm,height=1.8cm]{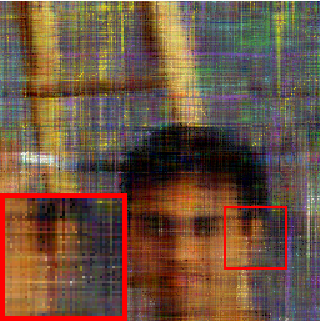}}
		\centerline{(g) QLNF}   
	\end{minipage}

	\begin{minipage}[t]{0.11\textwidth}
		\centering
		\centerline{\includegraphics[width=1.8cm,height=1.8cm]{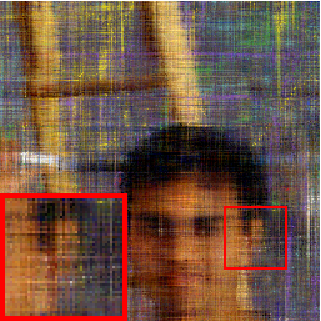}}
		\centerline{(h) TQLNA}
	\end{minipage}
	\begin{minipage}[t]{0.11\textwidth}
		\centering
		\centerline{\includegraphics[width=1.8cm,height=1.8cm]{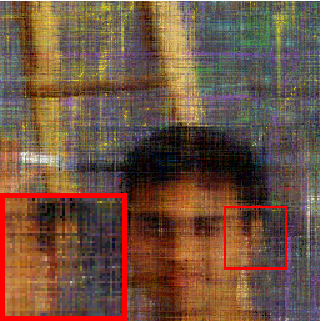}}
		\centerline{(i) LRQA-G}
	\end{minipage}
	\begin{minipage}[t]{0.11\textwidth}
		\centering
		\centerline{\includegraphics[width=1.8cm,height=1.8cm]{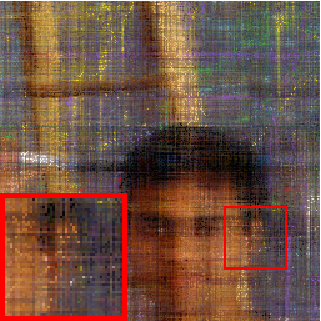}}
		\centerline{(j) LRQMC}
	\end{minipage}
	\begin{minipage}[t]{0.11\textwidth}
		\centering
		\centerline{\includegraphics[width=1.8cm,height=1.8cm]{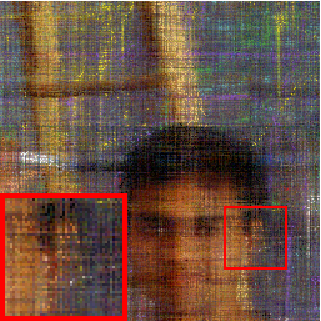}}
		\centerline{(k) QLNM-QQR}
	\end{minipage}
	\begin{minipage}[t]{0.11\textwidth}
		\centering
		\centerline{\includegraphics[width=1.8cm,height=1.8cm]{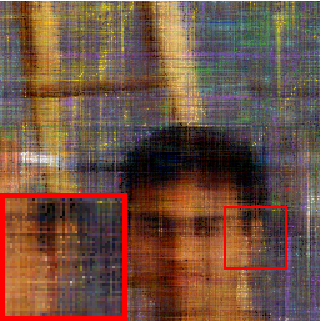}}
		\centerline{(l) IRQLNM-QQR}
	\end{minipage}
	\begin{minipage}[t]{0.11\textwidth}
		\centering
		\centerline{\includegraphics[width=1.8cm,height=1.8cm]{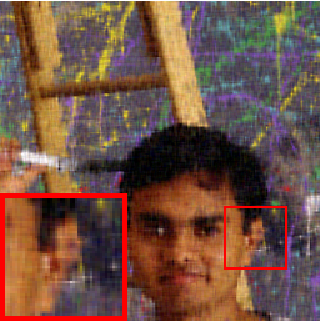}}
		\centerline{(m) QLNM-QQR-SR}
	\end{minipage}
	\caption{The image recovery outcomes of $\text{MR}=85\%$ on Image(2) and Image(5). (a) Ground truth. (b) Observation. (c)-(m) are the recovery outcomes of WNNM, MC-NC, IRLNM-QR, TNN-SR, QLNF, TQLNA, LRQA-G, LRQMC, QLNM-QQR, IRQLNM-QQR, and QLNM-QQR-SR, respectively.}
	\label{fig:IMAGE25}
\end{figure}

\indent
\textbf{Experimental results on the color medical images:} 
The continuous development of medical imaging equipment has revolutionized healthcare by providing accurate and detailed visual information about the human body. Magnetic resonance imaging (MRI) and positron emission tomography (PET) are widely used imaging techniques for capturing organs' structural and functional characteristics, respectively. Fusing these two types of data can greatly enhance the interpretation of tissue and organ behavior, thereby improving the accuracy of diagnoses. Therefore, the analysis of their overlay is of great importance in medical diagnosis. However, medical images may be incomplete for various reasons, such as equipment limitations or patient movements during examinations, which can negatively impact the accuracy of disease diagnosis. Quaternion completion-based models can be employed to address this issue.\\
\indent
Eight color medical images of size $256\times256$, obtained from ``The Whole Brain Atlas''$\footnote{http://www.med.harvard.edu/AANLIB/home.html}$ medical image database provided by Harvard Medical School, were used in the experiments. To leverage the information in the color medical images more effectively, we preprocessed the eight medical images by extracting sub-images with a size of $141\times141$ for subsequent experimental analysis. \cref{fig:medical_ori} shows the eight original medical images along with their corresponding sub-images obtained after preprocessing. 

\begin{figure}[htbp]
	\centering
	\tiny
	\begin{minipage}[t]{0.11\textwidth}
		\centering
		\centerline{\includegraphics[width=1.8cm,height=1.8cm]{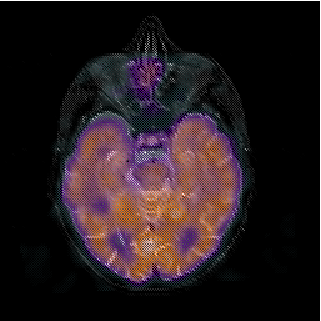}}
		\centerline{(a) Image(9)}    
	\end{minipage}
	\begin{minipage}[t]{0.11\textwidth}
		\centering
		\centerline{\includegraphics[width=1.8cm,height=1.8cm]{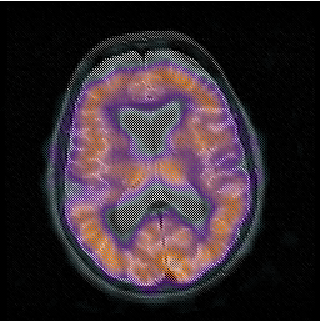}}
		\centerline{(b) Image(10)}
	\end{minipage}
	\begin{minipage}[t]{0.11\textwidth}
		\centering
		\centerline{\includegraphics[width=1.8cm,height=1.8cm]{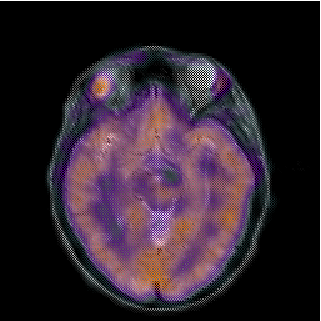}}
		\centerline{(c) Image(11)}
	\end{minipage}
	\begin{minipage}[t]{0.11\textwidth}
		\centering
		\centerline{\includegraphics[width=1.8cm,height=1.8cm]{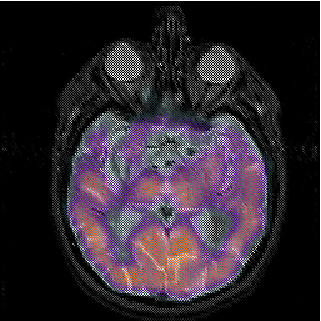}}
		\centerline{(d) Image(12)}
	\end{minipage}
	\begin{minipage}[t]{0.11\textwidth}
		\centering
		\centerline{\includegraphics[width=1.8cm,height=1.8cm]{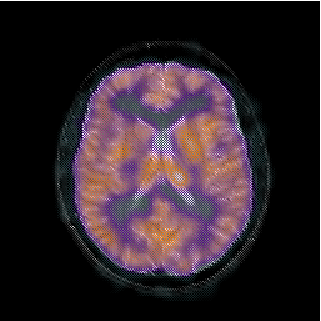}}
		\centerline{(e) Image(13)}
	\end{minipage}
	\begin{minipage}[t]{0.11\textwidth}
		\centering
		\centerline{\includegraphics[width=1.8cm,height=1.8cm]{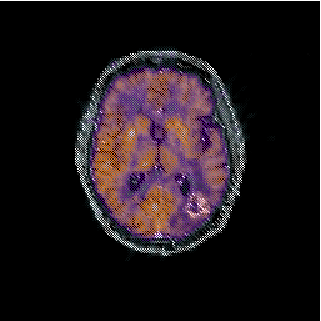}}
		\centerline{(f) Image(14)}
	\end{minipage}
	\begin{minipage}[t]{0.11\textwidth}
		\centering
		\centerline{\includegraphics[width=1.8cm,height=1.8cm]{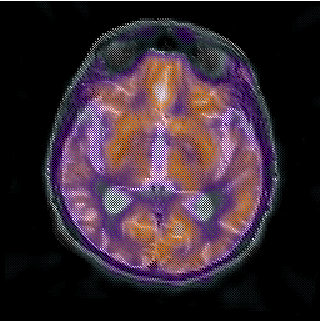}}
		\centerline{(g) Image(15)}
	\end{minipage}
	\begin{minipage}[t]{0.11\textwidth}
		\centering
		\centerline{\includegraphics[width=1.8cm,height=1.8cm]{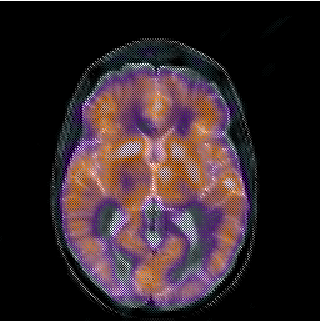}}
		\centerline{(h) Image(16)}
	\end{minipage}

	\begin{minipage}[t]{0.11\textwidth}
	\centering
	\centerline{\includegraphics[width=1.95cm,height=1.95cm]{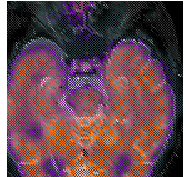}}
	\centerline{(i) Image(9)$\_$Sub}    
\end{minipage}
\begin{minipage}[t]{0.11\textwidth}
	\centering
	\centerline{\includegraphics[width=1.95cm,height=1.95cm]{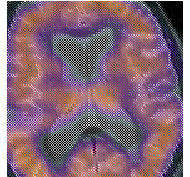}}
	\centerline{(j) Image(10)$\_$Sub}
\end{minipage}
\begin{minipage}[t]{0.11\textwidth}
	\centering
	\centerline{\includegraphics[width=1.95cm,height=1.95cm]{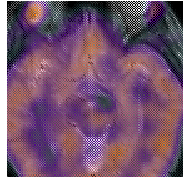}}
	\centerline{(k) Image(11)$\_$Sub}
\end{minipage}
\begin{minipage}[t]{0.11\textwidth}
	\centering
	\centerline{\includegraphics[width=1.95cm,height=1.95cm]{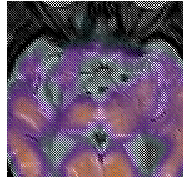}}
	\centerline{(l) Image(12)$\_$Sub}
\end{minipage}
\begin{minipage}[t]{0.11\textwidth}
	\centering
	\centerline{\includegraphics[width=1.95cm,height=1.95cm]{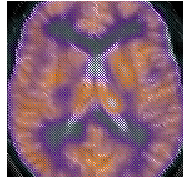}}
	\centerline{(m) Image(13)$\_$Sub}
\end{minipage}
\begin{minipage}[t]{0.11\textwidth}
	\centering
	\centerline{\includegraphics[width=1.95cm,height=1.95cm]{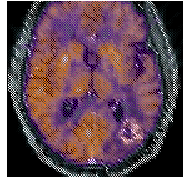}}
	\centerline{(n) Image(14)$\_$Sub}
\end{minipage}
\begin{minipage}[t]{0.11\textwidth}
	\centering
	\centerline{\includegraphics[width=1.95cm,height=1.95cm]{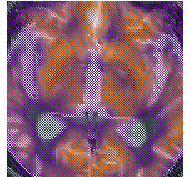}}
	\centerline{(o) Image(15)$\_$Sub}
\end{minipage}
\begin{minipage}[t]{0.11\textwidth}
	\centering
	\centerline{\includegraphics[width=1.95cm,height=1.95cm]{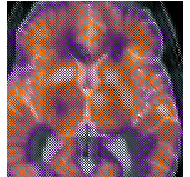}}
	\centerline{(p) Image(16)$\_$Sub}
\end{minipage}
	\caption{(a)-(h) are the eight original color medical images with size $256\times256\times3$. (i)-(p) Ground truth: Image(9)$\_$Sub-Image(16)$\_$Sub are the corresponding sub-images with size $141\times141\times3$.}
	\label{fig:medical_ori}
\end{figure}

\indent
For the experiment conducted at an MR=$85\%$, we used the same parameter settings for $\mu^{0}$, $\rho$, and $\beta$ in the QLNM-QQR, IRQLNM-QQR, and QLNM-QQR-SR methods as in the previous experiment. The $r$ values for QLNM-QQR, IRQLNM-QQR, and QLNM-QQR-SR were set to 55, 80, and 45, respectively. The IRLNM-QQR method was also implemented with the values of $\varsigma$ and V set to 10 and 3, respectively, in (\ref{IRQLNM_QQR_wei}).
\cref{fig:medicalrandom} displays the recovered color medical images using different methods for visual comparison. \cref{fig:medicalrandom} visually demonstrates the superiority of our proposed QLNM-QQR-SR approach over all other methods.
The numerical comparison of different methods in terms of PSNR and SSIM values for recovered medical images at MR=$85\%$ is shown in Table \ref{tablecolormedical}. 
The results show that the IRQLNM-QQR method is superior to the QLNM-QQR and LRQA-G methods regarding numerical and visual results.
It is challenging to improve the quality of recovered images at this level of MR. However, our proposed QLNM-QQR-SR method outperforms other methods in terms of PSNR and SSIM values, as demonstrated by Table \ref{tablecolormedical}. 

\begin{figure*}[htbp]
	\centering	
	\begin{minipage}[h]{0.065\textwidth}
		\centering
		\begin{minipage}{1\textwidth}
			\centering
			\includegraphics[width=1.2cm,height=1.2cm]{SR_0.15_medical/01_cut.eps}
		\end{minipage} 
		\hfill\\
		\begin{minipage}{1\textwidth}
			\centering
			\includegraphics[width=1.2cm,height=1.2cm]{SR_0.15_medical/02_cut.eps}
		\end{minipage} 
		\hfill\\
		\begin{minipage}{1\textwidth}
			\centering
			\includegraphics[width=1.2cm,height=1.2cm]{SR_0.15_medical/06_cut.eps}
		\end{minipage} 
		\hfill\\
		\begin{minipage}{1\textwidth}
			\centering
			\includegraphics[width=1.2cm,height=1.2cm]{SR_0.15_medical/07_cut.eps}
		\end{minipage} 
		\hfill\\	
		\begin{minipage}{1\textwidth}
			\centering
			\includegraphics[width=1.2cm,height=1.2cm]{SR_0.15_medical/08_cut.eps}
		\end{minipage} 
		\hfill\\
		\begin{minipage}{1\textwidth}
			\centering
			\includegraphics[width=1.2cm,height=1.2cm]{SR_0.15_medical/15_cut.eps}
		\end{minipage} 
		\hfill\\
		\begin{minipage}{1\textwidth}
			\centering
			\includegraphics[width=1.2cm,height=1.2cm]{SR_0.15_medical/16_cut.eps}
		\end{minipage} 
		\hfill\\
		\begin{minipage}{1\textwidth}
			\centering
			\includegraphics[width=1.2cm,height=1.2cm]{SR_0.15_medical/22_cut.eps}
		\end{minipage} 
		\hfill\\
		\caption*{(a)}
	\end{minipage}
	\begin{minipage}[h]{0.065\textwidth}
		\centering
		\begin{minipage}{1\textwidth}
			\centering
			\includegraphics[width=1.2cm,height=1.2cm]{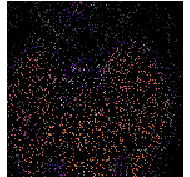}
		\end{minipage} 
		\hfill\\
		\begin{minipage}{1\textwidth}
			\centering
			\includegraphics[width=1.2cm,height=1.2cm]{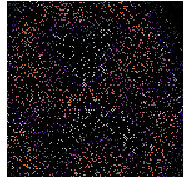}
		\end{minipage} 
		\hfill\\
		\begin{minipage}{1\textwidth}
			\centering
			\includegraphics[width=1.2cm,height=1.2cm]{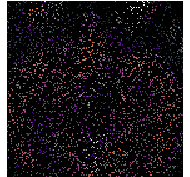}
		\end{minipage} 
		\hfill\\
		\begin{minipage}{1\textwidth}
			\centering
			\includegraphics[width=1.2cm,height=1.2cm]{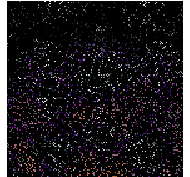}
		\end{minipage} 
		\hfill\\
		\begin{minipage}{1\textwidth}
			\centering
			\includegraphics[width=1.2cm,height=1.2cm]{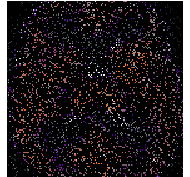}
		\end{minipage} 
		\hfill\\
		\begin{minipage}{1\textwidth}
			\centering
			\includegraphics[width=1.2cm,height=1.2cm]{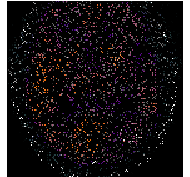}
		\end{minipage} 
		\hfill\\
		\begin{minipage}{1\textwidth}
			\centering
			\includegraphics[width=1.2cm,height=1.2cm]{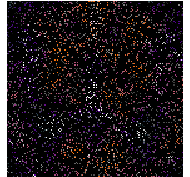}
		\end{minipage} 
		\hfill\\
		\begin{minipage}{1\textwidth}
			\centering
			\includegraphics[width=1.2cm,height=1.2cm]{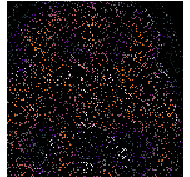}
		\end{minipage} 
		\hfill\\
		\caption*{(b)}
	\end{minipage}
	\begin{minipage}[h]{0.065\textwidth}
		\centering
		\begin{minipage}{1\textwidth}
			\centering
			\includegraphics[width=1.2cm,height=1.2cm]{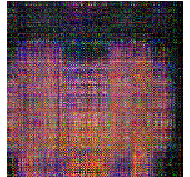}
		\end{minipage} 
		\hfill\\
		\begin{minipage}{1\textwidth}
			\centering
			\includegraphics[width=1.2cm,height=1.2cm]{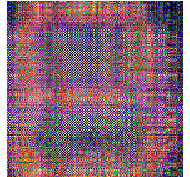}
		\end{minipage} 
		\hfill\\
		\begin{minipage}{1\textwidth}
			\centering
			\includegraphics[width=1.2cm,height=1.2cm]{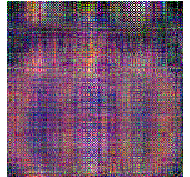}
		\end{minipage} 
		\hfill\\
		\begin{minipage}{1\textwidth}
			\centering
			\includegraphics[width=1.2cm,height=1.2cm]{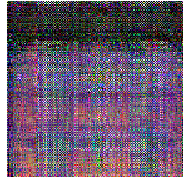}
		\end{minipage} 
		\hfill\\
		\begin{minipage}{1\textwidth}
			\centering
			\includegraphics[width=1.2cm,height=1.2cm]{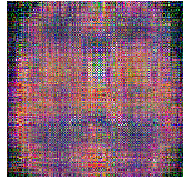}
		\end{minipage} 
		\hfill\\
		\begin{minipage}{1\textwidth}
			\centering
			\includegraphics[width=1.2cm,height=1.2cm]{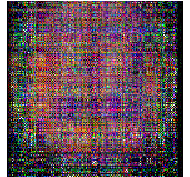}
		\end{minipage} 
		\hfill\\
		\begin{minipage}{1\textwidth}
			\centering
			\includegraphics[width=1.2cm,height=1.2cm]{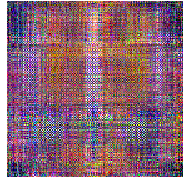}
		\end{minipage} 
		\hfill\\
		\begin{minipage}{1\textwidth}
			\centering
			\includegraphics[width=1.2cm,height=1.2cm]{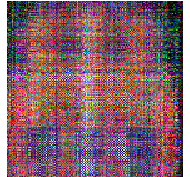}
		\end{minipage} 
		\hfill\\
		\caption*{(c)}
	\end{minipage}
	\begin{minipage}[h]{0.065\textwidth}
		\centering
		\begin{minipage}{1\textwidth}
			\centering
			\includegraphics[width=1.2cm,height=1.2cm]{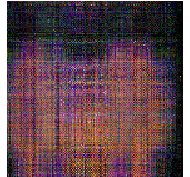}
		\end{minipage} 
		\hfill\\
		\begin{minipage}{1\textwidth}
			\centering
			\includegraphics[width=1.2cm,height=1.2cm]{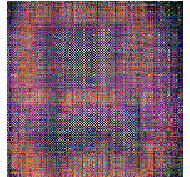}
		\end{minipage} 
		\hfill\\
		\begin{minipage}{1\textwidth}
			\centering
			\includegraphics[width=1.2cm,height=1.2cm]{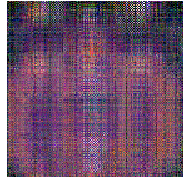}
		\end{minipage} 
		\hfill\\
		\begin{minipage}{1\textwidth}
			\centering
			\includegraphics[width=1.2cm,height=1.2cm]{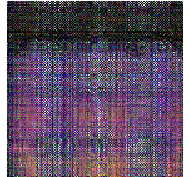}
		\end{minipage} 
		\hfill\\
		\begin{minipage}{1\textwidth}
			\centering
			\includegraphics[width=1.2cm,height=1.2cm]{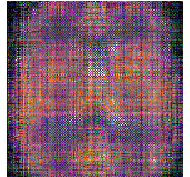}
		\end{minipage} 
		\hfill\\
		\begin{minipage}{1\textwidth}
			\centering
			\includegraphics[width=1.2cm,height=1.2cm]{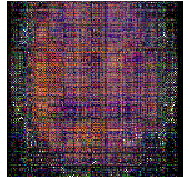}
		\end{minipage} 
		\hfill\\
		\begin{minipage}{1\textwidth}
			\centering
			\includegraphics[width=1.2cm,height=1.2cm]{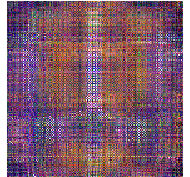}
		\end{minipage} 
		\hfill\\
		\begin{minipage}{1\textwidth}
			\centering
			\includegraphics[width=1.2cm,height=1.2cm]{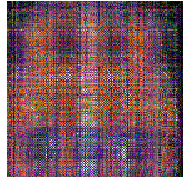}
		\end{minipage} 
		\hfill\\
		\caption*{(d)}
	\end{minipage}
	\begin{minipage}[h]{0.065\textwidth}
		\centering
		\begin{minipage}{1\textwidth}
			\centering
			\includegraphics[width=1.2cm,height=1.2cm]{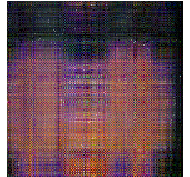}
		\end{minipage} 
		\hfill\\
		\begin{minipage}{1\textwidth}
			\centering
			\includegraphics[width=1.2cm,height=1.2cm]{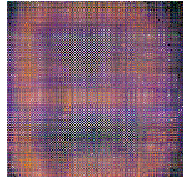}
		\end{minipage} 
		\hfill\\
		\begin{minipage}{1\textwidth}
			\centering
			\includegraphics[width=1.2cm,height=1.2cm]{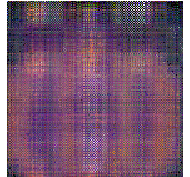}
		\end{minipage} 
		\hfill\\
		\begin{minipage}{1\textwidth}
			\centering
			\includegraphics[width=1.2cm,height=1.2cm]{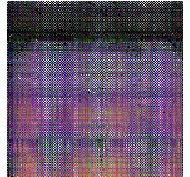}
		\end{minipage} 
		\hfill\\
		\begin{minipage}{1\textwidth}
			\centering
			\includegraphics[width=1.2cm,height=1.2cm]{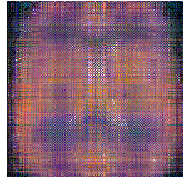}
		\end{minipage} 
		\hfill\\
		\begin{minipage}{1\textwidth}
			\centering
			\includegraphics[width=1.2cm,height=1.2cm]{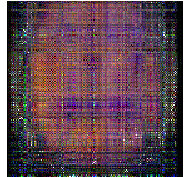}
		\end{minipage} 
		\hfill\\
		\begin{minipage}{1\textwidth}
			\centering
			\includegraphics[width=1.2cm,height=1.2cm]{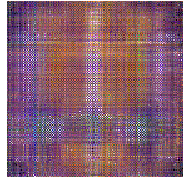}
		\end{minipage} 
		\hfill\\
		\begin{minipage}{1\textwidth}
			\centering
			\includegraphics[width=1.2cm,height=1.2cm]{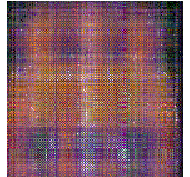}
		\end{minipage} 
		\hfill\\
		\caption*{(e)}
	\end{minipage}
	\begin{minipage}[h]{0.065\textwidth}
		\centering
		\begin{minipage}{1\textwidth}
			\centering
			\includegraphics[width=1.2cm,height=1.2cm]{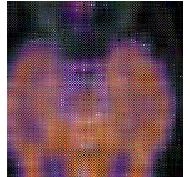}
		\end{minipage} 
		\hfill\\
		\begin{minipage}{1\textwidth}
			\centering
			\includegraphics[width=1.2cm,height=1.2cm]{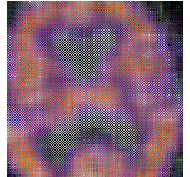}
		\end{minipage} 
		\hfill\\
		\begin{minipage}{1\textwidth}
			\centering
			\includegraphics[width=1.2cm,height=1.2cm]{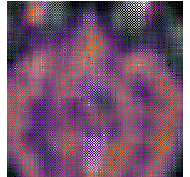}
		\end{minipage} 
		\hfill\\
		\begin{minipage}{1\textwidth}
			\centering
			\includegraphics[width=1.2cm,height=1.2cm]{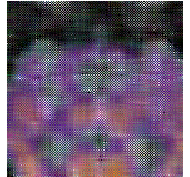}
		\end{minipage} 
		\hfill\\
		\begin{minipage}{1\textwidth}
			\centering
			\includegraphics[width=1.2cm,height=1.2cm]{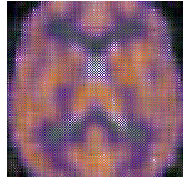}
		\end{minipage} 
		\hfill\\
		\begin{minipage}{1\textwidth}
			\centering
			\includegraphics[width=1.2cm,height=1.2cm]{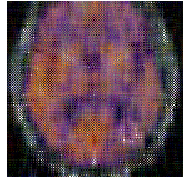}
		\end{minipage} 
		\hfill\\
		\begin{minipage}{1\textwidth}
			\centering
			\includegraphics[width=1.2cm,height=1.2cm]{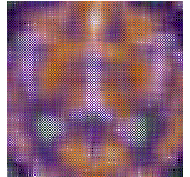}
		\end{minipage} 
		\hfill\\
		\begin{minipage}{1\textwidth}
			\centering
			\includegraphics[width=1.2cm,height=1.2cm]{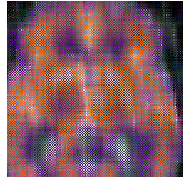}
		\end{minipage} 
		\hfill\\
		\caption*{(f)}
	\end{minipage}
	\begin{minipage}[h]{0.065\textwidth}
		\centering
		\begin{minipage}{1\textwidth}
			\centering
			\includegraphics[width=1.2cm,height=1.2cm]{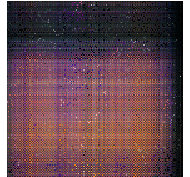}
		\end{minipage} 
		\hfill\\
		\begin{minipage}{1\textwidth}
			\centering
			\includegraphics[width=1.2cm,height=1.2cm]{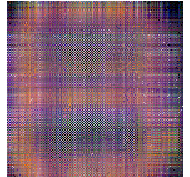}
		\end{minipage} 
		\hfill\\
		\begin{minipage}{1\textwidth}
			\centering
			\includegraphics[width=1.2cm,height=1.2cm]{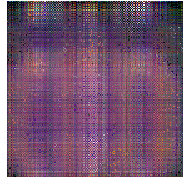}
		\end{minipage} 
		\hfill\\
		\begin{minipage}{1\textwidth}
			\centering
			\includegraphics[width=1.2cm,height=1.2cm]{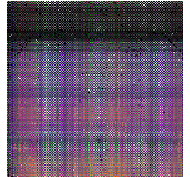}
		\end{minipage} 
		\hfill\\
		\begin{minipage}{1\textwidth}
			\centering
			\includegraphics[width=1.2cm,height=1.2cm]{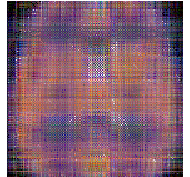}
		\end{minipage} 
		\hfill\\
		\begin{minipage}{1\textwidth}
			\centering
			\includegraphics[width=1.2cm,height=1.2cm]{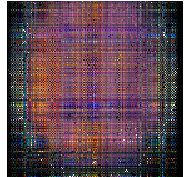}
		\end{minipage} 
		\hfill\\
		\begin{minipage}{1\textwidth}
			\centering
			\includegraphics[width=1.2cm,height=1.2cm]{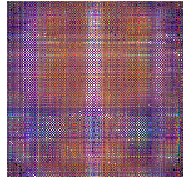}
		\end{minipage} 
		\hfill\\
		\begin{minipage}{1\textwidth}
			\centering
			\includegraphics[width=1.2cm,height=1.2cm]{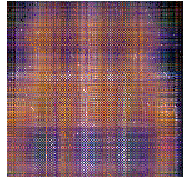}
		\end{minipage} 
		\hfill\\
		\caption*{(g)}
	\end{minipage}
	\begin{minipage}[h]{0.065\textwidth}
		\centering
		\begin{minipage}{1\textwidth}
			\centering
			\includegraphics[width=1.2cm,height=1.2cm]{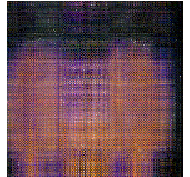}
		\end{minipage} 
		\hfill\\
		\begin{minipage}{1\textwidth}
			\centering
			\includegraphics[width=1.2cm,height=1.2cm]{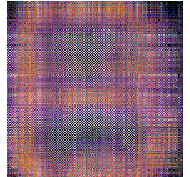}
		\end{minipage} 
		\hfill\\
		\begin{minipage}{1\textwidth}
			\centering
			\includegraphics[width=1.2cm,height=1.2cm]{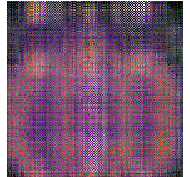}
		\end{minipage} 
		\hfill\\
		\begin{minipage}{1\textwidth}
			\centering
			\includegraphics[width=1.2cm,height=1.2cm]{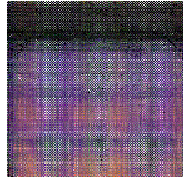}
		\end{minipage} 
		\hfill\\
		\begin{minipage}{1\textwidth}
			\centering
			\includegraphics[width=1.2cm,height=1.2cm]{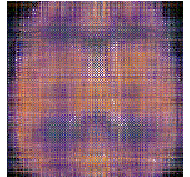}
		\end{minipage} 
		\hfill\\
		\begin{minipage}{1\textwidth}
			\centering
			\includegraphics[width=1.2cm,height=1.2cm]{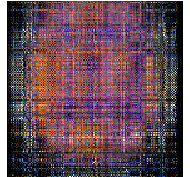}
		\end{minipage} 
		\hfill\\
		\begin{minipage}{1\textwidth}
			\centering
			\includegraphics[width=1.2cm,height=1.2cm]{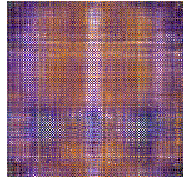}
		\end{minipage} 
		\hfill\\
		\begin{minipage}{1\textwidth}
			\centering
			\includegraphics[width=1.2cm,height=1.2cm]{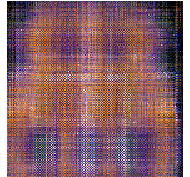}
		\end{minipage} 
		\hfill\\
		\caption*{(h)}
	\end{minipage}
	\begin{minipage}[h]{0.065\textwidth}
		\centering
		\begin{minipage}{1\textwidth}
			\centering
			\includegraphics[width=1.2cm,height=1.2cm]{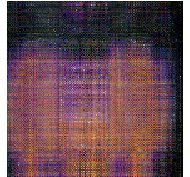}
		\end{minipage} 
		\hfill\\
		\begin{minipage}{1\textwidth}
			\centering
			\includegraphics[width=1.2cm,height=1.2cm]{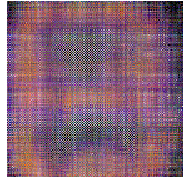}
		\end{minipage} 
		\hfill\\
		\begin{minipage}{1\textwidth}
			\centering
			\includegraphics[width=1.2cm,height=1.2cm]{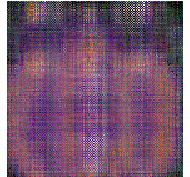}
		\end{minipage} 
		\hfill\\
		\begin{minipage}{1\textwidth}
			\centering
			\includegraphics[width=1.2cm,height=1.2cm]{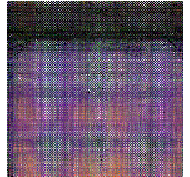}
		\end{minipage} 
		\hfill\\
		\begin{minipage}{1\textwidth}
			\centering
			\includegraphics[width=1.2cm,height=1.2cm]{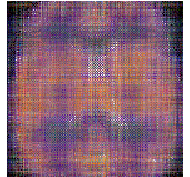}
		\end{minipage} 
		\hfill\\
		\begin{minipage}{1\textwidth}
			\centering
			\includegraphics[width=1.2cm,height=1.2cm]{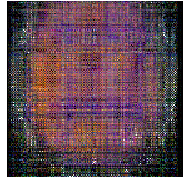}
		\end{minipage} 
		\hfill\\
		\begin{minipage}{1\textwidth}
			\centering
			\includegraphics[width=1.2cm,height=1.2cm]{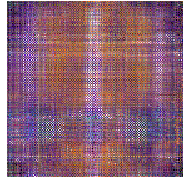}
		\end{minipage} 
		\hfill\\
		\begin{minipage}{1\textwidth}
			\centering
			\includegraphics[width=1.2cm,height=1.2cm]{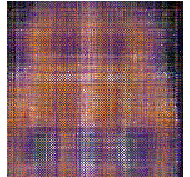}
		\end{minipage} 
		\hfill\\
		\caption*{(i)}
	\end{minipage}
	\begin{minipage}[h]{0.065\textwidth}
		\centering
		\begin{minipage}{1\textwidth}
			\centering
			\includegraphics[width=1.2cm,height=1.2cm]{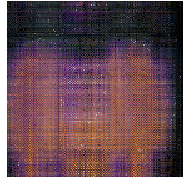}
		\end{minipage} 
		\hfill\\
		\begin{minipage}{1\textwidth}
			\centering
			\includegraphics[width=1.2cm,height=1.2cm]{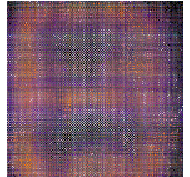}
		\end{minipage} 
		\hfill\\
		\begin{minipage}{1\textwidth}
			\centering
			\includegraphics[width=1.2cm,height=1.2cm]{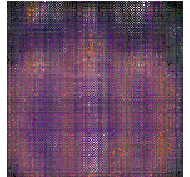}
		\end{minipage} 
		\hfill\\
		\begin{minipage}{1\textwidth}
			\centering
			\includegraphics[width=1.2cm,height=1.2cm]{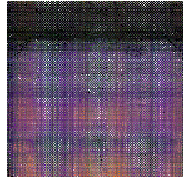}
		\end{minipage} 
		\hfill\\
		\begin{minipage}{1\textwidth}
			\centering
			\includegraphics[width=1.2cm,height=1.2cm]{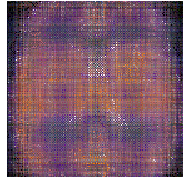}
		\end{minipage} 
		\hfill\\
		\begin{minipage}{1\textwidth}
			\centering
			\includegraphics[width=1.2cm,height=1.2cm]{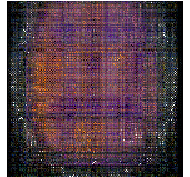}
		\end{minipage} 
		\hfill\\
		\begin{minipage}{1\textwidth}
			\centering
			\includegraphics[width=1.2cm,height=1.2cm]{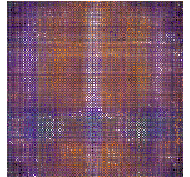}
		\end{minipage} 
		\hfill\\
		\begin{minipage}{1\textwidth}
			\centering
			\includegraphics[width=1.2cm,height=1.2cm]{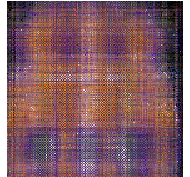}
		\end{minipage} 
		\hfill\\
		\caption*{(j)}
	\end{minipage}
	\begin{minipage}[h]{0.065\textwidth}
		\centering
		\begin{minipage}{1\textwidth}
			\centering
			\includegraphics[width=1.2cm,height=1.2cm]{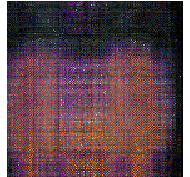}
		\end{minipage} 
		\hfill\\
		\begin{minipage}{1\textwidth}
			\centering
			\includegraphics[width=1.2cm,height=1.2cm]{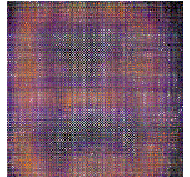}
		\end{minipage} 
		\hfill\\
		\begin{minipage}{1\textwidth}
			\centering
			\includegraphics[width=1.2cm,height=1.2cm]{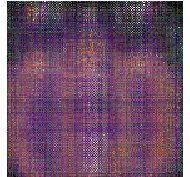}
		\end{minipage} 
		\hfill\\
		\begin{minipage}{1\textwidth}
			\centering
			\includegraphics[width=1.2cm,height=1.2cm]{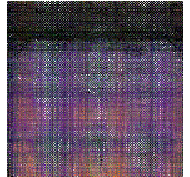}
		\end{minipage} 
		\hfill\\
		\begin{minipage}{1\textwidth}
			\centering
			\includegraphics[width=1.2cm,height=1.2cm]{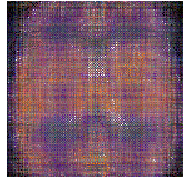}
		\end{minipage} 
		\hfill\\
		\begin{minipage}{1\textwidth}
			\centering
			\includegraphics[width=1.2cm,height=1.2cm]{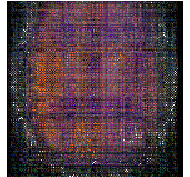}
		\end{minipage} 
		\hfill\\
		\begin{minipage}{1\textwidth}
			\centering
			\includegraphics[width=1.2cm,height=1.2cm]{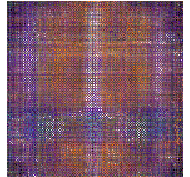}
		\end{minipage} 
		\hfill\\
		\begin{minipage}{1\textwidth}
			\centering
			\includegraphics[width=1.2cm,height=1.2cm]{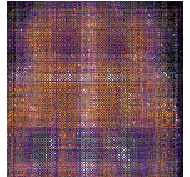}
		\end{minipage} 
		\hfill\\
		\caption*{(k)}
	\end{minipage}
	\begin{minipage}[h]{0.065\textwidth}
		\centering
		\begin{minipage}{1\textwidth}
			\centering
			\includegraphics[width=1.2cm,height=1.2cm]{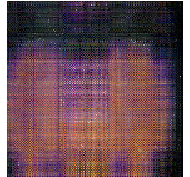}
		\end{minipage} 
		\hfill\\
		\begin{minipage}{1\textwidth}
			\centering
			\includegraphics[width=1.2cm,height=1.2cm]{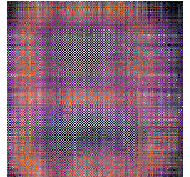}
		\end{minipage} 
		\hfill\\
		\begin{minipage}{1\textwidth}
			\centering
			\includegraphics[width=1.2cm,height=1.2cm]{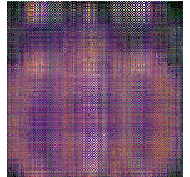}
		\end{minipage} 
		\hfill\\
		\begin{minipage}{1\textwidth}
			\centering
			\includegraphics[width=1.2cm,height=1.2cm]{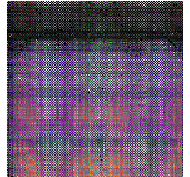}
		\end{minipage} 
		\hfill\\
		\begin{minipage}{1\textwidth}
			\centering
			\includegraphics[width=1.2cm,height=1.2cm]{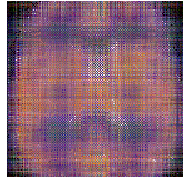}
		\end{minipage} 
		\hfill\\
		\begin{minipage}{1\textwidth}
			\centering
			\includegraphics[width=1.2cm,height=1.2cm]{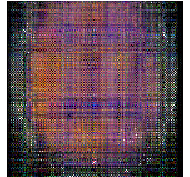}
		\end{minipage} 
		\hfill\\
		\begin{minipage}{1\textwidth}
			\centering
			\includegraphics[width=1.2cm,height=1.2cm]{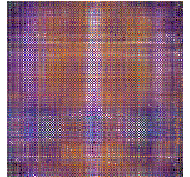}
		\end{minipage} 
		\hfill\\
		\begin{minipage}{1\textwidth}
			\centering
			\includegraphics[width=1.2cm,height=1.2cm]{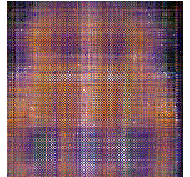}
		\end{minipage} 
		\hfill\\
		\caption*{(l)}
	\end{minipage}
	\begin{minipage}[h]{0.065\textwidth}
		\centering
		\begin{minipage}{1\textwidth}
			\centering
			\includegraphics[width=1.2cm,height=1.2cm]{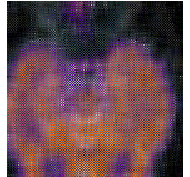}
		\end{minipage} 
		\hfill\\
		\begin{minipage}{1\textwidth}
			\centering
			\includegraphics[width=1.2cm,height=1.2cm]{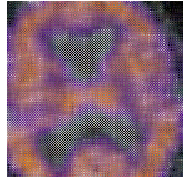}
		\end{minipage} 
		\hfill\\
		\begin{minipage}{1\textwidth}
			\centering
			\includegraphics[width=1.2cm,height=1.2cm]{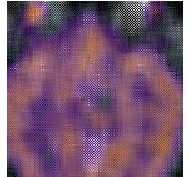}
		\end{minipage} 
		\hfill\\
		\begin{minipage}{1\textwidth}
			\centering
			\includegraphics[width=1.2cm,height=1.2cm]{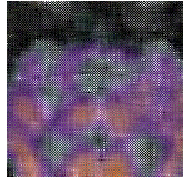}
		\end{minipage} 
		\hfill\\
		\begin{minipage}{1\textwidth}
			\centering
			\includegraphics[width=1.2cm,height=1.2cm]{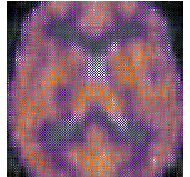}
		\end{minipage} 
		\hfill\\
		\begin{minipage}{1\textwidth}
			\centering
			\includegraphics[width=1.2cm,height=1.2cm]{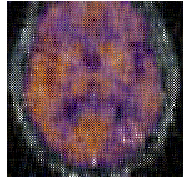}
		\end{minipage} 
		\hfill\\
		\begin{minipage}{1\textwidth}
			\centering
			\includegraphics[width=1.2cm,height=1.2cm]{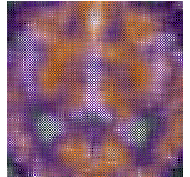}
		\end{minipage} 
		\hfill\\
		\begin{minipage}{1\textwidth}
			\centering
			\includegraphics[width=1.2cm,height=1.2cm]{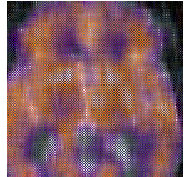}
		\end{minipage} 
		\hfill\\
		\caption*{(m)}
	\end{minipage}
	\hfill\\
	\caption{(a) Ground truth. From top to bottom: Image(9)$\_$Sub-Image(16)$\_$Sub. (b) Observation (MR=$75\%$). (c)-(m) are the restored results of WNNM, MC-NC, IRLNM-QR, TNN-SR, QLNF, TQLNA, LRQA-G, LRQMC, QLNM-QQR, IRQLNM-QQR, and QLNM-QQR-SR, respectively.}
	\label{fig:medicalrandom} 
\end{figure*}

\begin{table*}[t]
	\caption{A comparison of quantitative assessment indices (PSNR/SSIM) across different methods on the set of eight color medical images.}
	\label{tablecolormedical}
	\centering
	\resizebox{\textwidth}{!}{
		\begin{tabular}{|c|c|c|c|c|c|c|c|c|c|c|c|}		
			\hline
			Methods:& WNNM  &  MC-NC  &IRLNM-QR  & TNN-SR & QLNF  &TQLNA & LRQA-G & LRQMC   & QLNM-QQR &IRQLNM-QQR & QLNM-QQR-SR	
			\\ \toprule
			\hline
			Images:  &\multicolumn{11}{c|}{${\rm{MR}}=85\%$}\\
		    \hline
		     Image(9)$\_$Sub &	14.693/0.432	&	16.221/0.494	&	17.768/0.588	&	21.499/0.774	&	17.462/0.538	&	18.169/0.588	&	18.026/0.589	&	18.418/0.611	&	17.311/0.550	&	18.188/0.613	&	\textbf{21.729}/\textbf{0.781}\\
		     Image(10)$\_$Sub &	13.239/0.447	&	14.673/0.501	&	16.528/0.627	&	21.003/0.838	&	16.470/0.600	&	16.905/0.635	&	16.637/0.624	&	15.985/0.590	&	15.712/0.568	&	16.848/0.641	&	\textbf{21.494}/\textbf{0.854}\\
		     Image(11)$\_$Sub &    14.949/0.480	&	16.422/0.543	&	17.675/0.637	&	22.786/0.860	&	17.476/0.605	&	18.336/0.661	&	18.116/0.645	&	17.639/0.621	&	17.366/0.598	&	18.191/0.667	&	\textbf{23.031}/\textbf{0.870}\\
		     Image(12)$\_$Sub &	12.798/0.436	&	14.124/0.476	&	15.855/0.612	&	18.968/0.783	&	16.053/0.613	&	15.949/0.614	&	15.880/0.604	&	16.119/0.611	&	15.499/0.567	&	16.039/0.628	&	\textbf{19.073}/\textbf{0.787}\\
		     Image(13)$\_$Sub &	13.445/0.415	&	15.158/0.495	&	16.171/0.576	&	21.103/0.830	&	16.836/0.595	&	17.166/0.623	&	16.883/0.605	&	16.541/0.590	&	16.144/0.555	&	17.003/0.627	&	\textbf{21.437}/\textbf{0.843}\\
		     Image(14)$\_$Sub &	10.679/0.258	&	12.762/0.321	&	13.536/0.407	&	16.670/0.597	&	14.028/0.399	&	13.577/0.391	&	13.993/0.406	&	14.108/0.401	&	13.827/0.374	&	13.997/0.431	&	\textbf{16.844}/\textbf{0.611}\\
		     Image(15)$\_$Sub	& 12.684/0.468	&	14.198/0.527	&	15.830/0.640	&	20.442/0.861	&	15.531/0.617	&	16.031/0.652	&	15.858/0.635	&	15.541/0.614	&	15.265/0.585	&	15.986/0.656	&	\textbf{20.726}/\textbf{0.873}\\
		     Image(16)$\_$Sub	& 12.481/0.420	&	13.977/0.466	&	15.403/0.576	&	20.481/0.829	&	15.772/0.591	&	16.138/0.620	&	15.990/0.604	&	16.329/0.622	&	15.285/0.546	&	16.123/0.625	&	\textbf{20.933}/\textbf{0.843}\\
			\hline
			Aver. &  14.949		&	16.422		&	17.768		&	22.786		&	17.476		&	18.336		&	18.116		&	18.418		&	17.366	&		18.191	&		\textbf{23.031}	\\ \toprule
	\end{tabular}}
\end{table*}

\section{Conclusions}
In this study, we created a novel method called QLNM-QQR for completing color images using the tool of quaternion representation of color images. The method is based on the quaternion $L_{2,1}$-norm and a Tri-Factorization of a quaternion matrix called CQSVD-QQR. The coupling between color channels can be naturally handled with this approach and representing color pixels as vector units rather than scalars in the quaternion representation results in better retention of color information. The method avoids the need to calculate the QSVD of large quaternion matrices, which reduces the computational complexity compared to traditional LRQMC methods. According to theoretical analysis, the quaternion $L_{2,1}$-norm of a submatrix in QLNM-QQR is capable of converging to its QNN. To enhance its performance, we introduce an improved version called IRQLNM-QQR that uses iteratively reweighted quaternion $L_{2,1}$-norm minimization. According to theoretical analysis, IRQLNM-QQR is equally precise as an LRQA-W minimization method. Additionally, we incorporate sparse regularization into the QLNM-QQR method to develop QLNM-QQR-SR. The experimental results obtained from both natural color images and color medical images indicate that IRLNM-QQR achieves almost comparable accuracy to the LRQA-G method and outperforms QLNM-QQR in precision. The experimental results also demonstrate that the QLNM-QQR-SR method proposed in this research displays better performance in both numerical accuracy and visual quality compared to several state-of-the-art techniques. Besides, we have proven that the quaternion $L_{2,1}$-norm is an upper bound for the quaternion nuclear norm of a quaternion matrix. As a result, the proposed methods have broad applicability and can enhance the performance of a variety of techniques, including multiview data analysis, quaternion matrix/tensor completion, and low-rank representation based on quaternion nuclear norm.

\section*{Acknowledgments}
This work was supported by University of Macau (MYRG2019-00039-FST), Science and Technology Development Fund, Macao S.A.R (FDCT/0036/2021/AGJ), and Science and Technology Planning Project of Guangzhou City, China (Grant No. 201907010043).

\section*{Declaration of competing interest}

The authors declare that they have no known competing financial interests or personal relationships that could have appeared to influence the work reported in this paper.

 \appendix
 \section{Proof of Theorem \ref{L2,1mp}}
 \label{appendixA}
 \begin{proof}
 	The quaternion $L_{2,1}$-norm of $\dot{\mathbf{X}}$ can be rewritten as follows:
 	\begin{equation}
 		\|\dot{\mathbf{X}}\|_{2,1}=\sum_{n=1}^{N}\sqrt{\sum_{m=1}^{M}|\dot{\mathbf{X}}_{mn}|^{2}}=\sum_{n=1}^{N}\|\dot{\mathbf{X}}_{\cdot n}\|_{2},
 	\end{equation}
 	where $\|\dot{\mathbf{X}}_{\cdot n}\|_{2}=\sqrt{\sum_{m=1}^{M}|\dot{\mathbf{X}}_{mn}|^{2}}$. Since two of the terms in (\ref{L2,1_norm}) are convex, there is only one optimal solution. With the use of the associated theories for quaternion matrix derivatives in \cite{xu2015theory}, we get 
 	\begin{equation}
 		\begin{aligned}
 			\frac{\partial\mathcal{J}(\dot{\mathbf{X}}_{\cdot n})}{\partial\dot{\mathbf{X}}_{\cdot n}}&=\beta\frac{\partial \|\dot{\mathbf{X}}_{\cdot n}\|_{2}}{\partial\dot{\mathbf{X}}_{\cdot n}}+\frac{1}{2}\frac{\partial\text{Tr}\left[(\dot{\mathbf{X}}_{\cdot n}-\dot{\mathbf{Y}}_{\cdot n})^{H}(\dot{\mathbf{X}}_{\cdot n}-\dot{\mathbf{Y}}_{\cdot n})\right]}{\partial\dot{\mathbf{X}}_{\cdot n}}\\
 			&=\beta \frac{\dot{\mathbf{X}}_{\cdot n}}{\|\dot{\mathbf{X}}_{\cdot n}\|_{2}}+\frac{1}{4}\Big(\dot{\mathbf{X}}_{\cdot n}-\dot{\mathbf{Y}}_{\cdot n}\Big)\\
 			&=\beta \frac{\dot{\mathbf{Y}}_{\cdot n}}{\|\dot{\mathbf{Y}}_{\cdot n}\|_{2}}+\frac{1}{4}\Big(\dot{\mathbf{X}}_{\cdot n}-\dot{\mathbf{Y}}_{\cdot n}\Big) \: \Big(\|\dot{\mathbf{Y}}_{\cdot n}\|_{2}>4\beta\Big).
 		\end{aligned}
 		\label{L21norm_gra}
 	\end{equation}
 	We can find the unique solution to problem (\ref{L2,1_norm}) by setting (\ref{L21norm_gra}) to zero as follows: 
 	\begin{equation}
 		\begin{aligned}
 			\dot{\widetilde{\mathbf{X}}}_{\cdot n}&=\dot{\mathbf{Y}}_{\cdot n}-4\beta\frac{\dot{\mathbf{Y}}_{\cdot n}}{\|\dot{\mathbf{Y}}_{\cdot n}\|_{2}} \: \Big(\|\dot{\mathbf{Y}}_{\cdot n}\|_{2}>4\beta\Big)\\
 			&=\frac{\dot{\mathbf{Y}}_{\cdot n}}{\|\dot{\mathbf{Y}}_{\cdot n}\|_{2}}\Big(\|\dot{\mathbf{Y}}_{\cdot n}\|_{2}-4\beta\Big) \: \Big(\|\dot{\mathbf{Y}}_{\cdot n}\|_{2}>4\beta\Big)\\
 			&=\frac{(\|\dot{\mathbf{Y}}_{\cdot n}\|_{2}-4\beta)_{+}}{\|\dot{\mathbf{Y}}_{\cdot n}\|_{2}}\dot{\mathbf{Y}}_{\cdot n},
 		\end{aligned}
 	\end{equation}
 	where $(y)_{+}=\text{max}\{y,0\}$. 
 \end{proof}

  \section{Proof of Theorem \ref{L2,1mp2}}
   \label{appendixB}
  \begin{proof}
  	In line with (\ref{l21_REWI}), the optimal solution $\dot{\mathbf{X}}_{\text{opt}}$ of the problem in (\ref{L2,1_norm_wei}) is given by
  	\begin{equation}
  		\dot{\mathbf{X}}_{\text{opt}}^{m}=\mathop{\text{min}}\limits_{\dot{\mathbf{X}}_{\text{opt}}^{m}}\frac{\omega_m}{\mu}\|\dot{\mathbf{X}}^{m}\|_{\ast}+\frac{1}{2}\|\dot{\mathbf{X}}^{m}-\dot{\mathbf{Y}}^{m}\|_{F}^{2},   \quad (m=1, \dots, M)
  		\label{L2,1_norm_wei_rew}
  	\end{equation}
  	where $\dot{\mathbf{X}}_{\text{opt}}=\sum_{m=1}^{M}\dot{\mathbf{X}}_{\text{opt}}^{m}$. 
  	Assume that $\dot{\mathbf{Y}}^{m}=\dot{\mathbf{U}}{\mathbf{\Sigma}}\dot{\mathbf{V}}^{H}$ is the QSVD of $\dot{\mathbf{Y}}^{m}$. We represent $\dot{\mathbf{U}}$ and $\dot{\mathbf{V}}$ as two partitioned matrices:
  	$\dot{\mathbf{U}}=[\dot{\mathbf{u}}_1, \dots, \dot{\mathbf{u}}_M]$, and $\dot{\mathbf{V}}=[\dot{\mathbf{v}}_1, \dots, \dot{\mathbf{v}}_M]$, where $\dot{\mathbf{u}}_m$ and $\dot{\mathbf{v}}_m \in \mathbb{H}^{M}$ ($m=1, \dots, M$). Based on Lemma \ref{QSVT}, we can give the closed-form solution of (\ref{L2,1_norm_wei_rew}) as follows:
  	\begin{equation}
  		\begin{aligned}
  			\dot{\mathbf{X}}_{\text{opt}}^{m}&=\dot{\mathbf{U}}\mathit{S}_{\frac{\omega_m}{\mu}}(\Sigma)\dot{\mathbf{V}}^{H}\\
  			&=(\|\dot{\mathbf{Y}}^{m}\|_{F}-\frac{\omega_m}{\mu})_{+}\dot{\mathbf{u}}_1\dot{\mathbf{v}}_1^{H}\\
  			&=\frac{(\|\dot{\mathbf{Y}}^{m}\|_{F}-\frac{\omega_m}{\mu})_{+}}{\|\dot{\mathbf{Y}}^{m}\|_{F}}\dot{\mathbf{Y}}^{m},
  		\end{aligned}
  		\label{wei_rew_SO}
  	\end{equation}
  	where $\|\dot{\mathbf{Y}}^{m}\|_{F}=\sigma_{m}$ is the only singular value of $\dot{\mathbf{Y}}^{m}$.
  	Based on the above discusses, the optimal solution $\dot{\mathbf{X}}_{\text{opt}}$ is given by
  	\begin{equation}
  		\dot{\mathbf{X}}_{\text{opt}}=\dot{\mathbf{Y}}\mathbf{A},
  	\end{equation}
  	where $\mathbf{A}=\text{diag}(a_1,\dots,a_M)$, and
  	\begin{equation}
  		a_m=\frac{({\sigma_{m}-\frac{\omega_{m}}{\mu}})_{+}}{\sigma_{m}},  \quad (m=1, \dots, M)
  	\end{equation}
  \end{proof}

  \section{Proof of Theorem \ref{L2,1cov}}
   \label{appendixC}
  	\begin{proof}
  	The IRQLNM-QQR algorithm produces a sequence of quaternion matrices $\{\dot{\mathbf{D}}^{\tau}\}$ that can converge to a diagonal matrix $\dot{\mathbf{D}}$ satisfying (\ref{DTAUCOV}). As a result, the $\dot{\hat{\mathbf{D}}}$ in (\ref{QLNM_QQR_wei_UPD}) also converge to a diagonal matrix $\dot{{\mathbf{T}}}$. Therefore, we can reformulate the problem in (\ref{QLNM_QQR_wei_UPD}) as follows:
  	\begin{equation}
  		\mathop{\text{min}}\limits_{\dot{\mathbf{D}}}\sum_{l=1}^{r}\partial g(\|\dot{\mathbf{D}}^{l}\|_{\ast})\|\dot{\mathbf{D}}^{l}\|_{\ast}+\frac{\mu^{\tau}}{2}\|\dot{\mathbf{D}}-\dot{{\mathbf{T}}}\|_{F}^{2},   
  		\label{IRQLNM_QQR_wei_UPD1}
  	\end{equation}
  	Because $\|\dot{\mathbf{D}}^{l}\|_{\ast}=\sigma_l(\dot{\mathbf{D}})$, we can rewrite the above problem as follows:
  	\begin{equation}
  		\mathop{\text{min}}\limits_{\dot{\mathbf{D}}}\sum_{l=1}^{r}\partial g(\|\dot{\mathbf{D}}^{l}\|_{\ast})\|\sigma_l(\dot{\mathbf{D}})+\frac{\mu^{\tau}}{2}\|\dot{\mathbf{D}}-\dot{{\mathbf{T}}}\|_{F}^{2},.  
  		\label{IRQLNM_QQR_wei_UPD2}
  	\end{equation}
  	Therefore, based on Lemma \ref{LRQA-W}, we can solve the problem (\ref{IRQLNM_QQR_wei_UPD2}).
  \end{proof}
  \bibliographystyle{elsarticle-num} 
  \bibliography{sample}

\begin{thebibliography}{10}
\expandafter\ifx\csname url\endcsname\relax
  \def\url#1{\texttt{#1}}\fi
\expandafter\ifx\csname urlprefix\endcsname\relax\def\urlprefix{URL }\fi
\expandafter\ifx\csname href\endcsname\relax
  \def\href#1#2{#2} \def\path#1{#1}\fi

\bibitem{liu2015truncated}
Q.~Liu, Z.~Lai, Z.~Zhou, F.~Kuang, Z.~Jin, A truncated nuclear norm
  regularization method based on weighted residual error for matrix completion,
  IEEE Trans. Image Process. 25~(1) (2015) 316--330.

\bibitem{gu2017weighted}
S.~Gu, Q.~Xie, D.~Meng, W.~Zuo, X.~Feng, L.~Zhang, Weighted nuclear norm
  minimization and its applications to low level vision, Int. J. Comput. Vis.
  121~(2) (2017) 183--208.

\bibitem{yang2020feature}
M.~Yang, Y.~Li, J.~Wang, Feature and nuclear norm minimization for matrix
  completion, IEEE Trans. Knowl. Data Eng. 34~(5) (2020) 2190--2199.

\bibitem{miao2021color}
J.~Miao, K.~I. Kou, Color image recovery using low-rank quaternion matrix
  completion algorithm, IEEE Trans. Image Process. 31 (2021) 190--201.

\bibitem{Candes2009Exact}
E.~J. Cand\`{e}s, B.~Recht, Exact matrix completion via convex optimization,
  Found. Comput. Math. 9 (2009) 717--772.

\bibitem{hu2012fast}
Y.~Hu, D.~Zhang, J.~Ye, X.~Li, X.~He, Fast and accurate matrix completion via
  truncated nuclear norm regularization, IEEE Trans. Pattern Anal. Mach.
  Intell. 35~(9) (2012) 2117--2130.

\bibitem{Gu2014CVPR}
S.~Gu, L.~Zhang, W.~Zuo, X.~Feng, Weighted nuclear norm minimization with
  application to image denoising, in: Proceedings of the IEEE conference on
  computer vision and pattern recognition, 2014, pp. 2862--2869.

\bibitem{xie2016weighted}
Y.~Xie, S.~Gu, Y.~Liu, W.~Zuo, W.~Zhang, L.~Zhang, Weighted schatten $ p $-norm
  minimization for image denoising and background subtraction, IEEE Trans.
  Knowl. Data Eng. 25~(10) (2016) 4842--4857.

\bibitem{kang2015logdet}
Z.~Kang, C.~Peng, J.~Cheng, Q.~Cheng, Logdet rank minimization with application
  to subspace clustering, Comput. Intell. Neurosci. 2015 (2015) 68--68.

\bibitem{wen2012solving}
Z.~Wen, W.~Yin, Y.~Zhang, Solving a low-rank factorization model for matrix
  completion by a nonlinear successive over-relaxation algorithm, Math.
  Program. Comput. 4~(4) (2012) 333--361.

\bibitem{shang2017bilinear}
F.~Shang, J.~Cheng, Y.~Liu, Z.-Q. Luo, Z.~Lin, Bilinear factor matrix norm
  minimization for robust pca: Algorithms and applications, IEEE Trans. Pattern
  Anal. Mach. Intell. 40~(9) (2017) 2066--2080.

\bibitem{han2022DMD}
J.~Han, K.~I. Kou, J.~Miao, Quaternion-based dynamic mode decomposition for
  background modeling in color videos, Comput. Vis. Image Underst. 224 (2022)
  103560.

\bibitem{yu2019quaternion}
Y.~Yu, Y.~Zhang, S.~Yuan, Quaternion-based weighted nuclear norm minimization
  for color image denoising, Neurocomputing 332 (2019) 283--297.

\bibitem{Hu2018Phase}
X.-X. Hu, K.~I. Kou, Phase-based edge detection algorithms, Math. Meth. Appl.
  Sci. 41~(11) (2018) 4148--4169.

\bibitem{zou2016quaternion}
C.~Zou, K.~I. Kou, Y.~Wang, Quaternion collaborative and sparse representation
  with application to color face recognition, IEEE Trans. Image Process. 25~(7)
  (2016) 3287--3302.

\bibitem{chen2019low}
Y.~Chen, X.~Xiao, Y.~Zhou, Low-rank quaternion approximation for color image
  processing, IEEE Trans. Image Process. 29 (2019) 1426--1439.

\bibitem{yang2022logarithmic}
L.~Yang, J.~Miao, K.~I. Kou, Quaternion-based color image completion via
  logarithmic approximation, Inf. Sci. 588 (2022) 82--105.

\bibitem{miao2020quaternion}
J.~Miao, K.~I. Kou, Quaternion-based bilinear factor matrix norm minimization
  for color image inpainting, IEEE Trans. Signal Process. 68 (2020) 5617--5631.

\bibitem{liu2018fast}
Q.~Liu, F.~Davoine, J.~Yang, Y.~Cui, Z.~Jin, F.~Han, A fast and accurate matrix
  completion method based on qr decomposition and $ l\_ $\{$2, 1$\}$ $-norm
  minimization, IEEE Trans. Neural Netw. Learn. Syst. 30~(3) (2018) 803--817.

\bibitem{han2022low}
J.~Han, L.~Yang, K.~I. Kou, J.~Miao, L.~Liu, Low rank quaternion matrix
  completion based on quaternion qr decomposition and sparse regularizer, arXiv
  preprint arXiv:2211.12793 (2022).

\bibitem{hou2013joint}
C.~Hou, F.~Nie, X.~Li, D.~Yi, Y.~Wu, Joint embedding learning and sparse
  regression: A framework for unsupervised feature selection, IEEE T. Cybern.
  44~(6) (2013) 793--804.

\bibitem{tang2014structure}
K.~Tang, R.~Liu, Z.~Su, J.~Zhang, Structure-constrained low-rank
  representation, IEEE Trans. Neural Netw. Learn. Syst. 25~(12) (2014)
  2167--2179.

\bibitem{xiao2015robust}
S.~Xiao, M.~Tan, D.~Xu, Z.~Y. Dong, Robust kernel low-rank representation, IEEE
  Trans. Neural Netw. Learn. Syst. 27~(11) (2015) 2268--2281.

\bibitem{nie2014optimal}
F.~Nie, J.~Yuan, H.~Huang, Optimal mean robust principal component analysis,
  in: International conference on machine learning, PMLR, 2014, pp. 1062--1070.

\bibitem{yang2010image}
J.~Yang, J.~Wright, T.~S. Huang, Y.~Ma, Image super-resolution via sparse
  representation, IEEE Trans. Image Process. 19~(11) (2010) 2861--2873.

\bibitem{dong2018low}
J.~Dong, Z.~Xue, J.~Guan, Z.-F. Han, W.~Wang, Low rank matrix completion using
  truncated nuclear norm and sparse regularizer, Signal Process.-Image Commun.
  68 (2018) 76--87.

\bibitem{yang2022quaternion}
L.~Yang, Y.~Liu, K.~I. Kou, Quaternion optimized model with sparse
  regularization for color image recovery, arXiv preprint arXiv:2204.08629
  (2022).

\bibitem{zhang2019nonlocal}
L.~Zhang, L.~Song, B.~Du, Y.~Zhang, Nonlocal low-rank tensor completion for
  visual data, IEEE T. Cybern. 51~(2) (2019) 673--685.

\bibitem{fan2017matrix}
J.~Fan, T.~W. Chow, Matrix completion by least-square, low-rank, and sparse
  self-representations, Pattern Recognit. 71 (2017) 290--305.

\bibitem{rodman2014topics}
L.~Rodman, Topics in quaternion linear algebra, Princeton University Press,
  2014.

\bibitem{hamilton1844ii}
W.~R. Hamilton, Ii. on quaternions; or on a new system of imaginaries in
  algebra, The London, Edinburgh, and Dublin Philosophical Magazine and Journal
  of Science 25~(163) (1844) 10--13.

\bibitem{schafer1954algebras}
R.~D. Schafer, On the algebras formed by the cayley-dickson process, American
  Journal of Mathematics 76~(2) (1954) 435--446.

\bibitem{ZHANG199721}
F.~Zhang, Quaternions and matrices of quaternions, Linear Alg. Appl. 251 (1997)
  21--57.

\bibitem{Wei2018QuaMatComs}
M.~Wei, Y.~Li, F.~Zhang, J.~Zhao, Quaternion Matrix Computations, Nova Science
  Publisher, 2018.

\bibitem{feng2008quaternion}
W.~Feng, B.~Hu, Quaternion discrete cosine transform and its application in
  color template matching, in: 2008 Congress on Image and Signal Processing,
  Vol.~2, IEEE, 2008, pp. 252--256.

\bibitem{bahri2008uncertainty}
M.~Bahri, E.~S. Hitzer, A.~Hayashi, R.~Ashino, An uncertainty principle for
  quaternion fourier transform, Comput. Math. Appl. 56~(9) (2008) 2398--2410.

\bibitem{nie2018matrix}
F.~Nie, Z.~Hu, X.~Li, Matrix completion based on non-convex low-rank
  approximation, IEEE Trans. Image Process. 28~(5) (2018) 2378--2388.

\bibitem{zhang2011color}
L.~Zhang, X.~Wu, A.~Buades, X.~Li, Color demosaicking by local directional
  interpolation and nonlocal adaptive thresholding, J. Electron. Imaging 20~(2)
  (2011) 023016--023016.

\bibitem{xu2015theory}
D.~Xu, D.~P. Mandic, The theory of quaternion matrix derivatives, IEEE Trans.
  Signal Process. 63~(6) (2015) 1543--1556.

\end{thebibliography}

\end{document}